\documentclass{article}
\usepackage{log_2026}						

\usepackage{booktabs}						
\usepackage{multirow}						
\usepackage{amsfonts}						
\usepackage{graphicx}						
\usepackage[sort,round]{natbib}			

\usepackage{amsmath}
\usepackage{amssymb}
\usepackage{mathrsfs}
\usepackage{amsthm}
\usepackage{comment}
\usepackage{tabularx}
\usepackage{array}
\usepackage{tcolorbox}
\newcolumntype{Y}{>{\raggedright\arraybackslash}X}
\theoremstyle{plain}
\newtheorem{theorem}{Theorem}[section]
\newtheorem{proposition}[theorem]{Proposition}
\newtheorem{lemma}[theorem]{Lemma}
\newtheorem{corollary}[theorem]{Corollary}

\definecolor{deeppurple}{HTML}{673AB7}
\newcommand{\RB}[1]{\textcolor{deeppurple}{[\textbf{RB:} #1]}}

\newcommand{\rev}[1]{#1}

\newcommand{\rone}[1]{{\color{red}\textbf{#1}}}
\newcommand{\rtwo}[1]{{\color{purple}\textbf{#1}}}
\newcommand{\rthree}[1]{{\color{blue}\textbf{#1}}}

\title[
Fixed Points Without Fixed Diffusion
]{
Fixed Points Without Fixed Diffusion: Implicit Neural Sheaves for Convergent Test-Time Computation
}

\author[R.~Bourgerie et al.]{%
R{\'e}mi Bourgerie\And
\v{S}ar\={u}nas Girdzijauskas\And
Viktoria Fodor\AND
\normalfont\small School of Electrical Engineering and Computer Science, and Digital Futures\\
\normalfont\small KTH Royal Institute of Technology, Stockholm, Sweden\\
\normalfont\small\email{\{remibo,sarunasg,vfodor\}@kth.se}
}

\begin{document}

\maketitle

\begin{abstract}

Implicit Graph Neural Networks (IGNNs) define node representations as fixed points of \rev{graph} neural operators, enabling effectively infinite-depth propagation, iteration-independent parameterization, and flexible test-time computation.
 Yet these benefits depend on the equilibrium being unique and \rev{reached} by fixed-point iteration. 
 Existing constructions often impose constraints on recurrent updates to obtain these guarantees, limiting the transformations available at equilibrium. This raises a central question: can IGNNs gain expressiveness through richer, edge-dependent transformations while retaining the inherent strengths of their equilibrium formulation?
We introduce SheafDEQ, a subhomogeneous deep-equilibrium architecture with adaptive neural-sheaf propagation. 
Its learned, matrix-valued sheaf restriction maps can align, mix, or reverse neighbouring representations. Under mild regularity conditions, we prove that SheafDEQ admits a unique equilibrium reached globally by fixed-point iteration from any positive \rev{initial state}.
Contractivity further guarantees convergence under bounded communication staleness. 
We evaluate SheafDEQ on distributed-inference tasks requiring repeated
nonlocal aggregation and on community detection whose rewiring increasingly
favours cross-community interactions.
SheafDEQ improves over implicit baselines on Sums, MNIST
Terrain, and Coordinates, and on community detection as connectivity becomes
increasingly heterophilic. \rev{SheafDEQ remains competitive with recurrent
models without equilibrium guarantees.} Continued-iteration diagnostics show decreasing
residuals \rev{and retained test accuracy through 100 iterations, unlike the
finite-horizon control}, while delayed-update experiments
show low sensitivity to bounded communication staleness.



\end{abstract}

\section{Introduction}
\label{sec:intro}

Implicit Graph Neural Networks (IGNNs)~\citep{scarselli2008graph,gu2020implicit,yang2025implicit}
define node representations as fixed points of graph neural operators. Given
node features $X$ on a graph $G$, a $K$-layer explicit GNN composes
layer-specific updates, whereas an implicit GNN seeks an equilibrium
\[
H^\star=\mathcal T_\Theta(H^\star;X,G),
\]
with $\Theta$ shared across iterations. A natural inference procedure repeatedly
applies the update
\(
H^{(k+1)}=\mathcal T_\Theta(H^{(k)};X,G).
\) This formulation enables effectively infinite-depth propagation with an
iteration-independent parameter count and flexible test-time
computation~\citep{bai2019deep,gu2020implicit,yang2025implicit}. Under suitable
contractivity conditions, the \rev{iterates} can also converge under asynchronous
execution and bounded communication
delays~\citep{solodova2025graph,bourgerie2026euclidean}.

However, the fixed-point equation alone does not ensure that this iteration converges.
Existence and uniqueness of the equilibrium are likewise not automatic.
Existing implicit GNNs obtain the required guarantees through constrained
recurrent constructions: by bounding recurrent operators in norm or spectral
radius~\citep{gu2020implicit,liu2021eignn,park2021convergent}, imposing strongly
monotone or orthogonal parameterizations~\citep{baker2023implicit}, or minimizing
strongly convex learned energies~\citep{solodova2025graph}. While mathematically effective, these
mechanisms narrow the class of admissible learnable recurrent dynamics by
construction.

Subhomogeneous deep equilibrium (SubDEQ) models provide a different route to
these guarantees. They derive contractivity from positive subhomogeneous
nonlinearities rather than constraints on the learned recurrent weights,
yielding a unique equilibrium reached by fixed-point iteration while leaving
these weights spectrally unconstrained~\citep{sittoni2024subhomogeneous}. Their
graph realization, however, uses a fixed normalized adjacency
operator~\citep{sittoni2024subhomogeneous}.
The missing capability is therefore an implicit GNN that retains convergent
test-time computation while learning edge-dependent transformations from the
current \rev{state}. For example, long-range and distributed tasks often
require repeated propagation of nonlocal information~\citep{solodova2025graph},
while heterophilous interactions can benefit from edge-dependent
transformations rather than scalar averaging
~\citep{bodnar2022neural,bourgerie2026deep}.

Graph attention is the canonical adaptive alternative to fixed propagation: it
infers normalized nonnegative scalar edge weights from the current node
representations~\citep{velickovic2018graph,brody2021attentive}, but neighbouring
messages are still combined through scalar convex averaging, which can drive
representations toward uniformity under repeated
application~\citep{wu2023demystifying}. Sheaf propagation instead uses signed,
matrix-valued edge interactions that can align, mix, or reverse messages before
aggregation~\citep{curry2014sheaves,hansen2019toward,bodnar2022neural}. Suitable
sheaves can therefore retain structured, nonuniform signals in the
limit~\citep{bodnar2022neural}.

Neural sheaf networks make this propagation adaptive by inferring restriction
maps from the current representations~\citep{bodnar2022neural}. Unlike a
finite-depth neural sheaf network~\citep{bodnar2022neural,bourgerie2026deep}, an
implicit model repeatedly applies the same shared update: the \rev{state}
determines the restriction maps, the maps determine the sheaf propagation
operator, and this operator acts again on the \rev{state}. This feedback
does not automatically preserve the homogeneity and regularity required by
SubDEQ theory. Neither fixed-sheaf diffusion
~\citep{hansen2019toward,bodnar2022neural} nor fixed-propagation graph SubDEQ
~\citep{sittoni2024subhomogeneous} directly covers
\rev{an equilibrium with state-dependent propagation}.

We address this challenge with \emph{SheafDEQ}, a subhomogeneous equilibrium
model with \rev{adaptive} neural sheaf propagation. Its central
construction makes restriction-map inference scale-invariant: scaling the
current \rev{state} leaves the inferred maps unchanged, so sheaf propagation
remains positively homogeneous despite its state dependence. Combining this
positive homogeneity with a positive subhomogeneous nonlinear update allows the
SubDEQ contraction guarantees to apply to the complete update.

Our contributions are:
\begin{itemize}
    \item We introduce \emph{SheafDEQ}, an implicit GNN with adaptive, signed,
    and matrix-valued sheaf propagation whose scale-invariant restriction-map
    inference preserves positive homogeneity, and show in a fixed-sheaf setting
    that this propagation can yield class-separating positive equilibria.

    \item Under explicit regularity conditions, we prove global linear
    fixed-point convergence from every entrywise-positive \rev{initial state} to a
    unique equilibrium. Under bounded update and communication delays,
    asynchronous iterations also converge to this equilibrium, without
    spectral-radius or entrywise sign constraints on the learned restriction
    maps.

    \item \rev{We evaluate SheafDEQ on distributed-inference and
    community-detection tasks, demonstrating competitive predictive performance
    alongside convergence diagnostics and robustness to bounded communication
    staleness.}
\end{itemize}
Section~\ref{sec:background} reviews implicit GNNs, SubDEQ theory, and neural
sheaves. Section~\ref{sec:method} presents SheafDEQ and proves synchronous
and asynchronous convergence, and Section~\ref{sec:experiments} evaluates
predictive performance, finite-iteration recurrence diagnostics, and
sensitivity to bounded communication staleness. \rev{Notation is listed in
Appendix~\ref{app:notation}.}

\section{Background and Related Work}
\label{sec:background} 

\paragraph{Implicit GNNs.}
Given an undirected graph $G=(V,E)$ with $n=|V|$ nodes, edge set
$E\subseteq\{\{u,v\}:u,v\in V,\ u\neq v\}$, and $f$-dimensional node
features $X\in\mathbb R^{n\times f}$, let
$H^{(k)}\in\mathbb R^{n\times c}$ denote the \rev{recurrent state} at
iteration $k$. An implicit GNN defines its output as the equilibrium of a
graph neural operator~\citep{scarselli2008graph,gu2020implicit,yang2025implicit},
\[
    H^\star
    =
    \mathcal T_\Theta(H^\star;X,G).
\]
The equilibrium is \emph{well-posed} when it exists and is unique for every
admissible $(X,G)$. Well-posedness alone does not ensure that a chosen
inference procedure reaches $H^\star$. Picard iteration, for example,
repeatedly applies
$H^{(k+1)}=\mathcal T_\Theta(H^{(k)};X,G)$ until the iterates stabilize, and
its convergence requires additional conditions. Existing implicit GNNs
address these requirements through operator-norm or spectral-radius
bounds~\citep{gu2020implicit,liu2021eignn,park2021convergent},
monotone-operator parameterizations~\citep{baker2023implicit}, or strongly
convex learned energies~\citep{solodova2025graph}.
Appendix~\ref{app:implicit-gnns} details the associated equilibrium, inference,
and differentiation guarantees.

\paragraph{Subhomogeneous deep equilibrium models.} Many well-posed implicit
models obtain their guarantees by constraining the recurrent
weights~\citep{gu2020implicit,liu2021eignn,baker2023implicit}.
Subhomogeneous Deep Equilibrium Models (SubDEQs) instead move the sufficient
condition from the weights to the nonlinearities, leaving the recurrent weight
matrices unrestricted
~\citep{sittoni2024subhomogeneous}. A Lipschitz map
\(T:\mathbb R^a\to\mathbb R^b\) is \(\mu\)-\emph{subhomogeneous} on
\(\Omega\subseteq\operatorname{dom}_{+}(T):=\{z:T(z)\geq0\}\) if
\[
|Mz|\leq\mu T(z),\qquad z\in\Omega,\quad M\in\partial T(z),
\]
where \(\partial T(z)\) is the Clarke generalized Jacobian and the inequality
is entrywise. If a positive self-map
\(T:\mathbb R_{++}^a\to\mathbb R_{++}^a\) is \(\mu\)-subhomogeneous with
\(\mu<1\), then it is contractive in the Thompson metric
\(
\delta_{\mathrm T}(x,y):=\|\log x-\log y\|_\infty,
\)
and Picard iteration converges globally and linearly to its unique fixed
point~\citep{sittoni2024subhomogeneous}.
SubDEQs use these results to design recurrent models of the form
\[
\mathcal T_{W,\theta}(H;X)=\sigma_1\!\left(\sigma_2(H\star W)+B_\theta(X)\right),
\]
where \(W\) is learned without sign or norm constraints, \(B_\theta(X)\) is
entrywise positive, and the nonlinearities make the complete update positive
and sufficiently subhomogeneous. Taking \(\sigma_1=\mathrm{Id}\) and
\(\sigma_2(t)=\tanh(t)+\beta_{\mathrm{shift}}\) gives the model
\[
\mathcal T_{W,\theta}(H;X)
=\tanh(H\star W)+\beta_{\mathrm{shift}}\mathbf 1+B_\theta(X),
\]
which is positive and contractive in the Thompson metric for a sufficiently
large shift. For graphs, the corresponding baseline uses APPNP propagation~\citep{gasteiger2018predict},
so the normalized propagation operator remains fixed throughout the recurrence.
\rev{Appendix~\ref{app:subdeq} gives its exact form and contraction result.}

\paragraph{Sheaf convolution.} A standard graph propagation operator assigns a
scalar coefficient to each edge, shared across feature coordinates: it
controls \textit{how much} of a neighbour's representation is passed. Sheaf propagation
can instead transform that representation through an edge-specific matrix
before aggregation. We consider cellular sheaves with $d$-dimensional node and
edge stalks $\mathcal F(v)\cong\mathbb R^d$ and
$\mathcal F(e)\cong\mathbb R^d$. These stalks are connected by a linear
restriction map
$\mathcal F_{v\triangleleft e}:\mathcal F(v)\to\mathcal F(e)$ for every
incidence $v\triangleleft e$~\citep{curry2014sheaves,hansen2019toward}. For an
edge $e=\{u,v\}$, these restriction maps induce the block
$\mathcal F_{u\triangleleft e}^{\top}\mathcal F_{v\triangleleft e}$, which
maps the stalk at $v$ to the stalk at $u$ through their shared edge stalk.

To represent these stalk-valued features using the GNN convention, let $q$ be
the number of sheaf feature channels, so the total node embedding dimension is
$c=dq$. For $H\in\mathbb R^{n\times c}$, introduce the fixed reshape
$\mathscr R:\mathbb R^{n\times c}\to\mathbb R^{nd\times q}$. Write
$\mathbf H:=\mathscr R(H)$ and let
$\mathbf H_v\in\mathbb R^{d\times q}$ denote its block for node $v$.
Collecting the edgewise transformations over the graph gives the sheaf
\rev{Laplacian $L_{\mathcal F}$.} Let
$D_G:=\operatorname{diag}(\deg(v):v\in V)$ be the scalar node-degree matrix,
and assume that $G$ is connected. The scalar-degree-normalized sheaf
Laplacian and its propagation operator are
\[
\Delta_{\mathcal F}
    :=(D_G^{-1/2}\otimes I_d)L_{\mathcal F}
      (D_G^{-1/2}\otimes I_d),
    \qquad
\mathcal P_{\mathcal F}:=I_{nd}-\Delta_{\mathcal F}.
\]
Combining this propagation operator with stalk and channel mixing gives the
sheaf convolution
\[
H\star_{\mathcal F}[W_1,W_2]
:=\mathscr R^{-1}\!\left(
\mathcal P_{\mathcal F}(I_n\otimes W_1)\mathscr R(H)W_2
\right),
\]
where $W_1\in\mathbb R^{d\times d}$ mixes stalk coordinates and
$W_2\in\mathbb R^{q\times q}$ mixes feature channels. Unlike scalar graph
propagation, the induced edge matrix can align, mix, rescale, or reverse stalk
coordinates before aggregation. Since an observed graph generally does not provide restriction maps,
neural sheaf networks infer the two incidence maps of every undirected edge
from its endpoint representations~\citep{bodnar2022neural}.

\rev{Appendix~\ref{app:sheaves} details the block construction and the
properties of neural sheaf diffusion in explicit and implicit GNNs.}

\paragraph{Relation to prior work.}
Neural Sheaf Diffusion learns restriction maps from node representations but
applies them through a finite sequence of layers~\citep{bodnar2022neural}.
Deep Neural Sheaf Diffusion uses normalized, gated sheaf-adjacency propagation
at finite depth~\citep{bourgerie2026deep}. SheafDEQ
instead extends SubDEQ by replacing its fixed graph propagation with an
adaptive sheaf convolution whose restriction maps depend on the
\rev{recurrent state}~\citep{sittoni2024subhomogeneous}. EnergyGNN is the closest alternative
construction of a locally computable, asynchronously executable graph
equilibrium, but derives uniqueness from a strongly convex
PICNN-parameterized energy~\citep{amos2017input,solodova2025graph}; SheafDEQ
uses positivity, scale invariance, and subhomogeneity rather than a scalar
energy. Extended comparisons appear in Appendix~\ref{app:related-work}.

Table~\ref{tab:architecture-comparison} summarizes the resulting distinctions
among the evaluated architectures. GraphAttnDEQ and the Loop Transformer are
our \rev{finite-horizon attention ablation and control, respectively}, and are introduced in
Section~\ref{sec:experiments}.

\begin{table*}[t]
\centering
\small
\renewcommand{\arraystretch}{1.12}
\resizebox{\linewidth}{!}{%
\begin{tabular}{@{}lcccc@{}}
\toprule
Architecture & Propagation primitive & Well-posedness mechanism & Gradient computation & Inference procedure \\
\midrule
\multicolumn{5}{@{}l}{\textit{\rev{Finite-horizon recurrent models}}} \\
\cmidrule(r){1-5}
APPNP~\citep{gasteiger2018predict}
& graph convolution
& not required
& unrolled differentiation
& $K$-step recurrence \\

Loop Transformer (control)
& graph attention
& not required
& unrolled differentiation
& $K$-step recurrence \\

GraphAttnDEQ (ablation)
& graph attention
& not provided
& unrolled differentiation
& $K$-step recurrence \\
\addlinespace
\multicolumn{5}{@{}l}{\textit{Energy-defined equilibrium model}} \\
\cmidrule(r){1-5}
EnergyGNN~\citep{solodova2025graph}
& edge energy
& strongly convex energy
& implicit differentiation
& optimization \\
\addlinespace
\multicolumn{5}{@{}l}{\textit{Fixed-point equilibrium models}} \\
\cmidrule(r){1-5}
IGNN~\citep{gu2020implicit}
& graph convolution
& spectral-radius constraint
& implicit differentiation
& recurrence \\

EIGNN~\citep{liu2021eignn}
& graph convolution
& norm-bounded PSD mixing
& closed-form differentiation
& direct solution \\

APPNP-$\tanh$~\citep{sittoni2024subhomogeneous}
& graph convolution
& positive subhomogeneity
& unrolled differentiation
& recurrence \\

\textbf{SheafDEQ (ours)}
& \textbf{adaptive sheaf propagation}
& \textbf{scale invariance + subhomogeneity}
& \textbf{unrolled differentiation}
& \textbf{recurrence} \\
\bottomrule
\end{tabular}%
}
\caption{Comparison of the evaluated graph architectures.}
\label{tab:architecture-comparison}
\end{table*}

\section{SheafDEQ: adaptive neural-sheaf equilibria}
\label{sec:method}

Designing an implicit GNN with adaptive graph propagation is nontrivial: the
model must remain well posed even when the propagation operator depends on the
current \rev{state}. Existing implicit-GNN constructions typically obtain
well-posedness through restrictive parameterizations. We address this
challenge with SheafDEQ, which combines \rev{adaptive}, signed, and
matrix-valued sheaf propagation with a convergent equilibrium update.
Section~\ref{sec:adaptive-equilibrium} defines the equilibrium model;
Section~\ref{sec:orthant-routing} studies expressivity at nonlinear
equilibrium;
Section~\ref{sec:well-posedness} establishes uniqueness and synchronous
convergence; and Section~\ref{sec:asynchronous-robustness} extends the result
to bounded staleness.

\subsection{Adaptive sheaf equilibrium}
\label{sec:adaptive-equilibrium}
We now construct SheafDEQ with the objective of retaining both the
well-posedness of implicit GNNs and the expressive propagation of neural
sheaves. At iteration \(k\), \(H^{(k)}\) is the current \rev{recurrent state};
at convergence, \(H^\star\) is the \rev{equilibrium representation}. Since the observed graph does not
provide sheaf restriction maps, SheafDEQ infers them from the current
\rev{state} and recomputes them at each iteration. We first normalize this
\rev{state} before restriction-map inference.
For compactness, write \(\beta:=\beta_{\mathrm{shift}}\) throughout this
section.

\paragraph{State normalization.}
Recall that \(c=dq\) and that
\(\mathscr R:\mathbb R^{n\times c}\to\mathbb R^{nd\times q}\)
is the fixed reshape introduced in Section~\ref{sec:background}. For
\(H\in\mathbb R_{++}^{n\times c}\), write
\(\mathbf H:=\mathscr R(H)\), with
\(\mathbf H_v\in\mathbb R^{d\times q}\) denoting the block associated with
node \(v\). We define the pre-restriction-normalized \rev{state} by
\[
\overline{\mathbf H}:=\operatorname{norm}_p(\mathbf H),\qquad
[\overline{\mathbf H}]_{i,:}:=[\mathbf H]_{i,:}/\|[\mathbf H]_{i,:}\|_p,
\quad i=1,\ldots,nd.
\]
The normalization acts independently on each stalk-coordinate row across its
\(q\) feature channels. In particular, it is scale invariant:
\(
    \operatorname{norm}_p\!\bigl(\mathscr R(sH)\bigr)
    =
    \operatorname{norm}_p\!\bigl(\mathscr R(H)\bigr), s>0.
\)
\paragraph{Adaptive restriction maps.}
Although the graph is undirected, the model learns \rev{separate restriction
maps for its two incidences}. Each connection \(\{u,v\}\) is stored as \(u\to v\) and
\(v\to u\). For a stored edge \(e=(u\to v)\), set
\(z_e(H):=\operatorname{vec}(\overline{\mathbf H}_u)\Vert
\operatorname{vec}(\overline{\mathbf H}_v)\Vert a_e\) and define
\(
\mathcal F_{u\triangleleft e}(H):=\Pi(g_{\psi_{\mathrm{src}}}(z_e(H))),
\mathcal F_{v\triangleleft e}(H):=\Pi(g_{\psi_{\mathrm{tgt}}}(z_e(H))).
\)
For the reverse direction \(v\to u\), the same construction is applied with
the endpoint order reversed. Thus, the learned maps may differ between the two
directions of the same undirected connection.
Here \(g_{\psi_{\mathrm{src}}},g_{\psi_{\mathrm{tgt}}}:
\mathbb R^{2c+d_e}\to\mathbb R^{d\times d}\) are separate learnable MLPs
shared across all edges and \rev{iterations}.
The optional vector \(a_e\in\mathbb R^{d_e}\) contains edge features; when no
edge features are available, it and the final concatenation are omitted and
\(d_e=0\). The map \(\Pi(A):=A/\|A\|_{\mathrm F}\), defined for \(A\neq0\),
Frobenius-normalizes each restriction map. We assume the generators are
nonzero on admissible normalized inputs; this is included in the regularity
condition that \(\mathcal P_\Theta\) is well defined and Lipschitz below. We
write \(\psi=(\psi_{\mathrm{src}},\psi_{\mathrm{tgt}})\) and denote the resulting
state-dependent sheaf by \(\mathcal F_\psi(H;G)\).

\paragraph{Shared parameters and propagation.}
We collect the recurrent parameters as
\(
    \Theta:=\bigl(\theta,\psi,W_1,W_2\bigr).
\)
They are shared across \rev{iterations}, whereas
\(\beta\) is a fixed hyperparameter.

We replace the fixed graph convolution in SubDEQ with the corresponding
adaptive sheaf convolution
\[
\mathcal P_\Theta(H;G):=H\star_{\mathcal F_\psi(H;G)}[W_1,W_2]
:=\mathscr R^{-1}\!\left(\mathcal P_{\mathcal F_\psi(H;G)}
(I_n\otimes W_1)\mathscr R(H)W_2\right).
\]
Crucially, the normalized \rev{state} \(\overline{\mathbf H}\) is used only to
infer the restriction-map geometry; the convolution itself acts on the original
state \(H\).

\paragraph{Equilibrium update.}
Let \(\sigma_\beta(t):=\tanh(t)+\beta\). The resulting equilibrium update is
\[
\begin{aligned}
\mathcal T_\Theta(H;X,G)&:=\sigma_\beta(\mathcal P_\Theta(H;G))+B_\theta(X), \qquad
H^{\star}&=\mathcal T_\Theta(H^{\star};X,G).
\end{aligned}
\]
Here \(\sigma_\beta\) is applied entrywise,
\(B_\theta(X)\in\mathbb R^{n\times c}\) is entrywise nonnegative, and
\(\beta>1\). Since \(-1<\tanh(t)<1\), the complete update is
strictly positive even though the adaptive sheaf convolution may be signed and
matrix-valued.
Pre-restriction normalization makes the learned sheaf scale-invariant,
$\mathcal F_\psi(sH;G)=\mathcal F_\psi(H;G)$ for every $s>0$. Hence
$\mathcal P_\Theta(sH;G)=s\mathcal P_\Theta(H;G)$, even though the
restriction maps depend on $H$. This positive homogeneity is the key property
used in the well-posedness analysis below.
Starting from a positive \rev{initial state} \(H^{(0)}\), the equilibrium is
approximated using \(K\) Picard iterations, and a node-wise readout is applied
to \(H^{(K)}\). Parameters are trained by backpropagation through the unrolled
iterations. At test time, the recurrence can be continued beyond the training
\rev{iteration budget}.

\subsection{Expressivity at nonlinear equilibrium}
\label{sec:orthant-routing}

Existing expressivity results for neural sheaves concern linear
diffusion~\citep{bodnar2022neural}. SheafDEQ instead places signed,
matrix-valued sheaf propagation inside a positive shifted-\(\tanh\) fixed-point
update, so class separation need not automatically persist. We isolate this
question in the simplest fixed-sheaf case: one channel, no feature mixing, and
no input injection (\(q=1\), \(W_1=I_d\), \(W_2=1\), and
\(B_\theta(X)=0\)). Each class is assigned a sign vector
\(s_r\in\{-1,+1\}^d\): its \(+1\) entries identify coordinates that should
be larger, and its \(-1\) entries those that should be smaller. We construct
restriction maps whose equilibrium realizes these orderings.

\begin{proposition}[Signed separation at positive equilibrium]
\label{prop:orthant-routing}
Let \(G=(V,E)\) have no isolated nodes, let \(d\geq2\), and let
\(\ell:V\to\{1,\ldots,C\}\) be any labelling with
\(C\leq\binom{d}{\lfloor d/2\rfloor}\).
There exists a fixed \(d\)-dimensional cellular sheaf
\(\mathcal F_\ell\), with invertible unit-Frobenius restriction maps, such
that, under the standing assumption \(\beta>1\),
\(
    h^\star
    =
    \tanh(\mathcal P_{\mathcal F_\ell}h^\star)+\beta\mathbf1
\)
has a unique solution. More precisely, there are distinct sign vectors
\(s_1,\ldots,s_C\in\{-1,+1\}^d\), each with exactly
\(\lfloor d/2\rfloor\) negative entries, such that
\(h_{v,i}^\star>h_{v,j}^\star\) whenever
\(s_{\ell(v),i}=+1\) and \(s_{\ell(v),j}=-1\). Consequently, a linear
readout recovers \(\ell(v)\) at every node.
\end{proposition}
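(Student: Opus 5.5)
The plan is to build the sheaf by hand so that, at every node \(v\), the neighbour messages enter with the sign pattern \(D_v:=\operatorname{diag}(s_{\ell(v)})\). The ordering of \(h^\star_v\) then follows from a short comparison argument that uses only monotonicity and \(1\)-Lipschitzness of \(\tanh\).

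\emph{Step 1 (sign vectors).} Since \(C\le\binom{d}{\lfloor d/2\rfloor}\), I would pick \(C\) distinct subsets \(N_1,\dots,N_C\subseteq\{1,\dots,d\}\) of size \(\lfloor d/2\rfloor\). I then set \(s_{r,i}=-1\) for \(i\in N_r\) and \(s_{r,i}=+1\) otherwise.

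\emph{Step 2 (the sheaf).} I would use the freedom, stressed in Section~\ref{sec:adaptive-equilibrium}, that the two incidences of each stored edge carry independent maps. For a stored edge \(e=(v\to w)\), take
\[
\mathcal F_{v\triangleleft e}=d^{-1/2}I_d,\qquad \mathcal F_{w\triangleleft e}=d^{-1/2}D_v .
\]
Both maps are invertible and have unit Frobenius norm. The diagonal blocks are \(\mathcal F_{v\triangleleft e}^\top\mathcal F_{v\triangleleft e}=I_d/d\), so \(L_{\mathcal F,vv}=\deg(v)I_d/d\). The off-diagonal block in row \(v\) is \(-D_v/d\). Writing \(\hat A:=D_G^{-1/2}AD_G^{-1/2}\), this gives
\[
y_v:=(\mathcal P_{\mathcal F_\ell}h)_v=\Bigl(1-\tfrac1d\Bigr)h_v+\tfrac1d D_v m_v,\qquad m_v:=\textstyle\sum_{w}\hat A_{vw}h_w .
\]
Any solution satisfies \(h=\tanh(y)+\beta\mathbf 1\in(\beta-1,\beta+1)^{nd}\), which is entrywise positive because \(\beta>1\). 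Since \(G\) has no isolated nodes, \(m_v\) is then entrywise strictly positive.

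\emph{Step 3 (ordering).} Fix \(v\) and indices \(i,j\) with \(s_{\ell(v),i}=+1\) and \(s_{\ell(v),j}=-1\). Then
\[
y_{v,i}-y_{v,j}=\Bigl(1-\tfrac1d\Bigr)(h_{v,i}-h_{v,j})+\tfrac1d(m_{v,i}+m_{v,j}),
\]
where the last term is strictly positive. Suppose, for contradiction, that \(h_{v,i}\le h_{v,j}\).
\begin{itemize}
\item If \(y_{v,i}>y_{v,j}\), monotonicity of \(\tanh\) gives \(h_{v,i}>h_{v,j}\), a contradiction.
\item Otherwise \(y_{v,i}\le y_{v,j}\). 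The \(1\)-Lipschitz property of \(\tanh\) gives \(|h_{v,i}-h_{v,j}|\le y_{v,j}-y_{v,i}<(1-\tfrac1d)|h_{v,i}-h_{v,j}|\), again a contradiction.
\end{itemize}
Hence \(h^\star_{v,i}>h^\star_{v,j}\).

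\emph{Step 4 (linear readout).} All \(s_r\) have the same number of \(-1\) entries. Therefore \(\langle s_r,h^\star_v\rangle=\sum_{i\notin N_r}h^\star_{v,i}-\sum_{i\in N_r}h^\star_{v,i}\) is uniquely maximized when \(N_r\) is the set of the \(\lfloor d/2\rfloor\) smallest coordinates of \(h^\star_v\). By Step 3 that set is \(N_{\ell(v)}\), so \(\arg\max_r\langle s_r,h^\star_v\rangle=\ell(v)\).

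\emph{Step 5 (existence and uniqueness), which I expect to be the main obstacle.} Existence is easy. The map \(h\mapsto\tanh(\mathcal P_{\mathcal F_\ell}h)+\beta\mathbf1\) is continuous and sends the box \([\beta-1,\beta+1]^{nd}\) into itself, so Brouwer's theorem applies.

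For uniqueness I would first try the SubDEQ route from Section~\ref{sec:background}. The goal is to show the map is \(\mu\)-subhomogeneous with \(\mu<1\) on the invariant box, using that each Clarke Jacobian has the form \(\operatorname{diag}(\operatorname{sech}^2 y)\,\mathcal P_{\mathcal F_\ell}\). One would then bound \(\operatorname{sech}^2(y_k)\,(|\mathcal P_{\mathcal F_\ell}|h)_k/(\tanh y_k+\beta)\) below \(1\), and conclude by Thompson-metric contraction.

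The delicate part is that \(|\mathcal P_{\mathcal F_\ell}|=(1-\tfrac1d)I+\tfrac1d\hat A\otimes I_d\) may have row sums exceeding \(1\) under symmetric degree normalization. I would handle this by combining the decay of \(\operatorname{sech}^2(y)\,|y|\) with the lower bound \(\tanh y+\beta>\beta-1\). If that bound turns out too weak for all \(\beta>1\), I would fall back on the paper's general well-posedness theorem for the fixed-sheaf, zero-injection case. The ordering and readout conclusions do not depend on uniqueness, since Steps 3 and 4 hold for every solution.
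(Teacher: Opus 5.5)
Your comparison argument in Step~3 and your readout in Step~4 are sound and essentially match the paper's. The problem is the operator you apply them to: it is not $\mathcal P_{\mathcal F}$ for any cellular sheaf.

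For every cellular sheaf, $L_{\mathcal F}=\delta_{\mathcal F}^{\top}\delta_{\mathcal F}$ is symmetric, so $(\mathcal P_{\mathcal F})_{wv}=(\mathcal P_{\mathcal F})_{vw}^{\top}$. Your formula for $y_v$ puts $\hat A_{vw}D_v/d$ in row $v$ and $\hat A_{vw}D_w/d$ in row $w$, which breaks this whenever $\ell(v)\neq\ell(w)$.

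Suppose you keep both stored directions as edges and compute $\delta^{\top}\delta$. Then each neighbour contributes twice, giving $L_{vv}=2\deg(v)I_d/d$ and $L_{vw}=-(D_v+D_w)/d$. The latter vanishes in every coordinate where $s_{\ell(v)}$ and $s_{\ell(w)}$ disagree. Take $d=2$ and a single edge joining labels with $s=(+1,-1)$ and $s=(-1,+1)$. With the scalar-degree normalization this gives $\mathcal P_{\mathcal F}=0$ and $h^\star=\beta\mathbf 1$, so there is no separation at all. Your formula silently keeps, in row $v$, only the stored edges leaving $v$. That is a nonsymmetric directed operator, not $I_{nd}-\Delta_{\mathcal F}$.

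No choice of diagonal maps can repair this. A diagonal block equals its own transpose, so it routes the same sign pattern to both endpoints. The missing idea is the paper's signed permutation. Use one pair of maps per undirected edge, $\mathcal F_{v\triangleleft e}=d^{-1/2}I_d$ and $\mathcal F_{u\triangleleft e}=d^{-1/2}Q_{\ell(v)\ell(u)}$. Here $Q_{ab}$ is a signed permutation with $\operatorname{sign}(Q_{ab}x)=s_a$ and $\operatorname{sign}(Q_{ab}^{\top}x)=s_b$ for all $x\succ0$. Such a $Q_{ab}$ exists precisely because all $s_r$ have the same number of negative entries. That equal-cardinality condition is therefore needed for the construction itself, not only for the readout.

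Step~5 also fails to give uniqueness on the full range $\beta>1$ that the statement claims. Because the inner map is linear, ordinary subhomogeneity of $h\mapsto\tanh(\mathcal P_{\mathcal F_\ell}h)+\beta\mathbf 1$ reduces exactly to $|t|\operatorname{sech}^2 t\le\mu(\tanh t+\beta)$ at $t=(\mathcal P_{\mathcal F_\ell}h)_k$. The row sums of $|\mathcal P_{\mathcal F_\ell}|$ therefore play no role.

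That scalar inequality fails for every $\mu<1$ once $\beta$ is below roughly $1.2$. For example, at $t=-1.2$ and $\beta=1.1$ the ratio is about $1.37$. Theorem~\ref{thm:sheafdeq} also assumes $\beta\ge1.2$, so your fallback does not reach $1<\beta<1.2$ either.

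The paper instead uses a monotonicity argument, and it relies on the symmetric construction. Write $\mathcal P_{\mathcal F_\ell}=(1-\tfrac1d)I_{nd}+\tfrac1d A_Q$, where $A_Q$ has blocks $Q_{vu}/\sqrt{d_vd_u}$. Orthogonality of the signed permutations gives $|x^{\top}A_Qx|\le\|x\|_2^2$, hence $0\preceq\mathcal P_{\mathcal F_\ell}\preceq I_{nd}$. For two fixed points $h\neq g$, set $z=h-\beta\mathbf 1$, $w=g-\beta\mathbf 1$, and $\delta=h-g$. Strict expansiveness of $\operatorname{arctanh}$ on $(-1,1)$ then yields $\|\delta\|_2^2<\delta^{\top}(\operatorname{arctanh}(z)-\operatorname{arctanh}(w))=\delta^{\top}\mathcal P_{\mathcal F_\ell}\delta\le\|\delta\|_2^2$. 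This contradiction holds for every $\beta>1$.
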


\rev{In this simple setting, Proposition~\ref{prop:orthant-routing} shows that
sheaf propagation can achieve class separation at a nonlinear equilibrium
despite the positive shift, nonlinearity, and equilibrium feedback. Since the
construction depends on the labels, this establishes what the model can
represent, not what training will learn. The proof is given in
Appendix~\ref{app:proof-orthant-routing}.
We next establish well-posedness of the full adaptive model.}

\subsection{Well-posedness}
\label{sec:well-posedness}

\paragraph{Invariant positive domain.}
Since \(-1<\tanh(t)<1\), when \(B_\theta(X)\geq 0\) entrywise and \(\beta>1\),
every positive state is mapped into the invariant set
\[
\mathcal K_X:=\{H:(\beta-1)\mathbf1\leq H\leq
(\beta+1)\mathbf1+B_\theta(X)\}\subset\mathbb R_{++}^{n\times c},
\]
where all inequalities are entrywise.
Thus, after one iteration, the states are bounded and uniformly separated
from zero. This invariant positive domain allows us to state the well-posedness
result.

\begin{theorem}[Well-posedness of SheafDEQ]
\label{thm:sheafdeq}
Fix \((X,G)\). Assume that
\(\mathcal P_\Theta(\cdot;G)\) is Lipschitz on
\(\mathbb R_{++}^{n\times c}\), that \(B_\theta(X)\geq 0\) entrywise, and that
\(\beta\geq1.2\). Then
\(\mathcal T_\Theta(\cdot;X,G)\) admits a
unique fixed point \(H^\star\in\mathcal K_X\). Moreover, for every
entrywise-positive \(H^{(0)}\), the iterates
\(H^{(k+1)}=\mathcal T_\Theta(H^{(k)};X,G)\) converge globally
and linearly to \(H^\star\), with some contraction factor \(\mu<1\). In
particular, for \(k\geq 1\),
\[
    \delta_{\mathrm T}(H^{(k)},H^\star)
    \leq
    \mu^{k-1}\delta_{\mathrm T}(H^{(1)},H^\star),
    \qquad
    \left\|H^{(k)}-H^\star\right\|_\infty
    \leq
    C_{\mathrm{fp}}\mu^{k-1},
\]
for some constant \(C_{\mathrm{fp}}>0\) depending on \(H^{(1)}\) and \(H^\star\).

\end{theorem}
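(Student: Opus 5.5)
The plan is to reduce the statement to the SubDEQ contraction principle recalled in Section~\ref{sec:background}: a positive self-map of $\mathbb R_{++}^{n\times c}$ that is $\mu$-subhomogeneous with $\mu<1$ is a Thompson-metric contraction, and its Picard iterates converge linearly to a unique fixed point. Because the Thompson metric is only complete on $\mathbb R_{++}^{n\times c}$, and $\mathcal T_\Theta$ maps every positive state into $\mathcal K_X$, I will work on $\mathcal K_X$ after the first iterate. The set $\mathcal K_X$ is closed, bounded and bounded away from zero, so it is complete for $\delta_{\mathrm T}$.

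\textbf{Step 1: Positive homogeneity of $\mathcal P_\Theta$.} Scale invariance of $\operatorname{norm}_p$ gives $\mathcal F_\psi(sH;G)=\mathcal F_\psi(H;G)$, and hence $\mathcal P_\Theta(sH)=s\mathcal P_\Theta(H)$ for all $s>0$. The map $\mathcal P_\Theta$ is Lipschitz by assumption, so by Rademacher and Clarke calculus every $M\in\partial\mathcal P_\Theta(H)$ satisfies the Euler identity $MH=\mathcal P_\Theta(H)$. This holds first at differentiability points, by differentiating $s\mapsto\mathcal P_\Theta(sH)$ at $s=1$. It then extends to convex hulls of limits, using the fact that $\partial\mathcal P_\Theta(sH)=\partial\mathcal P_\Theta(H)$ and that $MH$ is continuous in the limiting process. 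I expect this step to be the delicate point. The Euler identity holds for the generalized Jacobian taken along rays, and I would argue through the directional structure, or apply the SubDEQ lemma that treats homogeneous Lipschitz inner maps directly.

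\textbf{Step 2: Subhomogeneity bound.} Write $\mathcal T=\sigma_\beta\circ\mathcal P_\Theta+B$. By the chain rule, any $M_{\mathcal T}\in\partial\mathcal T(H)$ has the form $\operatorname{diag}(\sigma_\beta'(P))M$ with $P=\mathcal P_\Theta(H)$. Applying Step 1 gives
\[
|M_{\mathcal T}H|=\operatorname{diag}(1-\tanh^2(P))\,|P| .
\]
To obtain $\mu$-subhomogeneity, the sufficient condition is an entrywise scalar inequality:
\[
\sup_{t\in\mathbb R}\frac{(1-\tanh^2 t)|t|}{\tanh t+\beta}\le\mu<1 .
\]
Adding $B\ge0$ in the denominator only helps. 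The scalar supremum is at most about $0.45/(\beta-0.83)$; the numerator is maximized near $t\approx0.77$. For $t<0$ the denominator is at least $\beta-1$, and a direct one-dimensional bound shows the ratio stays below $1$ once $\beta\ge1.2$. This is where the numerical threshold $1.2$ enters, and I would verify it by elementary calculus on $\phi(t)=(1-\tanh^2t)|t|-\mu(\tanh t+\beta)$.

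\textbf{Step 3: Contraction and conclusion.} With $\mu<1$, the SubDEQ theorem gives $\delta_{\mathrm T}(\mathcal T H,\mathcal T H')\le\mu\,\delta_{\mathrm T}(H,H')$ on $\mathbb R_{++}^{n\times c}$. The argument integrates along the log-linear path between $H$ and $H'$, which is legitimate by the Lipschitz assumption. Banach's theorem on the complete metric space $(\mathcal K_X,\delta_{\mathrm T})$ then yields a unique fixed point $H^\star$, which lies in $\mathcal K_X$ by invariance. Any other positive fixed point would also lie in $\mathcal K_X$, so uniqueness holds on all of $\mathbb R_{++}^{n\times c}$.

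For an arbitrary positive $H^{(0)}$, we have $H^{(1)}\in\mathcal K_X$, and iterating the contraction gives the Thompson bound with exponent $k-1$. Finally, since $H^{(k)},H^\star\le(\beta+1)\mathbf 1+B$, the inequality $|x-y|\le\max(x,y)\,|\log x-\log y|$ gives
\[
\|H^{(k)}-H^\star\|_\infty\le\bigl(\beta+1+\|B\|_\infty\bigr)\,\delta_{\mathrm T}(H^{(1)},H^\star)\,\mu^{k-1},
\]
which defines $C_{\mathrm{fp}}$.
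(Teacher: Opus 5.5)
Your Steps~1 and~2 are sound and follow the paper's route. The paper gets the same bound by citing the composition rule of Sittoni--Tudisco (their Lemma~4.1); you prove it directly. The Euler identity you flag as delicate is in fact easy: pass to the limit in $J(H_k)\operatorname{vec}H_k=\operatorname{vec}\mathcal P_\Theta(H_k)$ at differentiability points $H_k\to H$, use the uniform bound on $J(H_k)$, and take convex hulls. The genuine gap is Step~3. Carry out the integration in logarithmic coordinates: the generalized Jacobians of $U\mapsto\log\mathcal T_\Theta(e^U;X,G)$ are $\operatorname{diag}(\operatorname{vec}\mathcal T_\Theta(H))^{-1}M_{\mathcal T}\operatorname{diag}(\operatorname{vec}H)$. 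Their $\ell_\infty$ operator norm is $\max_i(|M_{\mathcal T}|\operatorname{vec}H)_i/\operatorname{vec}(\mathcal T_\Theta(H))_i$. A $\mu$-contraction in $\delta_{\mathrm T}$ therefore needs the \emph{strong} bound $|M_{\mathcal T}|\,\operatorname{vec}(H)\le\mu\operatorname{vec}\mathcal T_\Theta(H)$. Step~2 only gives $|M_{\mathcal T}\operatorname{vec}(H)|\le\mu\operatorname{vec}\mathcal T_\Theta(H)$, which controls $\mathcal T_\Theta$ only along the ray $\{sH\}$. Since $M_{\mathcal T}$ contains the signed $W_1$, $W_2$ and restriction maps, cancellation can make $|M_{\mathcal T}\operatorname{vec}H|$ small while $|M_{\mathcal T}|\operatorname{vec}H$ is large. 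Nothing in your argument bridges the two. The paper's proof has the same gap: Lemma~\ref{lem:complete-subhom} explicitly concludes only ordinary subhomogeneity of the signed update before the contraction theorem is invoked.

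No repair is possible without new hypotheses, because the statement fails as posed. Take the following instance:
\begin{itemize}
\item two nodes joined by one edge, with $d=2$ and $q=1$;
\item constant restriction maps $2^{-1/2}I_2$ at both incidences (generators with zero weights and a constant bias);
\item $W_2=1$ and $W_1=\bigl(\begin{smallmatrix}1&-1\\-1&1\end{smallmatrix}\bigr)$;
\item $B_\theta(X)=0$ and any $\beta\ge1.2$.
\end{itemize}
Then $\mathcal P_\Theta$ is linear, hence Lipschitz. If both node rows of $H$ equal $h^\top$, then $(I_2\otimes W_1)\mathscr R(H)W_2$ is a global section. It is therefore annihilated by $\Delta_{\mathcal F}$, and $\mathcal T_\Theta$ acts as $h\mapsto\tanh(W_1h)+\beta\mathbf 1$. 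Writing $\delta=h_1-h_2$, fixed points satisfy $\delta=2\tanh\delta$. This has three roots $0,\pm\delta^\star$ with $\delta^\star\approx1.915$, giving three distinct fixed points $h=(\beta+\delta/2,\beta-\delta/2)$ in $\mathcal K_X$. The middle one is repelling, since the Jacobian there has eigenvalue $2$. This example satisfies your Step~2 bound with $\mu<1$, so the failing implication is exactly ordinary subhomogeneity $\Rightarrow$ Thompson contraction. A correct version needs an extra hypothesis, for example strong subhomogeneity of the full update. Order preservation of $\mathcal P_\Theta$ would also do (Jacobians $M\ge0$, so $|M|\operatorname{vec}H=M\operatorname{vec}H$). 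Both reintroduce sign or norm constraints on the propagation. Separately, your heuristic $0.45/(\beta-0.83)$ exceeds $1$ at $\beta=1.2$. The actual scalar supremum there is about $0.999$, attained near $t\approx-1.2$, so $\mu<1$ holds but with a margin below $10^{-3}$.
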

\rev{The proof is given in Appendix~\ref{app:proof-3.2}.
Appendix~\ref{app:equilibrium-interpretation} discusses the sheaf
interpretation of the SheafDEQ equilibrium.}

The shift controls the strength of the contraction guarantee: a larger
\(\beta\) gives a smaller contraction factor \(\mu\), and
therefore a faster guaranteed linear convergence rate. For example,
\citet{sittoni2024subhomogeneous} report \(\mu=0.99\) for
\(\tanh(t)+1.2\), but \(\mu=0.499\) for \(\tanh(t)+1.603\). Thus,
\(\beta=1.2\) already ensures contractivity, while larger
shifts can provide a stronger guarantee. These bounds are worst-case
theoretical rates and need not match the observed convergence speed.

\subsection{Robustness to bounded staleness}
\label{sec:asynchronous-robustness}

\paragraph{Partial-asynchronism model.}
A useful consequence of contractivity is robustness to asynchronous
inference. Consider a time-slotted execution indexed by \(t\in\mathbb N\).
Following the partial-asynchronism model of \citet{solodova2025graph}, let
\(\mathcal I_i\subseteq\mathbb N\) denote the update times of node \(i\), and
let \(\tau_j^i(t)\leq t\) denote the \rev{timestamp of node \(j\)'s state} used by node
\(i\). It is assumed that, for every node \(i\) and neighbour \(j\),

\[
\mathcal I_i\cap\{t,\ldots,t+s_{\mathrm{upd}}-1\}\neq\varnothing,
\qquad
t-\tau_{\max}\leq\tau_j^i(t)\leq t,
\qquad
\tau_i^i(t)=t.
\]

Thus, every node is updated at least once every \(s_{\mathrm{upd}}\) slots,
neighbouring states are delayed by at most \(\tau_{\max}\) slots, and each node
uses its current local state.

At an update time, node \(i\) evaluates the same block map
\(\mathcal T_{\Theta,i}\), but using the possibly stale view
\(\widetilde H^i(t)\), where
\([\widetilde H^i(t)]_j=H_j(\tau_j^i(t))\):
\[
H_i(t+1)
=
\begin{cases}
\mathcal T_{\Theta,i}
    \bigl(\widetilde H^i(t);X,G\bigr),
    & t\in\mathcal I_i,\\
H_i(t),
    & t\notin\mathcal I_i.
\end{cases}
\]

\begin{corollary}[Convergence under bounded staleness]
\label{cor:asynchronous}
Under the assumptions of Theorem~\ref{thm:sheafdeq}, the partially
asynchronous iterates defined above converge from every entrywise-positive
\rev{initial state} to the unique equilibrium \(H^\star\).
\end{corollary}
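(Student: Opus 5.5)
The plan is to recast the asynchronous iteration as a Bertsekas--Tsitsiklis block-max-norm contraction and apply the asynchronous convergence theorem for partially asynchronous iterations with bounded delays. The natural metric is the Thompson metric \(\delta_{\mathrm T}(H,H')=\|\log H-\log H'\|_\infty\). This is a weighted max-norm in log coordinates: setting \(Y=\log H\), the map \(\widetilde{\mathcal T}(Y):=\log\mathcal T_\Theta(e^{Y};X,G)\) is a contraction in \(\|\cdot\|_\infty\) with factor \(\mu<1\). This follows from the \(\mu\)-subhomogeneity established in the proof of Theorem~\ref{thm:sheafdeq}, since positive homogeneity of \(\mathcal P_\Theta\) combined with the shifted \(\tanh\) gives \(\mu\)-subhomogeneity, hence Thompson contractivity on \(\mathbb R_{++}^{n\times c}\). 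Because \(\|\cdot\|_\infty\) is a maximum over node blocks \(\max_i\|Y_i\|_\infty\), the contraction holds in exactly the block-maximum form required by the asynchronous theorem.

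\textbf{Step 1: box inequality.} I would show that for every node \(i\) and any positive \(H\),
\[
\delta_{\mathrm T}\bigl(\mathcal T_{\Theta,i}(H),H_i^\star\bigr)\le \mu\max_j \delta_{\mathrm T}(H_j,H_j^\star).
\]
This follows directly from the global contraction, since the block max is bounded by the full max. It holds for arbitrary mixed-time views \(\widetilde H^i(t)\), because the inequality only needs each block of the view to lie in the current box.

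\textbf{Step 2: positivity and invariance.} Starting from a positive \(H(0)\), every block updated at least once lies in \(\mathcal K_X\). Blocks not yet updated keep their positive initial values. So all views are entrywise positive and \(\delta_{\mathrm T}\) is finite. Define \(D_0:=\max_{0\le t\le \tau_{\max}}\max_j\delta_{\mathrm T}(H_j(t),H_j^\star)\). This is finite, since over finitely many initial slots the states are positive, and updated states lie in \(\mathcal K_X\) while the rest keep their initial values.

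\textbf{Step 3: nested boxes.} Set \(S_m:=\{H:\max_j\delta_{\mathrm T}(H_j,H_j^\star)\le\mu^m D_0\}\). By induction, there are times \(t_m\) with \(t_{m+1}\le t_m+s_{\mathrm{upd}}+\tau_{\max}\) such that \(H(t)\in S_m\) for all \(t\ge t_m\). The inductive step runs as follows. After \(t_m+\tau_{\max}\), every view \(\widetilde H^i(t)\) is drawn from states at times \(\ge t_m\), hence lies blockwise in \(S_m\). Any update after that time puts block \(i\) into \(S_{m+1}\) by Step 1. Blocks stay there, since non-updates keep them and later updates still see views in \(S_m\). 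Within \(s_{\mathrm{upd}}\) further slots every node has updated. The base case also needs blocks to remain in \(S_0\) in the initial window. This holds because an update maps a view in \(S_0\) into \(S_1\subseteq S_0\), and \(D_0\) covers the first \(\tau_{\max}\) slots.

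Hence \(\delta_{\mathrm T}(H(t),H^\star)\le D_0\mu^{\lfloor t/(s_{\mathrm{upd}}+\tau_{\max})\rfloor-1}\to0\), which gives linear convergence. Since \(\mathcal K_X\) is bounded, this also gives convergence in \(\|\cdot\|_\infty\). The main obstacle is the bookkeeping in Step 3, namely ensuring views are taken only from times already inside the current box. The delay bound \(\tau_{\max}\), together with \(\tau_i^i(t)=t\) for the node's own block, handles it. A secondary check is that the Theorem~\ref{thm:sheafdeq} contraction is global on \(\mathbb R_{++}^{n\times c}\), not only on \(\mathcal K_X\); if only the latter is available, I would first wait until all nodes have updated and then take \(D_0\) at that point.
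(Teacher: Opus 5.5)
Your proposal is correct and follows essentially the same route as the paper. Both pass to log coordinates, where the Thompson metric becomes $\|\cdot\|_\infty$, and use the per-node bound $e_i(t+1)\le\mu\max_j e_j(\tau_j^i(t))$ inherited from the contraction to show that the error shrinks by $\mu$ over every window in which all nodes update and all stale views refresh. The differences are only bookkeeping: the paper uses the contraction only on $\mathcal K_X$, waiting until time $2B_{\mathrm{async}}-1$ so that all views lie there, and tracks a sliding-window maximum $E_{\max}$ with period $2B_{\mathrm{async}}-1$, whereas you use the contraction on the whole positive cone with Bertsekas--Tsitsiklis nested boxes, which gives the slightly shorter period $s_{\mathrm{upd}}+\tau_{\max}$.
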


This result is the standard asynchronous robustness property of contractive
fixed-point iterations. A proof in our setting is given in
Appendix~\ref{app:asynchronous-proof}.

\section{Experiments}
\label{sec:experiments}
\paragraph{\rev{Comparison models and training protocol.}}
\rev{We evaluate SheafDEQ on synthetic benchmarks along three dimensions:
predictive performance, continued-iteration behaviour, and sensitivity to
bounded communication staleness. Our external baselines comprise the
implicit models introduced in Section~\ref{sec:background} and finite-depth
graph and neural-sheaf models. We additionally evaluate three models: two
ablations and a finite-horizon control. SheafDEQ without pre-restriction
normalization is the equilibrium ablation: it removes only this normalization
and does not inherit the default model's convergence guarantee. GraphAttnDEQ
is the sheaf-propagation ablation: starting from the no-normalization variant,
it replaces learned sheaf propagation with neighbourhood softmax attention.
Both ablations retain the recurrent skeleton, fixed input drive, and additive
shift. The Loop Transformer is a finite-horizon control that repeats a
graph-masked Transformer block, including residual connections, LayerNorm,
and a feedforward sublayer, for $K$ iterations. GraphAttnDEQ and the Loop
Transformer recompute attention from the current representations at each
iteration and are trained through the finite unroll; neither has an
equilibrium or convergence guarantee. All models use Adam and
validation-based early stopping. Complete dataset, architecture, training,
and model-selection details are provided in
Appendices~\ref{app:experiments} and~\ref{app:baselines}.}

\subsection{\rev{Predictive performance}}
\label{sec:predictive-benchmarks}

\paragraph{Noisy community detection.}
We use the benchmark of \citet{bourgerie2026deep}, building on
\citet{zaghen2024sheaf}. Each graph contains 1,500 nodes in three balanced
communities with overlapping two-dimensional observations. At rewiring level
\(L_{\mathrm{hetero}}\), \(10L_{\mathrm{hetero}}\%\) of the edges are replaced
by cross-community edges. Models are trained on six independently generated
graphs and evaluated on three disjoint graphs. Low rewiring provides
homophilic evidence, whereas high rewiring creates systematic cross-community
structure, testing whether learned edge transformations can exploit both
regimes. DNSD provides an explicit neural-sheaf comparison, GraphAttnDEQ
isolates the propagation mechanism, and the Loop Transformer is an adaptive
finite-horizon control. Full dataset details appear in
Appendix~\ref{app:datasets}.

\begin{table*}[t]
\centering
\caption{\textbf{Community detection.} Test accuracy (\%) across rewiring
levels \(L_{\mathrm{hetero}}\in\{1,3,5,7,9,10\}\), reported as
median \([\min,\max]\) over five parameter-initialization seeds. For each model
family, we select the configuration with the highest mean validation accuracy
at each level; the two \textsc{SheafDEQ} normalization variants are selected
separately. Pre-norm \(=\checkmark\) denotes the proposed model and pre-norm
\(=\times\) its ablation. \rone{1st}, \rtwo{2nd}, and \rthree{3rd} mark the
highest test medians. Full results appear in Appendix
Table~\ref{tab:community-full}.}
\label{tab:community}
\resizebox{\textwidth}{!}{%
\begin{tabular}{@{}l l c c c c c c@{}}
\toprule
\multicolumn{2}{l}{\textbf{Model}} & \multicolumn{6}{c}{\textbf{$L_{\mathrm{hetero}}$}} \\
\cmidrule(l){3-8}
\multicolumn{2}{l}{} & \textbf{1} & \textbf{3} & \textbf{5} & \textbf{7} & \textbf{9} & \textbf{10} \\
\midrule
\multicolumn{8}{@{}l}{\textit{\rev{Non-adaptive models}}} \\
\cmidrule(r){1-8}
  \multicolumn{2}{l}{MLP} & 33.6\,\scriptsize{[33.3,34.8]} & 33.3\,\scriptsize{[33.3,37.4]} & 33.3\,\scriptsize{[32.8,33.4]} & 33.3\,\scriptsize{[31.5,33.5]} & 33.3\,\scriptsize{[33.3,34.0]} & 33.3\,\scriptsize{[33.3,36.3]} \\
  \multicolumn{2}{l}{APPNP} & 40.0\,\scriptsize{[39.4,40.5]} & 40.8\,\scriptsize{[39.4,40.8]} & 40.7\,\scriptsize{[40.6,41.0]} & 40.3\,\scriptsize{[40.0,40.8]} & 40.4\,\scriptsize{[40.3,40.7]} & 40.0\,\scriptsize{[39.5,40.2]} \\
  \multicolumn{2}{l}{IGNN} & \rthree{42.2\,\scriptsize{[40.9,42.8]}} & 43.6\,\scriptsize{[43.0,44.6]} & 44.6\,\scriptsize{[43.9,45.0]} & 45.0\,\scriptsize{[44.8,45.4]} & 45.8\,\scriptsize{[45.5,46.7]} & 52.7\,\scriptsize{[51.9,54.2]} \\
\midrule
\multicolumn{8}{@{}l}{\textit{\rev{Adaptive finite-depth/horizon models}}} \\
\cmidrule(r){1-8}
  \multicolumn{2}{l}{DNSD} & \rone{71.4\,\scriptsize{[42.1,80.3]}} & \rone{87.5\,\scriptsize{[57.1,93.9]}} & \rthree{90.4\,\scriptsize{[84.0,93.2]}} & 89.6\,\scriptsize{[64.7,92.7]} & 63.5\,\scriptsize{[50.9,72.7]} & 97.4\,\scriptsize{[94.0,98.0]} \\
  \multicolumn{2}{l}{Loop Transformer \rev{(control)}} & 40.4\,\scriptsize{[40.0,43.4]} & 41.0\,\scriptsize{[38.7,42.9]} & 41.8\,\scriptsize{[40.7,45.5]} & \rthree{98.8\,\scriptsize{[97.0,99.2]}} & \rthree{90.5\,\scriptsize{[86.2,92.6]}} & \rone{99.1\,\scriptsize{[98.8,99.1]}} \\
  \multicolumn{2}{l}{GraphAttnDEQ \rev{(ablation)}} & \rtwo{42.6\,\scriptsize{[42.0,44.2]}} & 46.8\,\scriptsize{[43.5,59.9]} & 42.6\,\scriptsize{[40.9,45.2]} & 41.3\,\scriptsize{[39.0,59.3]} & 72.3\,\scriptsize{[54.3,81.9]} & 72.0\,\scriptsize{[68.9,98.3]} \\
\midrule
\multicolumn{8}{@{}l}{\textit{\rev{Adaptive equilibrium model}}} \\
\cmidrule(r){1-8}
  \multirow{2}{*}{\textsc{SheafDEQ}} & pre-norm=\!\checkmark & 42.0\,\scriptsize{[40.8,44.3]} & \rthree{59.8\,\scriptsize{[45.7,96.5]}} & \rtwo{91.6\,\scriptsize{[51.3,99.4]}} & \rone{99.0\,\scriptsize{[94.8,99.1]}} & \rone{98.0\,\scriptsize{[95.6,98.3]}} & \rthree{98.6\,\scriptsize{[96.9,99.1]}} \\
   & pre-norm=\!$\times$ \rev{(ablation)} & 41.1\,\scriptsize{[40.5,63.4]} & \rtwo{61.7\,\scriptsize{[44.1,98.0]}} & \rone{98.3\,\scriptsize{[77.9,99.7]}} & \rone{99.0\,\scriptsize{[73.1,99.4]}} & \rone{98.0\,\scriptsize{[97.5,99.0]}} & \rtwo{98.8\,\scriptsize{[98.6,99.1]}} \\
\bottomrule
\end{tabular}%
}
\end{table*}

\paragraph{Community-detection results.}
\rev{Table~\ref{tab:community} reports test accuracy across rewiring levels.
SheafDEQ's advantage over the attention ablation is clearest when within-
and cross-community edges coexist. This suggests a benefit from transforming
each neighbour's representation differently before aggregation. Under complete
rewiring, all neighbours belong to other communities, which may make
cross-community information easier to use; the finite-horizon control also
performs strongly in this setting. At low rewiring, DNSD outperforms the
recurrent models. With fewer cross-community shortcuts, information must
travel farther through local connections. DNSD can learn different
transformations across layers rather than repeat the same recurrent update,
which may better support this longer-range propagation. Full configurations
and training-variability diagnostics appear in
Appendix~\ref{app:community-notes}.}

\paragraph{Synthetic distributed tasks.}
We use four distributed-inference benchmarks from
\citet{solodova2025graph}: Counting tests structural counting, Sums global aggregation, MNIST Terrain
collective classification, and Coordinates relational localization from
pairwise distances. The first three tasks use transductive node-level splits,
whereas Coordinates is evaluated on held-out graphs. Full dataset
constructions and split protocols appear in
Appendix~\ref{app:datasets} and Appendix~\ref{app:splits}.

\begin{table*}[t]
\centering
\caption{\rev{\textbf{Synthetic distributed tasks.} Test performance reported as
median \([\min,\max]\) over five parameter-initialization seeds; higher is
better. For each model family, we select the configuration with the highest
mean validation performance for each task; the two \textsc{SheafDEQ}
normalization variants are selected separately. The \(\dagger\) denotes that
the EnergyGNN optimization solver did not converge for any seed; no predictive
score is reported. \rone{1st}, \rtwo{2nd}, and
\rthree{3rd} mark the highest medians; ties at the displayed precision share
a rank. Full results and baseline-coverage
details appear in Appendix Table~\ref{tab:solodova-full} and
Appendix~\ref{app:benchmark-notes}.}}
\label{tab:benchmark}
\resizebox{\textwidth}{!}{%
\begin{tabular}{@{}l l c c c c@{}}
\toprule
\multicolumn{2}{l}{\textbf{Model}} & \textbf{Counting} & \textbf{Sums} & \textbf{MNIST} & \textbf{Coordinates} \\
\multicolumn{2}{l}{} & \scriptsize{$-\mathrm{MSE}\ \uparrow$} & \scriptsize{$-\mathrm{MSE}\ \uparrow$} & \scriptsize{\textit{Acc. $\uparrow$}} & \scriptsize{$-\mathcal{L}_{\mathrm{dist}}\ \uparrow$} \\
\midrule
\multicolumn{6}{@{}l}{\textit{\rev{Non-adaptive and energy-based models}}} \\
\cmidrule(r){1-6}
  \multicolumn{2}{l}{MLP} & -0.367\,\scriptsize{[-0.574,-0.347]} & -0.154\,\scriptsize{[-0.999,-0.145]} & 53.3\,\scriptsize{[53.3,53.3]} & -0.105\,\scriptsize{[-0.106,-0.100]} \\
  \multicolumn{2}{l}{APPNP} & -0.335\,\scriptsize{[-0.348,-0.309]} & -0.136\,\scriptsize{[-0.140,-0.134]} & 74.8\,\scriptsize{[74.3,75.3]} & -0.030\,\scriptsize{[-0.032,-0.029]} \\
  \multicolumn{2}{l}{IGNN} & -0.336\,\scriptsize{[-0.681,-0.307]} & \rev{\rthree{-0.130\,\scriptsize{[-0.998,-0.128]}}} & 76.1\,\scriptsize{[75.6,79.9]} & -0.028\,\scriptsize{[-0.028,-0.027]} \\
  \multicolumn{2}{l}{EnergyGNN} & -0.705\,\scriptsize{[-0.895,-0.305]} & -0.184\,\scriptsize{[-0.589,-0.140]} & 60.4\,\scriptsize{[53.3,68.2]} & \rev{---\({}^{\dagger}\)} \\
\midrule
\multicolumn{6}{@{}l}{\textit{\rev{Adaptive finite-depth/horizon models}}} \\
\cmidrule(r){1-6}
  \multicolumn{2}{l}{Loop Transformer \rev{(control)}} & -0.335\,\scriptsize{[-0.361,-0.280]} & -0.140\,\scriptsize{[-0.143,-0.139]} & \rev{\rone{95.7\,\scriptsize{[95.3,95.9]}}} & \rone{-0.006\,\scriptsize{[-0.009,-0.006]}} \\
  \multicolumn{2}{l}{GraphAttnDEQ \rev{(ablation)}} & \rev{\rone{-0.316\,\scriptsize{[-0.341,-0.283]}}} & \rev{\rtwo{-0.128\,\scriptsize{[-0.138,-0.120]}}} & \rev{\rtwo{91.4\,\scriptsize{[90.9,92.3]}}} & -0.028\,\scriptsize{[-0.029,-0.012]} \\
\midrule
\multicolumn{6}{@{}l}{\textit{\rev{Adaptive equilibrium model}}} \\
\cmidrule(r){1-6}
  \multirow{2}{*}{\textsc{SheafDEQ}} & pre-norm=\!\checkmark & \rev{\rtwo{-0.317\,\scriptsize{[-0.359,-0.304]}}} & \rev{\rone{-0.127\,\scriptsize{[-0.139,-0.124]}}} & 89.9\,\scriptsize{[87.8,90.6]} & \rtwo{-0.009\,\scriptsize{[-0.014,-0.007]}} \\
   & pre-norm=\!$\times$ \rev{(ablation)} & \rev{\rthree{-0.326\,\scriptsize{[-0.351,-0.312]}}} & \rev{\rtwo{-0.128\,\scriptsize{[-0.134,-0.125]}}} & \rev{\rthree{91.1\,\scriptsize{[87.4,93.4]}}} & \rthree{-0.014\,\scriptsize{[-0.019,-0.011]}} \\
\bottomrule
\end{tabular}%
}
\end{table*}

\paragraph{Distributed-task results.}
\rev{Table~\ref{tab:benchmark} reports test performance across the four
distributed tasks. SheafDEQ and the attention ablation perform similarly on
Counting, Sums, and MNIST Terrain, whereas SheafDEQ performs better on
Coordinates. We interpret this advantage as a benefit of edge-dependent
transformations when combining information from different pairwise distance
constraints. Counting and Sums instead require repeated aggregation along
chains, where attention and sheaf propagation give similar results. On MNIST
Terrain, both mechanisms adapt propagation to the current representations
when combining pixel observations, and both outperform fixed-propagation
baselines. The finite-horizon control outperforms SheafDEQ on MNIST Terrain
and Coordinates, suggesting that its richer recurrent block also benefits
these tasks. Further discussion appears in
Appendix~\ref{app:benchmark-notes}.}

\begin{tcolorbox}[colback=black!3,colframe=black!30,boxrule=0.5pt,arc=2pt]
\rev{\textbf{Summary of predictive results.} Across both experiment sets, the results support adaptive propagation over
fixed propagation. The additional
benefit of sheaf propagation is clearest when neighbours require different
treatment (e.g., mixed heterophily, geometry). SheafDEQ remains competitive with recurrent
models without equilibrium guarantees.}
\end{tcolorbox}

\rev{While these experiments show no consistent predictive advantage from the
equilibrium-guarantee mechanism at the trained iteration budget, we next
examine its role when computation continues beyond that budget or uses
delayed neighbour information.}

\subsection{Empirical test-time \rev{recurrence} diagnostics}
\label{sec:equilibrium-robustness}

\paragraph{Diagnostic protocol.}
\rev{We evaluate whether the test-time recurrence settles across
different initial states and remains predictive beyond its training horizon.
For each SheafDEQ variant, across six positive initial-state conditions, we
report the median relative one-step update residual \(\rho_k\),
together with test accuracy as recurrence continues through 100 iterations.
To assess dependence on the initial state directly, we compare each
trajectory's state and prediction at iteration \(k\) with those of the default
trajectory after 100 iterations. The construction of the initial states, the evaluation metrics, and
the full diagnostic definitions are provided in
Appendix~\ref{app:equilibrium-protocol}.}

\begin{figure*}[t]
    \centering
    \includegraphics[width=\textwidth]{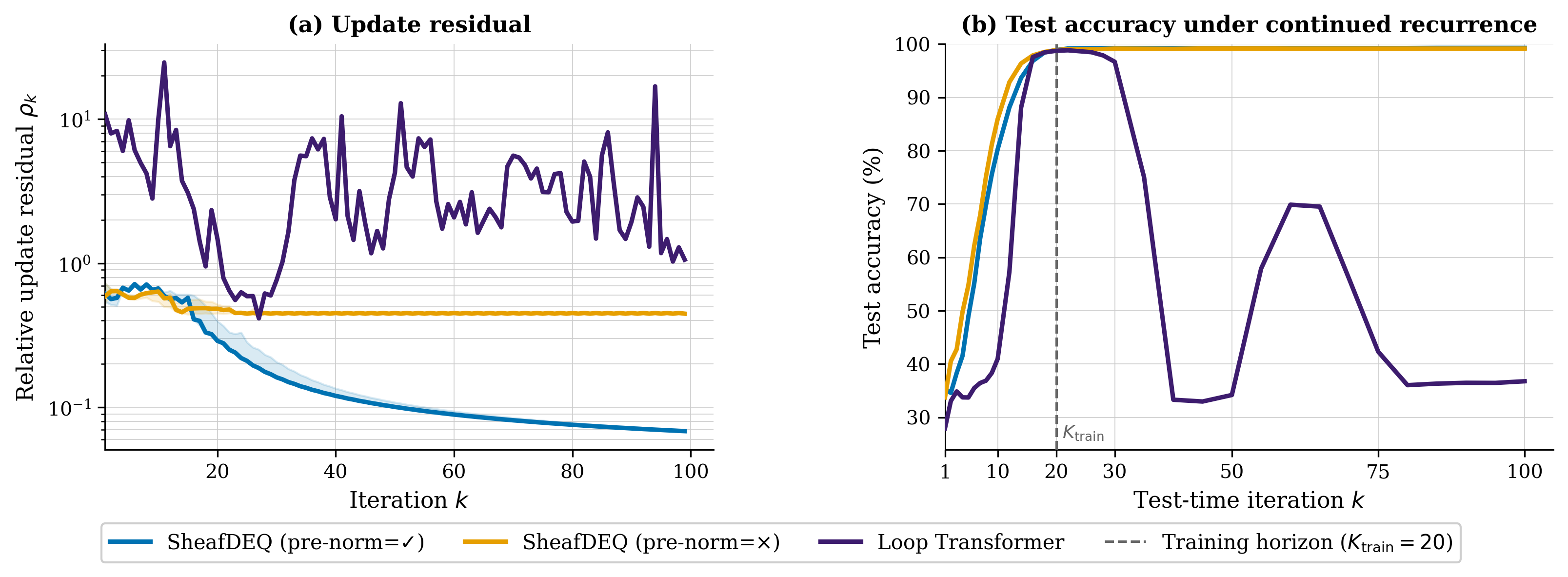}
\caption{\textbf{Finite-iteration recurrence diagnostics.}
Panels show (a) the median relative update residual
\rev{and (b) test accuracy under continued recurrence}.
For the Loop Transformer, panel~(a) uses one-step logit updates.
\rev{For SheafDEQ,} solid curves and shading \rev{in panel~(a)} show medians and
interquartile ranges across six \rev{initial-state conditions}. \rev{The Loop
Transformer does not accept an externally specified recurrent initial state,
so its curve is unshaded;} the dashed line
\rev{in panel~(b)} marks its training horizon \(K_{\mathrm{train}}=20\).
}

\label{fig:equilibrium-robustness}
\end{figure*}

\paragraph{\rev{Diagnostic results}.}
\rev{Figure~\ref{fig:equilibrium-robustness} reports relative update residuals and test accuracy under continued recurrence for SheafDEQ, its equilibrium-guarantee ablation, and the finite-horizon control. In panel~(a), SheafDEQ shows decreasing residuals, consistent with the convergence predicted by Theorem~\ref{thm:sheafdeq}, whereas its ablation retains larger residuals. Panel~(b) shows that both retain their test accuracy through 100 iterations, while the finite-horizon control eventually loses accuracy beyond its trained 20-loop horizon.}
\rev{Interestingly, the ablation retains similar accuracy despite its larger residuals. We hypothesize that the shared saturating nonlinearity limits the effect of continued state changes on classification. This suggests that retaining accuracy does not require the state to have settled. Additional state- and prediction-discrepancy diagnostics examine dependence on the initial state in Appendix~\ref{app:equilibrium-protocol}, Figure~\ref{fig:equilibrium-robustness-full}.}

\subsection{Sensitivity to bounded communication staleness}
\label{sec:staleness-results}

\paragraph{Delayed-update protocol.}
\rev{To measure sensitivity to bounded communication staleness,} we evaluate
one trained checkpoint for each \rev{model} on the fixed rewiring-level-$7$
community-detection graph under sampled partially asynchronous schedules,
following the protocol of \citet{solodova2025graph}.
Each schedule updates every node within \(s_{\mathrm{upd}}\) slots and uses
neighbour information at most \(\tau_{\max}\) slots old. Starting from the
same \rev{initial state}, we sample 20 schedules for
\((s_{\mathrm{upd}},\tau_{\max})\in
\{(1,0),(2,1),(5,2),(8,4)\}\), where \((1,0)\) is synchronous, and report the
relative Frobenius discrepancy \(d_{\mathrm{out}}\) from the corresponding
synchronous prediction. The SheafDEQ variants---with and without
pre-restriction normalization---are evaluated after \(u=500\)
mean local updates per node, using synchronous references at 420 iterations
for the former and 500 iterations for the latter; the Loop
Transformer uses its synchronous 20-loop output.
Complete definitions, protocol, and results appear in
Appendix~\ref{app:staleness-experiment}.

\begin{table}[h!]
\centering
\small
\caption{\rev{\textbf{Bounded-staleness sensitivity at
\((s_{\mathrm{upd}},\tau_{\max})=(5,2)\).}
Median \([\min,\max]\) relative logit discrepancy \(d_{\mathrm{out}}\) across 20 schedules;
lower is better. Complete results appear in
Appendix~\ref{app:staleness-experiment}.}}
\label{tab:async-main}
{
\begin{tabular}{@{}lc@{}}
\toprule
Model & \(d_{\mathrm{out}}\) \\
\midrule
SheafDEQ, pre-norm \(=\checkmark\)
& \(7.08{\times}10^{-4}\ [7.08,7.09]{\times}10^{-4}\) \\
SheafDEQ, pre-norm \(=\times\)
& \(4.12{\times}10^{-2}\ [1.57,5.32]{\times}10^{-2}\) \\
Loop Transformer (20 loops)
& \(7.09{\times}10^{-2}\ [6.32,7.93]{\times}10^{-2}\) \\
\bottomrule
\end{tabular}}
\end{table}

\paragraph{Staleness results.}
\rev{Table~\ref{tab:async-main} reports relative logit discrepancies from synchronous inference under the intermediate delayed condition, with \(s_{\mathrm{upd}}=5\) and \(\tau_{\max}=2\). SheafDEQ remains close to its synchronous prediction, whereas its equilibrium-guarantee ablation shows larger discrepancies and greater variation across schedules. This separation persists across all tested delay settings (Appendix~\ref{app:staleness-experiment}, Table~\ref{tab:async-full}, Figure~\ref{fig:async-trajectories}).}
\rev{Unlike the continued-recurrence experiment, where both variants retain similar accuracy, delayed communication exposes a clearer benefit of the equilibrium guarantee: SheafDEQ's output depends little on the sampled update schedule. This is consistent with Corollary~\ref{cor:asynchronous}, which guarantees convergence to the same equilibrium under bounded staleness.}

\section{Conclusion}

\rev{We introduced SheafDEQ, an implicit GNN with adaptive, matrix-valued sheaf
propagation. Under explicit regularity conditions, fixed-point iteration
converges globally to a unique equilibrium; contractivity also guarantees
convergence under bounded communication staleness.}

\rev{Across both experiment sets, the results support adaptive propagation,
with sheaf propagation most beneficial when neighbours require different
treatment. SheafDEQ remains competitive with recurrent models without
equilibrium guarantees. Continued recurrence preserves predictive accuracy
even before the state settles, while delayed-update experiments show low
sensitivity to communication staleness, consistent with the asynchronous
convergence guarantee.}

\rev{Variation across parameter-initialization seeds remains the main training limitation
\citep{agarwala2022deep} and may prevent SheafDEQ from consistently realizing
the expressive potential of adaptive sheaf propagation. Future work should
improve training reliability and evaluate transfer to inductive, real-world
graph datasets.}

\section*{Acknowledgement}
The computations were enabled by resources provided by the National Academic Infrastructure for Supercomputing in Sweden (NAISS), partially funded by the Swedish Research Council through grant agreement no. 2022-06725.

\bibliographystyle{unsrtnat}
\bibliography{references}

\appendix
\section{Notation}
\label{app:notation}

Tables~\ref{tab:notation-core} and~\ref{tab:notation-experiments} collect the
recurring notation used throughout the paper. Local proof variables are
defined where they appear. Matrix inequalities involving \rev{recurrent states} are
entrywise, and $\|H\|_\infty=\max_{i,j}|H_{ij}|$.

\begin{table*}[t]
\centering
\small
\caption{Core graph, sheaf, and equilibrium notation.}
\label{tab:notation-core}
\renewcommand{\arraystretch}{1.08}
\begin{tabularx}{\textwidth}{@{}l l Y@{}}
\toprule
Symbol & Shape/domain & Meaning \\
\midrule
$G=(V,E)$ & $|V|=n$ & Undirected graph with node set $V$ and edge set $E$. \\
$A$ & $\mathbb R^{n\times n}$ & Raw adjacency matrix, without added self-loops. \\
$D_G$ & $\mathbb R^{n\times n}$ & Scalar node-degree matrix $D_G=\operatorname{diag}(A\mathbf1)$. \\
$A^+,D^+$ & $\mathbb R^{n\times n}$ & Self-loop quantities $A^+=A+I$ and $D^+=\operatorname{diag}(A^+\mathbf1)$. \\
$\widehat A$ & $\mathbb R^{n\times n}$ & Symmetric normalized adjacency $D_G^{-1/2}AD_G^{-1/2}$. \\
$\widehat A^+$ & $\mathbb R^{n\times n}$ & Symmetric normalized adjacency with self-loops $(D^+)^{-1/2}A^+(D^+)^{-1/2}$. \\
$X$ & $\mathbb R^{n\times f}$ & Raw node-feature matrix; $f$ is the node-feature dimension. \\
$d,q,c$ & $c=dq$ & Stalk dimension, sheaf feature-channel count, and total node \rev{representation} dimension. \\
$C$ & $\mathbb N$ & Number of prediction classes in a classification task. \\
$o$ & $\mathbb N$ & Per-node output dimension: $o=C$ for classification, $o=1$ for scalar regression, and $o=2$ for Coordinates. \\
$H$ & $\mathbb R^{n\times c}$ & Recurrent state in the common row-major GNN convention. \\
$\mathscr R(H)=\mathbf H$ & $\mathbb R^{nd\times q}$ & Fixed reshape of $H$ into the sheaf-stacked \rev{state}; $\mathbf H_v\in\mathbb R^{d\times q}$ is node $v$'s block. \\
$B_\theta(X)$ & $\mathbb R^{n\times c}$ & Entrywise-nonnegative input injection. \\
$\overline{\mathbf H}$ & $\mathbb R^{nd\times q}$ & Pre-restriction-normalized sheaf state $\operatorname{norm}_p(\mathbf H)$, normalized independently along each row. \\
$p$ & $[1,\infty]$ & Norm order used by the pre-restriction row normalization. \\
$\mathcal F$ & cellular sheaf on $G$ & Collection of node and edge stalks and their incidence maps. \\
$\mathcal F_{v\triangleleft e}$ & $\mathbb R^{d\times d}$ & Restriction map from the stalk at $v$ to the stalk at incident edge $e$. \\
$g_{\psi_{\mathrm{src}}},g_{\psi_{\mathrm{tgt}}}$ & $\mathbb R^{2c+d_e}\to\mathbb R^{d\times d}$ & Separately parameterized incidence-map generators receiving the same ordered endpoint concatenation. \\
$z_e(H)$ & $\mathbb R^{2c+d_e}$ & Ordered endpoint \rev{states} and optional edge features used by both incidence-map generators. \\
$a_e$ & $\mathbb R^{d_e}$ & Optional feature vector of edge $e$. \\
$\Pi$ & $\mathbb R^{d\times d}\setminus\{0\}\to\mathbb R^{d\times d}$ & Frobenius normalization applied to a nonzero generated restriction map. \\
$W_1,W_2$ & $\mathbb R^{d\times d}$, $\mathbb R^{q\times q}$ & Stalk-coordinate and sheaf-channel mixing matrices. \\
$\operatorname{norm}_p$ & row-wise map & Per-row $p$-norm normalization used only before restriction inference. \\
$\delta_{\mathcal F}$ & linear map & Sheaf coboundary operator associated with an arbitrary orientation of the undirected edges. \\
$L_{\mathcal F}$ & $\mathbb R^{nd\times nd}$ & Unnormalized sheaf Laplacian $\delta_{\mathcal F}^\top\delta_{\mathcal F}$. \\
$\Delta_{\mathcal F}$ & $\mathbb R^{nd\times nd}$ & Scalar-degree-normalized sheaf Laplacian $(D_G^{-1/2}\otimes I_d)L_{\mathcal F}(D_G^{-1/2}\otimes I_d)$. \\
$\mathcal P_{\mathcal F}$ & $\mathbb R^{nd\times nd}$ & Fixed-sheaf propagation operator $I_{nd}-\Delta_{\mathcal F}$. \\
$\mathcal P_\Theta(H;G)$ & $\mathbb R^{n\times c}$ & Adaptive sheaf propagation induced by $\mathcal F_\psi(H;G)$, reshaped back to the common \rev{recurrent-state} convention. \\
$\sigma_{\beta_{\mathrm{shift}}}$ & $\mathbb R\to\mathbb R_{++}$ & Shifted activation $t\mapsto\tanh(t)+\beta_{\mathrm{shift}}$. \\
$\beta_{\mathrm{shift}}$ & $(1,\infty)$ & Positive scalar shift in the SheafDEQ activation. \\
$\beta_{\mathrm E}$ & $\mathbb R_{++}$ & Strong-convexity coefficient in the EnergyGNN baseline. \\
$\mathcal T_\Theta(H;X,G)$ & $\mathbb R_{++}^{n\times c}\to\mathbb R_{++}^{n\times c}$ & Complete SheafDEQ equilibrium update. \\
$H^\star$ & $\mathbb R_{++}^{n\times c}$ & Unique equilibrium \rev{representation} satisfying $H^\star=\mathcal T_\Theta(H^\star;X,G)$. \\
$\mathcal K_X$ & subset of $\mathbb R_{++}^{n\times c}$ & Positive invariant order interval used in the fixed-point theorem. \\
$\Theta$ & parameter collection & All parameters of the recurrent SheafDEQ model; $\theta$ and $\psi$ denote specified subsets. \\
$r_\omega$ & $\mathbb R^c\to\mathbb R^o$ & Readout, where $o=C$ for classification, $o=1$ for scalar regression, and $o=2$ for Coordinates. \\
$\widehat Y^{(k)}$ & $\mathbb R^{n\times o}$ & Prediction or logit output obtained by applying $r_\omega$ row-wise to $H^{(k)}$. \\
$\mu$ & $(0,1)$ & Subhomogeneity and Thompson-contraction coefficient. \\
$C_{\mathrm{fp}}$ & $\mathbb R_{++}$ & Constant in the entrywise fixed-point convergence bound. \\
$\delta_{\mathrm T}$ & metric & Thompson metric $\delta_{\mathrm T}(H,\widehat H)=\|\log H-\log\widehat H\|_\infty$. \\
\bottomrule
\end{tabularx}
\end{table*}

\begin{table*}[t]
\centering
\small
\caption{Iteration, benchmark, and diagnostic notation.}
\label{tab:notation-experiments}
\renewcommand{\arraystretch}{1.08}
\begin{tabularx}{\textwidth}{@{}l l Y@{}}
\toprule
Symbol & Domain & Meaning \\
\midrule
$k$ & $\mathbb N$ & Iteration index. \\
$K$ & $\mathbb N$ & Finite iteration or layer budget. \\
$K_{\mathrm{train}}$ & $\mathbb N$ & Iteration horizon used during training. \\
$K_{\mathrm{ref}}$ & $\mathbb N$ & \rev{Iteration used to define the numerical reference state and prediction}. \\
$k_{\mathrm{NN}}$ & $\mathbb N$ & Number of nearest neighbours used to construct a nearest-neighbour graph. \\
$L_{\mathrm{hetero}}$ & $\{0,\ldots,10\}$ & Community-detection perturbation level. \\
$\sigma_{\mathrm{obs}}$ & $\mathbb R_{++}$ & Standard deviation of the synthetic community node observations. \\
$j$ & $\{1,\ldots,6\}$ & \rev{Initial-state condition} index in the equilibrium diagnostic. \\
$\rho_{j,k}$ & $\mathbb R_+$ & Relative \rev{state} update residual for \rev{initial-state condition} $j$. \\
$\rho_k$ & $\mathbb R_+$ & Median of $\rho_{j,k}$ across \rev{initial-state conditions}. \\
$d_k^{\mathrm{state}}$ & $\mathbb R_+$ & Median relative \rev{state} discrepancy across \rev{initial-state conditions}. \\
$d_k^{\mathrm{pred}}$ & $\mathbb R_+$ & Relative prediction discrepancy from the \rev{numerical reference prediction}. \\
$\varepsilon$ & $\mathbb R_{++}$ & Small numerical stabilizer in relative diagnostic denominators. \\
$p_{\mathrm{drop}}$ & $[0,1)$ & Dropout probability. \\
$\eta_{\mathrm{lr}}$ & $\mathbb R_{++}$ & Optimizer learning rate. \\
$\lambda_{\mathrm{wd}}$ & $\mathbb R_+$ & Weight-decay coefficient. \\
$W_{\mathrm{rec}}$ & $\mathbb R^{c\times c}$ & Recurrent feature-mixing matrix of the IGNN baseline. \\
$\widehat\rho_{\mathrm{PI}}$ & $\mathbb R^{n\times n}\to\mathbb R_+$ & Spectral-radius estimator obtained by power iteration in the IGNN projection. \\
$s_{\mathrm{upd}}$ & $\mathbb N$ & Maximum update-spacing window in the sampled asynchronous schedules. \\
$\tau_{\max}$ & $\mathbb N_0$ & Maximum communication delay in logical slots. \\
$B_{\mathrm{async}}$ & $\mathbb N$ & Partial-asynchronism bound $\max\{s_{\mathrm{upd}},\tau_{\max}+1\}$. \\
$u$ & $\mathbb R_+$ & Mean number of local updates per node. \\
$d_{\mathrm{out}}$ & $\mathbb R_+$ & Relative Frobenius discrepancy between asynchronous and synchronous predictions; write $d_{\mathrm{out}}(u)$ when emphasizing work $u$. \\
$\alpha$ & $(0,1)$ & PPR teleport parameter. \\
\bottomrule
\end{tabularx}
\end{table*}

\clearpage
\section{Extended Background and Related Work}
\label{app:extended-background}

\subsection{Well-posed implicit graph neural networks}
\label{app:implicit-gnns}

For fixed $(X,G)$, well-posedness of
$H^\star=\mathcal T_\Theta(H^\star;X,G)$ means existence and uniqueness
of $H^\star$. It does not by itself imply that the Picard iterates converge:
this follows, for example, when
$\mathcal T_\Theta(\cdot;X,G)$ is a contraction on a complete metric
space. Conversely, an equilibrium may be computed by a convergent solver other
than Picard iteration. Once a solution has been obtained and
$I-J_H\mathcal T_\Theta(H^\star;X,G)$ is nonsingular, the implicit
function theorem gives
\[
    \frac{\partial H^\star}{\partial\Theta}
    =
    \left(
        I-J_H\mathcal T_\Theta(H^\star;X,G)
    \right)^{-1}
    \frac{\partial\mathcal T_\Theta}{\partial\Theta}
        (H^\star;X,G),
\]
allowing gradients to be computed without storing all forward iterates
~\citep{bai2019deep}. This differentiation condition is local and should not be
confused with a global existence, uniqueness, or solver-convergence guarantee.

The original recurrent GNN already defined node states through a fixed-point
transition and imposed a contraction requirement to make the construction
well defined~\citep{scarselli2008graph}. Translating IGNN's original
column-major presentation to the row-major convention used in this paper gives
\[
    H=\phi\!\left(AHW+B_\theta(X)\right),
\]
where $A$ encodes the graph and $\phi$ is componentwise nonexpansive. A
sufficient well-posedness condition is
\[
    \lambda_{\mathrm{pf}}\!\left(
        \left|W^\top\otimes A\right|
    \right)<1,
\]
which is enforced in practice through a more tractable induced-norm constraint
on $W$~\citep{gu2020implicit}. EIGNN removes the intermediate nonlinearity and
constrains the Frobenius norm of its learnable transformation, yielding
tractable closed-form forward and backward solutions~\citep{liu2021eignn}.
Convergent Graph Solvers instead construct
input-dependent linear contracting maps~\citep{park2021convergent}, while
MIGNN replaces direct contractivity by a strongly monotone residual operator
and uses monotone or scaled-orthogonal parameterizations together with operator
splitting~\citep{baker2023implicit}.

The representative routes can be summarized as follows; these are sufficient
architectural mechanisms used by the corresponding models, not necessary
conditions for implicit GNNs in general.
\begin{description}
    \item[Contractive transition.]
    The original GNN and IGNN constrain the transition or recurrent operator in
    a norm or Perron--Frobenius spectral radius so that Picard iteration
    contracts~\citep{scarselli2008graph,gu2020implicit}.

    \item[Linear equilibrium.]
    EIGNN and Convergent Graph Solvers use normalized or input-dependent linear
    recurrences with an analytically controlled equilibrium
    ~\citep{liu2021eignn,park2021convergent}.

    \item[Monotone operator.]
    MIGNN requires a strongly monotone residual and uses monotone or
    scaled-orthogonal parameterizations together with operator splitting
    ~\citep{baker2023implicit}.

    \item[Convex energy.]
    GIND and EnergyGNN define $H^\star$ as the minimizer of a convex or
    strongly convex energy implemented as a partially input convex neural network~\citep{amos2017input}; the admissible energy architecture and optimization
    solver determine the guarantee
    ~\citep{chen2022optimization,solodova2025graph}.

    \item[Subhomogeneous update.]
    SubDEQ controls the contraction degree through positive subhomogeneous
    nonlinearities while leaving the recurrent weights spectrally unconstrained
    ~\citep{sittoni2024subhomogeneous}.
\end{description}

Optimization-based models define the representation as
\[
    H^\star=\operatorname*{arg\,min}_H E_\theta(H;X,G).
\]
Strong convexity of $E_\theta(\cdot;X,G)$ ensures a unique minimizer,
while convergence still depends on the optimization algorithm used to compute
it. GIND relates its implicit nonlinear diffusion to the minimizer of a convex
graph energy~\citep{chen2022optimization}. EnergyGNN instead learns a
node-separable strongly convex energy: both its message and update energies are
implemented by partially input-convex neural networks with the hidden
representations designated as convex inputs~\citep{solodova2025graph}. This
separability supports local asynchronous optimization under bounded update and
communication delays. These constructions are reviewed more broadly by
\citet{yang2025implicit}.

\subsection{Subhomogeneous equilibrium theory}
\label{app:subdeq}

Let $T:\mathbb R^a\to\mathbb R^b$ be Lipschitz and let
$\Omega\subseteq\operatorname{dom}_{+}(T)$. Following
\citet{sittoni2024subhomogeneous}, $T$ is $\mu$-subhomogeneous on $\Omega$ when
\[
    |Mz|\leq\mu T(z),
    \qquad z\in\Omega,\quad M\in\partial T(z),
\]
and is \emph{strongly} $\mu$-subhomogeneous when
\[
    |M|\,|z|\leq\mu T(z),
    \qquad z\in\Omega,\quad M\in\partial T(z).
\]
All absolute values and inequalities are entrywise. Strong subhomogeneity is
preserved by the compositions used in SubDEQ, while ordinary subhomogeneity
need not be. The definition also extends Euler's identity: if $T$ is
differentiable and positively $\eta_{\mathrm{hom}}$-homogeneous, then
$J_T(z)z=\eta_{\mathrm{hom}}T(z)$, and hence $T$ is
$\eta_{\mathrm{hom}}$-subhomogeneous wherever $T(z)\geq0$.

The interior of the positive orthant is complete under the Thompson metric
\[
    \delta_{\mathrm T}(x,y)
    =
    \|\log x-\log y\|_\infty .
\]
If $T:\mathbb R_{++}^a\to\mathbb R_{++}^b$ is positive, Lipschitz, and
$\mu$-subhomogeneous, then
\[
    \delta_{\mathrm T}(T(x),T(y))
    \leq
    \mu\,\delta_{\mathrm T}(x,y).
\]
Equivalently, $\log\circ T\circ\exp$ is $\mu$-Lipschitz in
$\ell_\infty$. When $a=b$ and $\mu<1$, Banach's theorem therefore gives a
unique fixed point, and Picard iteration converges to it globally and linearly.

SubDEQ applies these results to the weight-tied, input-injected update
\[
    z^{(k+1)}
    =
    \sigma_1\!\left(
        \sigma_2(Wz^{(k)})+B_\theta(x)
    \right).
\]
The linear map is positively $1$-homogeneous, the nonnegative input injection
is a translation, and suitable positive subhomogeneous activations determine
the overall contraction degree. This construction does not impose a spectral
constraint on $W$. For example,
$t\mapsto\tanh(t)+1.2$ is $0.99$-subhomogeneous
~\citep{sittoni2024subhomogeneous}.

For graphs, \citet{sittoni2024subhomogeneous} instantiate the recurrent map
with a fixed APPNP operator. With $\widehat A^+$ denoting the symmetric
normalized adjacency with self-loops and $\alpha\in(0,1)$, one of their nonlinear
variants is
\[
    \begin{aligned}
    \widetilde H^\star
        &=
        \tanh\!\left(
            (1-\alpha)\widehat A^+\widetilde H^\star
        \right)
        +\alpha B_\theta(X)
        +1.2\,\mathbf 1,\\
    \widehat Y&=\operatorname{softmax}(\widetilde H^\star).
    \end{aligned}
\]
The graph enters this model through the fixed linear operator
$\widehat A^+$; it is not inferred from the evolving state. SheafDEQ
replaces this fixed propagation by adaptive propagation
$\mathcal P_\Theta(H;G)$ defined in Section~\ref{sec:adaptive-equilibrium}.

\subsection{Cellular sheaves, sections, and neural sheaf diffusion}
\label{app:sheaves}

Fix an arbitrary orientation of each edge. For an oriented edge $e=(u,v)$,
the sheaf coboundary compares the endpoint signals after mapping them into the
same edge stalk:
\[
    (\delta_{\mathcal F}\mathbf H)_e
    =
    \mathcal F_{v\triangleleft e}\mathbf H_v
    -
    \mathcal F_{u\triangleleft e}\mathbf H_u .
\]
Changing the chosen orientation only changes the sign of the corresponding
row of $\delta_{\mathcal F}$ and leaves
$L_{\mathcal F}=\delta_{\mathcal F}^{\top}\delta_{\mathcal F}$ unchanged.
For $e=(u,v)$, its blocks are
\[
    (L_{\mathcal F})_{vv}
    =
    \sum_{e\ni v}
    \mathcal F_{v\triangleleft e}^{\top}
    \mathcal F_{v\triangleleft e},
    \qquad
    (L_{\mathcal F})_{uv}
    =
    -
    \mathcal F_{u\triangleleft e}^{\top}
    \mathcal F_{v\triangleleft e}.
\]
For the connected graph $G$, let
$D_G=\operatorname{diag}(\deg(v):v\in V)$. The scalar-degree-normalized
sheaf Laplacian used throughout this paper is
\[
    \Delta_{\mathcal F}
    =
    (D_G^{-1/2}\otimes I_d)L_{\mathcal F}
    (D_G^{-1/2}\otimes I_d),
    \qquad
    \mathcal P_{\mathcal F}=I_{nd}-\Delta_{\mathcal F}.
\]
Consequently, the off-diagonal blocks of
$\mathcal P_{\mathcal F}=I_{nd}-\Delta_{\mathcal F}$ propagate information through
matrix products of restriction maps rather than through nonnegative scalar
edge weights.

\rev{We use ``hard sheaf consensus'' to mean exact agreement in the shared
edge spaces: neighbouring representations agree after applying their
restriction maps, without needing to be identical at the nodes.}
A global section is a $0$-cochain \rev{$h\in\mathbb R^{nd}$} satisfying
\[
    \mathcal F_{v\triangleleft e}\rev{h_v}
    =
    \mathcal F_{u\triangleleft e}\rev{h_u}
    \quad\text{for every }e=(u,v).
\]
The space of global sections is
$H^0(G;\mathcal F)=\ker\delta_{\mathcal F}
=\ker L_{\mathcal F}$\rev{~\citep{hansen2019toward}}. Scalar degree normalization gives
\[
    \ker\Delta_{\mathcal F}
    =
    (D_G^{1/2}\otimes I_d)\ker L_{\mathcal F},
\]
the degree-rescaled global-section space. For a fixed sheaf, continuous
scalar-degree-normalized diffusion
satisfies
\[
    \dot{\mathbf H}(t)=-\Delta_{\mathcal F}\mathbf H(t),
    \qquad
    \mathbf H(t)=e^{-t\Delta_{\mathcal F}}\mathbf H(0),
\]
and converges to the orthogonal projection of $\mathbf H(0)$ onto
$\ker\Delta_{\mathcal F}$\rev{~\citep{hansen2021opinion}}.
\rev{Applying $D_G^{-1/2}\otimes I_d$ to this limit gives a global section
in each channel, and the rescaled limit therefore satisfies hard sheaf consensus. Suitable sheaves
can thus preserve structured, nonuniform limiting representations without
requiring identical node representations}
~\citep{bodnar2022neural}. \rev{The} unit-step discrete operator
$\rev{I_{nd}}-\Delta_{\mathcal F}$ may have eigenvalue $-1$;
unconditional convergence of its powers would require an additional step-size
or laziness condition. \rev{More generally, discrete-time diffusion satisfies
\[
    \mathbf H^{(k+1)}
    =(I_{nd}-\eta\Delta_{\mathcal F})\mathbf H^{(k)}.
\]
For $\eta>0$ with $\eta\lambda_{\max}(\Delta_{\mathcal F})<2$, it converges to
the same projection and hence, after degree rescaling, to hard sheaf consensus.}

Neural Sheaf Diffusion makes the restriction maps layer- and state-dependent.
A generic incidence map can be written as
\[
    \mathcal F_{v\triangleleft e}^{(k)}
    =
    g_{\psi_k}
    \!\left(H_v^{(k)},H_u^{(k)}\right)
    \in\mathbb R^{d\times d},
    \qquad e=(u,v),
\]
which induces
$\mathcal F_{\psi_k}(H^{(k)};G)$ and therefore
$\Delta_{\mathcal F_{\psi_k}(H^{(k)};G)}$
~\citep{bodnar2022neural}. In a finite explicit network, different layers may
use different parameters $\psi_k$. In an equilibrium model, one
weight-tied state $H$ determines the restriction maps, those maps determine
the propagation operator, and that operator acts again on $H$. Hence the
coupled map
\[
    H\longmapsto
    \mathcal P_\Theta(H;G)
    =
    \mathscr R^{-1}\!\left(
    \mathcal P_{\mathcal F_\psi(H;G)}
    (I_n\otimes W_1)\mathscr R(H) W_2
    \right)
\]
is generally nonlinear even before an activation is applied.

\subsection{Extended related work}
\label{app:related-work}

Beyond NSD, finite-depth sheaf architectures extend the propagation mechanism
through attention~\citep{barbero2022sheafattention}, nonlinear sheaf
Laplacians~\citep{zaghen2024sheaf}, flat-bundle global communication
~\citep{bamberger2024bundle}, directed communication
~\citep{ribeiro2025cooperative}, and polynomial sheaf filters
~\citep{borgi2025polynomial}. DNSD is the most direct explicit-depth
comparison: it combines sheaf-adjacency aggregation with normalization and
gating to sustain propagation at high finite depth, but still computes a
prescribed composition rather than an equilibrium~\citep{bourgerie2026deep}.

The implicit GNNs reviewed above obtain well-posed representations through
contractive recurrent dynamics~\citep{scarselli2008graph,gu2020implicit},
controlled linear equilibria~\citep{liu2021eignn,park2021convergent}, or
monotone operators~\citep{baker2023implicit}. These models do not couple the
equilibrium state to learned cellular-sheaf restriction maps. Relative to
SubDEQ, our direct theoretical basis, SheafDEQ retains the positive
subhomogeneous contraction guarantee but replaces fixed scalar graph
propagation operator with a sheaf operator inferred from the equilibrium representation
~\citep{sittoni2024subhomogeneous}.

Optimization-induced models expose a complementary distinction. GIND derives
an explicit convex objective for its nonlinear incidence-based diffusion, but
retains a fixed scalar graph differential~\citep{chen2022optimization}.
EnergyGNN instead learns a strongly convex energy from local
PICNN-parameterized message and update terms and supports partially
asynchronous inference~\citep{solodova2025graph}. Both remain variational:
their representations minimize scalar energies. SheafDEQ need not derive
from a scalar potential and instead defines the well-posed fixed point of
adaptive, matrix-valued sheaf propagation.

Recent work on asynchronous nonlinear sheaf diffusion studies distributed
gradient descent on the Dirichlet energy of a fixed coordination sheaf and
proves linear convergence to its minimizer set under strongly convex, smooth
edge potentials and bounded communication delays~\citep{zhao2025asynchronous}.
In contrast, SheafDEQ learns state-dependent restriction maps and need not
arise from a scalar energy. Moreover, its contractive update has a unique
equilibrium, so every admissible bounded-staleness schedule converges to the
same representation, whereas energy minimization over a non-singleton solution
set can select different minimizers.

\section{Additional theory and proofs}
\label{app:proofs}

\subsection{Proof of Proposition~\ref{prop:orthant-routing}}
\label{app:proof-orthant-routing}

\begin{proof}
Set \(k=\lfloor d/2\rfloor\) and choose distinct sign vectors
\(s_1,\ldots,s_C\in\{-1,+1\}^d\), each with exactly \(k\) negative
entries. For every pair \(a,b\), their equal numbers of negative entries
imply that there is a
permutation \(\pi_{ab}\) satisfying
\[
    s_{b,\pi_{ab}(i)}=s_{a,i}.
\]
Define the signed permutation matrix \(Q_{ab}\) by
\[
    (Q_{ab})_{i,\pi_{ab}(i)}=s_{a,i}.
\]
For every \(x\succ0\),
\begin{equation}
    \operatorname{sign}(Q_{ab}x)=s_a,
    \qquad
    \operatorname{sign}(Q_{ab}^{\top}x)=s_b.
    \label{eq:routing-signs}
\end{equation}

Orient each edge \(e=\{v,u\}\) arbitrarily, write
\(a=\ell(v)\), \(b=\ell(u)\), and set
\begin{equation}
    \mathcal F_{v\triangleleft e}=d^{-1/2}I_d,
    \qquad
    \mathcal F_{u\triangleleft e}=d^{-1/2}Q_{ab}.
    \label{eq:routing-restrictions}
\end{equation}
Both restriction maps are invertible and have Frobenius norm one. Direct
substitution into the scalar-degree-normalized sheaf Laplacian gives
\begin{equation}
 (\mathcal P_{\mathcal F_\ell})_{vv}
     =\left(1-\frac1d\right)I_d,
 \qquad
 (\mathcal P_{\mathcal F_\ell})_{vu}
     =\frac{Q_{vu}}{d\sqrt{d_vd_u}},
 \label{eq:routing-blocks}
\end{equation}
where \(Q_{uv}=Q_{vu}^{\top}\), and the row signs of \(Q_{vu}\) are
\(s_{\ell(v)}\).

We first establish existence and uniqueness. Define the symmetric block
matrix
\[
    (A_Q)_{vu}=\frac{Q_{vu}}{\sqrt{d_vd_u}}
    \quad (u\sim v),
    \qquad
    (A_Q)_{vv}=0.
\]
For every block vector \(x=(x_v)_{v\in V}\), orthogonality of the signed
permutations and
\(2|p^\top q|\leq\|p\|_2^2+\|q\|_2^2\) give
\[
\begin{aligned}
    |x^\top A_Qx|
    &\leq
    \sum_{\{v,u\}\in E}
    \left(
        \frac{\|x_v\|_2^2}{d_v}
        +\frac{\|x_u\|_2^2}{d_u}
    \right)
    =\|x\|_2^2.
\end{aligned}
\]
Thus \(\operatorname{spec}(A_Q)\subseteq[-1,1]\), and
Eq.~\eqref{eq:routing-blocks} implies
\begin{equation}
    0\preceq\mathcal P_{\mathcal F_\ell}
    =
    \left(1-\frac1d\right)I_{nd}+\frac1dA_Q
    \preceq I_{nd}.
    \label{eq:routing-spectrum}
\end{equation}

The update maps the compact box
\([\beta-1,\beta+1]^{nd}\) into itself, so Brouwer's theorem gives a fixed
point. Suppose that \(h\) and \(g\) are two fixed points. Set
\[
    z=h-\beta\mathbf1,\qquad
    w=g-\beta\mathbf1,\qquad
    \delta=h-g=z-w.
\]
Since \(z,w\in(-1,1)^{nd}\),
\[
    \operatorname{arctanh}(z)=\mathcal P_{\mathcal F_\ell}h,
    \qquad
    \operatorname{arctanh}(w)=\mathcal P_{\mathcal F_\ell}g.
\]
If \(\delta\neq0\), strict expansiveness of
\(\operatorname{arctanh}\), together with
Eq.~\eqref{eq:routing-spectrum}, yields
\[
 \|\delta\|_2^2
 <
 \delta^\top\!
 \left(\operatorname{arctanh}(z)-\operatorname{arctanh}(w)\right)
 =
 \delta^\top\mathcal P_{\mathcal F_\ell}\delta
 \leq
 \|\delta\|_2^2,
\]
a contradiction. Hence the fixed point \(h^\star\) is unique.

Because \(\beta>1\), every coordinate of \(h^\star\) is positive. Let
\(p=\mathcal P_{\mathcal F_\ell}h^\star\) and fix a node \(v\) with
\(a=\ell(v)\). Equations~\eqref{eq:routing-signs} and
\eqref{eq:routing-blocks} give
\[
    p_v=\alpha h_v^\star+r_v,
    \qquad
    \alpha=1-\frac1d,
    \qquad
    r_v=
    \sum_{u\sim v}
    \frac{Q_{vu}h_u^\star}{d\sqrt{d_vd_u}},
\]
where
\[
    (r_v)_i>0\quad\text{if }s_{a,i}=+1,
    \qquad
    (r_v)_j<0\quad\text{if }s_{a,j}=-1.
\]
Choose such coordinates \(i,j\), and write
\[
    x=h_{v,i}^\star-h_{v,j}^\star,\qquad
    y=p_{v,i}-p_{v,j},\qquad
    m=(r_v)_i-(r_v)_j>0.
\]
Then \(y=\alpha x+m\) and
\[
    x=\tanh(p_{v,i})-\tanh(p_{v,j}).
\]
If \(x\leq0\), monotonicity of \(\tanh\) implies \(y\leq0\), while its
\(1\)-Lipschitz property implies \(x\geq y\). Consequently,
\[
    y=\alpha x+m\geq\alpha y+m,
\]
contradicting \(y\leq0\), \(1-\alpha>0\), and \(m>0\). Therefore
\[
    h_{v,i}^\star>h_{v,j}^\star
\]
for every coordinate \(i\) marked \(+1\) and coordinate \(j\) marked
\(-1\) by \(s_a\).

Finally, fix \(r\neq a\) and define
\[
 A=\{i:s_{a,i}=+1,\ s_{r,i}=-1\},
 \qquad
 B=\{j:s_{a,j}=-1,\ s_{r,j}=+1\}.
\]
The distinct sign vectors have equal numbers of negative entries, so
\(|A|=|B|>0\). The established
coordinate ordering gives
\[
\begin{aligned}
 (s_a-s_r)^\top h_v^\star
 &=
 2\left(
     \sum_{i\in A}h_{v,i}^\star
     -
     \sum_{j\in B}h_{v,j}^\star
   \right)
 >0.
\end{aligned}
\]
Thus \(s_a^\top h_v^\star>s_r^\top h_v^\star\) for every \(r\neq a\),
which proves the classification claim.
\end{proof}

\subsection{\rev{Nonlinear equilibria and soft sheaf consensus}}
\label{app:equilibrium-interpretation}

\rev{SheafDEQ repeatedly applies sheaf propagation followed by a shifted
$\tanh$ nonlinearity and a fixed input injection
(Section~\ref{sec:adaptive-equilibrium}). At equilibrium, the learned
restriction maps define a sheaf, but convergence of the complete update does
not require the representation to lie in the normalized Laplacian kernel of
that sheaf. This differs from fixed-sheaf diffusion, which, under the conditions
in Appendix~\ref{app:sheaves}, converges to the degree-rescaled global-section
space~\citep{hansen2021opinion}. Global sections
express exact agreement after neighbouring representations are mapped into
their shared edge spaces. The limiting diffusion therefore reaches hard
sheaf consensus through projection onto this space.}

\rev{Propagation with persistent input gives a useful contrast. A PPR-style
recurrence on a fixed sheaf~\citep{gasteiger2018predict} takes the form
\[
    h^{(k+1)}=(1-\alpha)\mathcal P_{\mathcal F}h^{(k)}+\alpha s,
    \qquad 0<\alpha<1,
\]
where $h^{(k)},s\in\mathbb R^{nd}$ and $s$ is the injected signal. When the
recurrence converges, its
infinite-iteration limit is the resolvent
\[
\begin{aligned}
    h^\star
    &=\alpha[I_{nd}-(1-\alpha)\mathcal P_{\mathcal F}]^{-1}s\\
    &=\bigl[I_{nd}+\tfrac{1-\alpha}{\alpha}\Delta_{\mathcal F}\bigr]^{-1}s.
\end{aligned}
\]
Rather than projecting onto hard sheaf consensus, this resolvent provides soft
low-pass filtering. We use ``soft sheaf consensus'' intuitively to describe
balancing agreement in the shared edge spaces with retention of the input
signal, rather than requiring exact agreement. Soft sheaf consensus similarly
balances sheaf consistency with local
objectives, with better conditioning identified as a practical benefit for
distributed computation~\citep{seely2026learning}. In this sense, SheafDEQ's
shifted $\tanh$ update and persistent input support a soft sheaf-consensus
interpretation rather than imposing hard sheaf consensus.}

\subsection{\rev{Proof of Theorem~\ref{thm:sheafdeq}}}
\label{app:proof-3.2}

For the remaining fixed-point proofs, we use the following notation.
Fix \(p\in[1,\infty]\). \rev{Throughout the remaining fixed-point proofs},
\(H\in\mathbb R_{++}^{n\times c}\), with
\(\mathbf H:=\mathscr R(H)\in\mathbb R_{++}^{nd\times q}\), and
matrix-valued maps are identified with
their vectorizations when applying the definitions of
\citet{sittoni2024subhomogeneous}. In particular, a Lipschitz positive map \(F\)
is \(\mu\)-subhomogeneous on the positive cone if
\[
|M\operatorname{vec}(H)|
\leq
\mu\,\operatorname{vec}(F(H))
\]
for every \(H>0\) and every \(M\in\partial F(H)\). It is strongly
\(\mu\)-subhomogeneous when the left-hand side is replaced by
\(|M|\,|\operatorname{vec}(H)|\).

\begin{lemma}[Pre-restriction normalization is scale invariant]
\label{lem:rowwise-scale}
For every \(s>0\) and every state $H$ whose rows in $\mathscr R(H)$ are
nonzero,
\[
\operatorname{norm}_p\!\left(\mathscr R(sH)\right)
=\operatorname{norm}_p\!\left(\mathscr R(H)\right).
\]
\end{lemma}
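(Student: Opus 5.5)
The plan is to reduce the statement to two elementary facts: the reshape \(\mathscr R\) is linear, and every \(p\)-norm is absolutely homogeneous. First, since \(\mathscr R:\mathbb R^{n\times c}\to\mathbb R^{nd\times q}\) is a fixed rearrangement of entries, it is linear. Hence \(\mathscr R(sH)=s\,\mathscr R(H)=s\mathbf H\), and each row satisfies \([\mathscr R(sH)]_{i,:}=s[\mathbf H]_{i,:}\) for \(i=1,\ldots,nd\).

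Second, we treat each row separately, which is possible because \(\operatorname{norm}_p\) acts independently on rows. By hypothesis, every row \([\mathbf H]_{i,:}\) is nonzero. Since \(s>0\), the row \(s[\mathbf H]_{i,:}\) is also nonzero, so both normalizations are well defined. Absolute homogeneity of the \(p\)-norm, valid for every \(p\in[1,\infty]\) including \(p=\infty\), gives \(\|s[\mathbf H]_{i,:}\|_p=|s|\,\|[\mathbf H]_{i,:}\|_p=s\|[\mathbf H]_{i,:}\|_p\). Therefore
\[
[\operatorname{norm}_p(\mathscr R(sH))]_{i,:}
=\frac{s[\mathbf H]_{i,:}}{s\|[\mathbf H]_{i,:}\|_p}
=[\operatorname{norm}_p(\mathscr R(H))]_{i,:}.
\]
Equality of all rows gives the claimed matrix identity.

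There is no real obstacle here. The only point needing care is well-definedness: the nonzero-row hypothesis, together with \(s\neq0\), is exactly what excludes division by zero. On the positive cone \(\mathbb R_{++}^{n\times c}\) used in the subsequent proofs, this hypothesis holds automatically.

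I would also note the consequence that the later argument needs. The edge inputs \(z_e(sH)\) coincide with \(z_e(H)\), so the restriction maps \(\mathcal F_\psi(sH;G)=\mathcal F_\psi(H;G)\) are unchanged. This is the scale invariance that yields positive homogeneity of \(\mathcal P_\Theta\).
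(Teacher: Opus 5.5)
Your proposal is correct and matches the paper's proof: both apply the row-wise computation \(s[\mathbf H]_{i,:}/(s\|[\mathbf H]_{i,:}\|_p)\) using absolute homogeneity of the \(p\)-norm. You additionally make explicit the linearity of \(\mathscr R\) and the well-definedness check, which the paper leaves implicit.
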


\begin{proof}
For every row $i$ of the sheaf-stacked representation,
\[
\big[\operatorname{norm}_p(\mathscr R(sH))\big]_{i,:}
=
\frac{s[\mathbf H]_{i,:}}{\|s[\mathbf H]_{i,:}\|_p}
=
\frac{s[\mathbf H]_{i,:}}{s\|[\mathbf H]_{i,:}\|_p}
=
\big[\operatorname{norm}_p(\mathscr R(H))\big]_{i,:}.
\]
\end{proof}

\begin{lemma}[Scale invariance of the restriction maps]
\label{lem:restriction-scale}
For every \(s>0\) and \(H\in\mathbb R_{++}^{n\times c}\),
\[
\mathcal F_\psi(sH;G)=\mathcal F_\psi(H;G),
\qquad
\mathcal P_{\mathcal F_\psi(sH;G)}
=
\mathcal P_{\mathcal F_\psi(H;G)}.
\]
\end{lemma}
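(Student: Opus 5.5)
The plan is to trace how the state \(H\) enters the construction of the sheaf and check that every entry point passes through the row normalization. Fix \(s>0\) and \(H\in\mathbb R_{++}^{n\times c}\). Since \(H\) is entrywise positive, every row of \(\mathscr R(H)\) is nonzero. The reshape \(\mathscr R\) is linear, so \(\mathscr R(sH)=s\,\mathscr R(H)\), and \(sH\) is again entrywise positive. Lemma~\ref{lem:rowwise-scale} then gives \(\overline{\mathscr R(sH)}=\overline{\mathscr R(H)}\). In particular, every node block agrees: \(\overline{\mathbf{(sH)}}_v=\overline{\mathbf H}_v\) for all \(v\in V\).

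Next, fix a stored edge \(e=(u\to v)\). The generator input is \(z_e(\cdot)=\operatorname{vec}(\overline{\mathbf H}_u)\Vert\operatorname{vec}(\overline{\mathbf H}_v)\Vert a_e\). Its first two parts depend on the state only through the normalized blocks, and the edge feature \(a_e\) does not depend on the state. Hence \(z_e(sH)=z_e(H)\), and the same holds for the reversed edge \(v\to u\). The incidence maps are \(\Pi(g_{\psi_{\mathrm{src}}}(z_e(\cdot)))\) and \(\Pi(g_{\psi_{\mathrm{tgt}}}(z_e(\cdot)))\). These are deterministic functions of \(z_e\), with fixed parameters \(\psi\). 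The standing assumption that the generators are nonzero on admissible normalized inputs makes \(\Pi\) well defined. The incidence maps therefore coincide at \(sH\) and \(H\) for every stored edge, which gives \(\mathcal F_\psi(sH;G)=\mathcal F_\psi(H;G)\).

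For the second identity, the propagation operator \(\mathcal P_{\mathcal F}=I_{nd}-(D_G^{-1/2}\otimes I_d)L_{\mathcal F}(D_G^{-1/2}\otimes I_d)\) is built only from the restriction maps and from the fixed graph quantity \(D_G\). Its blocks are \(\mathcal F^\top\mathcal F\) products, as described in Appendix~\ref{app:sheaves}. Equal sheaves therefore give equal operators, and \(\mathcal P_{\mathcal F_\psi(sH;G)}=\mathcal P_{\mathcal F_\psi(H;G)}\).

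No step here is a real obstacle. The only point that needs care is to confirm that the normalized state is the sole way the state reaches the sheaf: the unnormalized \(H\) must never feed into \(z_e\) or \(\Pi\). It is also worth noting that positivity of \(H\) is exactly what makes Lemma~\ref{lem:rowwise-scale} applicable.
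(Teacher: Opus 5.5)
Your proposal is correct and follows essentially the same route as the paper's proof. Both arguments run as follows: scale invariance of the row normalization (Lemma~\ref{lem:rowwise-scale}) leaves every normalized node block unchanged, and $a_e$ does not depend on the state, so each generator input $z_e$ is unchanged. Hence both incidence maps of every stored edge agree after $\Pi$, and $L_{\mathcal F}$, $\Delta_{\mathcal F}$, and $\mathcal P_{\mathcal F}$ agree because they depend only on these maps and the fixed degree matrix $D_G$. Your side remarks match how the paper handles the fine print: positivity guarantees nonzero rows, and the standing nonvanishing-generator assumption makes $\Pi$ well defined. The paper adds that connectivity of $G$ makes $D_G^{-1/2}$ well defined.
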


\begin{proof}
For an undirected edge $e=\{u,v\}$ with stored orientation $u\to v$,
Lemma~\ref{lem:rowwise-scale} shows that the complete normalized
sheaf-stacked representations, and hence their node blocks $u$ and $v$, are
unchanged by scaling.
The optional edge feature $a_e$ is independent of $H$. Substitution into both
direction-dependent generators, followed by the same coefficient
normalization $\Pi$, therefore gives
\[
\mathcal F_{u\triangleleft e}(sH)=\mathcal F_{u\triangleleft e}(H),
\qquad
\mathcal F_{v\triangleleft e}(sH)=\mathcal F_{v\triangleleft e}(H).
\]
Thus every incidence map, and hence the complete sheaf, is unchanged. Because
the graph is connected, the fixed scalar normalization $D_G^{-1/2}$ is well
defined. Hence unchanged incidence maps imply unchanged
$L_{\mathcal F}$, $\Delta_{\mathcal F}$, and
$\mathcal P_{\mathcal F}=I_{nd}-\Delta_{\mathcal F}$.
\end{proof}

\begin{lemma}[Homogeneity of the sheaf propagation block]
\label{lem:sheaf-homogeneous}
The adaptive propagation $\mathcal P_\Theta(\cdot;G)$ is positively
$1$-homogeneous:
\[
\mathcal P_\Theta(sH;G)=s\mathcal P_\Theta(H;G)
\qquad\text{for every }s>0.
\]
\end{lemma}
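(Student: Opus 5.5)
The plan is to substitute the definition of \(\mathcal P_\Theta\) and use Lemma~\ref{lem:restriction-scale} to remove the state dependence of the operator. Fix \(s>0\) and \(H\in\mathbb R_{++}^{n\times c}\); then \(sH\in\mathbb R_{++}^{n\times c}\), so the lemma applies. By definition,
\[
\mathcal P_\Theta(sH;G)
=\mathscr R^{-1}\!\left(\mathcal P_{\mathcal F_\psi(sH;G)}(I_n\otimes W_1)\mathscr R(sH)W_2\right).
\]

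First, Lemma~\ref{lem:restriction-scale} gives \(\mathcal P_{\mathcal F_\psi(sH;G)}=\mathcal P_{\mathcal F_\psi(H;G)}\). After this replacement, the operator acting on the state is a fixed matrix that no longer depends on the scale. Second, the reshape \(\mathscr R\) is a fixed linear rearrangement of entries, so \(\mathscr R(sH)=s\,\mathscr R(H)\). Likewise \(\mathscr R^{-1}\) is linear and commutes with scalar multiplication.

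Third, the map \(\mathbf Y\mapsto \mathcal P_{\mathcal F_\psi(H;G)}(I_n\otimes W_1)\mathbf Y W_2\) is linear in \(\mathbf Y\), since it is left multiplication by a fixed \(nd\times nd\) matrix followed by right multiplication by \(W_2\). Hence
\[
\mathcal P_{\mathcal F_\psi(H;G)}(I_n\otimes W_1)\,(s\mathscr R(H))\,W_2
= s\,\mathcal P_{\mathcal F_\psi(H;G)}(I_n\otimes W_1)\mathscr R(H)W_2 .
\]
Applying \(\mathscr R^{-1}\) to both sides yields \(\mathcal P_\Theta(sH;G)=s\,\mathcal P_\Theta(H;G)\).

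The one point that needs care is the first step. The state \(sH\) enters \(\mathcal P_\Theta\) in two places: through the sheaf and through the linear action. Homogeneity holds only because the sheaf part is exactly invariant, and that invariance is precisely Lemma~\ref{lem:restriction-scale}. We rely on the standing well-definedness assumption: the generators are nonzero, so \(\Pi\) is defined, and the rows of \(\mathscr R(H)\) are nonzero, which is automatic for positive \(H\). Everything after that is linear algebra, so there is no further obstacle.
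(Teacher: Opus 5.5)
Your proposal is correct and matches the paper's proof. Both use Lemma~\ref{lem:restriction-scale} to replace $\mathcal P_{\mathcal F_\psi(sH;G)}$ by $\mathcal P_{\mathcal F_\psi(H;G)}$, and then pull $s$ out through the linearity of $\mathscr R$, of $\mathscr R^{-1}$, and of the now-fixed matrix action; your explicit check that $sH$ remains in $\mathbb R_{++}^{n\times c}$, so that the lemma applies, is a small but welcome addition.
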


\begin{proof}
Using Lemma~\ref{lem:restriction-scale} and linearity in the state,
\begin{align*}
\mathcal P_\Theta(sH;G)
&=
\mathscr R^{-1}\!\left(
\mathcal P_{\mathcal F_\psi(sH;G)}
(I_n\otimes W_1)\mathscr R(sH)W_2\right)\\
&=
\mathscr R^{-1}\!\left(
\mathcal P_{\mathcal F_\psi(H;G)}
(I_n\otimes W_1)s\mathscr R(H)W_2\right)
=
s\mathcal P_\Theta(H;G).
\end{align*}
No sign or spectral condition on \(W_1\), \(W_2\), or the restriction maps is
used.
\end{proof}

\paragraph{Regularity scope.}
The preceding lemmas establish scale invariance and positive homogeneity, but
do not by themselves establish Lipschitz continuity of the adaptive
propagation. Accordingly, Lipschitz continuity of
\(\mathcal P_\Theta(\cdot;G)\) remains an explicit regularity assumption in
Theorem~\ref{thm:sheafdeq}. Nonzero generator outputs ensure that
\(\Pi(A)=A/\|A\|_{\mathrm F}\) is pointwise well defined, but pointwise
nonvanishing alone does not provide a uniform Lipschitz bound for the
normalized generators or for the resulting adaptive propagation. This property
is not explicitly enforced or certified during training; the theorem
therefore applies to learned parameter settings for which the stated
regularity assumption holds. The continued-iteration experiments evaluate the
resulting numerical behaviour but do not verify this assumption.

\begin{lemma}[Subhomogeneity of the complete update]
\label{lem:complete-subhom}
Assume that $\mathcal P_\Theta(\cdot;G)$ is Lipschitz, and that the scalar map
$\sigma_{\beta_{\mathrm{shift}}}(t)=\tanh(t)+\beta_{\mathrm{shift}}$ is
Lipschitz, positive, and strongly $\mu$-subhomogeneous on $\mathbb R$. If
$B_\theta(X)\geq0$ entrywise, then
\[
\mathcal T_\Theta(H;X,G)
=
\sigma_{\beta_{\mathrm{shift}}}\!\left(
    \mathcal P_\Theta(H;G)
\right)+B_\theta(X)
\]
is positive, Lipschitz, and \(\mu\)-subhomogeneous on the positive cone.
\end{lemma}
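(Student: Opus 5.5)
The plan has three parts: check positivity and the Lipschitz property directly, then reduce $\mu$-subhomogeneity to a chain rule for Clarke Jacobians combined with an Euler-type identity for the positively $1$-homogeneous map $\mathcal P_\Theta(\cdot;G)$ from Lemma~\ref{lem:sheaf-homogeneous}.

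\emph{Positivity and Lipschitz continuity.} Positivity is immediate. Since $\tanh(t)+\beta_{\mathrm{shift}}>\beta_{\mathrm{shift}}-1>0$ and $B_\theta(X)\geq0$, every entry of $\mathcal T_\Theta(H;X,G)$ is positive. The Lipschitz property follows because $\sigma_{\beta_{\mathrm{shift}}}$ is $1$-Lipschitz entrywise, $\mathcal P_\Theta(\cdot;G)$ is Lipschitz by assumption, and adding the constant $B_\theta(X)$ is an isometry.

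\emph{Euler identity for the propagation block.} Write $P:=\mathcal P_\Theta(\cdot;G)$. Because $P$ is Lipschitz, Rademacher's theorem makes it differentiable almost everywhere. I would first establish the following claim.
\[
N\operatorname{vec}(H)=\operatorname{vec}(P(H))\qquad\text{for every }N\in\partial P(H),\ H>0.
\]
At a differentiability point $H$, differentiate $s\mapsto P(sH)=sP(H)$ at $s=1$; this gives $J_P(H)\operatorname{vec}(H)=\operatorname{vec}(P(H))$. For a general $H$, the Clarke Jacobian is the convex hull of limits $\lim J_P(H_m)$ along differentiability points $H_m\to H$. For any such sequence,
\[
J_P(H_m)\operatorname{vec}(H_m)=\operatorname{vec}(P(H_m)).
\]
The Jacobians are bounded by the Lipschitz constant and $P$ is continuous, so the identity passes to the limit. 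It is affine in $N$, so it also holds on the convex hull. Homogeneity holds on the whole open cone, so the points $H_m$ can be taken positive.

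\emph{Chain rule and the subhomogeneity bound.} Apply the Clarke chain rule for the composition $\sigma\circ P$ with $\sigma$ acting entrywise. This yields the inclusion
\[
\partial \mathcal T_\Theta(H)\subseteq \operatorname{conv}\{DN : D\in\partial\sigma(P(H)),\ N\in\partial P(H)\},
\]
where $D$ is diagonal. In our case $\sigma=\tanh+\beta_{\mathrm{shift}}$ is smooth, so $D=\operatorname{diag}(\sigma'(P(H)))$ is unique. The chain rule then holds with a single $D$, and $\partial\mathcal T_\Theta(H)=D\,\partial P(H)$, at least as an inclusion.

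For any element $M=DN$, the Euler identity gives $|M\operatorname{vec}(H)|=|D\operatorname{vec}(P(H))|$. Entrywise this equals $|\sigma'(t)\,t|$ with $t$ the corresponding entry of $P(H)$. Strong $\mu$-subhomogeneity of the scalar map $\sigma$ states that $|\sigma'(t)|\,|t|\leq\mu\,\sigma(t)$ for every real $t$. Adding $B_\theta(X)\geq0$ only increases the right-hand side, so
\[
|M\operatorname{vec}(H)|\leq\mu\,\sigma(P(H))\leq\mu\,\mathcal T_\Theta(H).
\]
Taking convex combinations preserves this bound, because the absolute value is convex and the right-hand side does not depend on $M$.

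\emph{Main obstacle.} I expect the hardest step to be the Clarke-calculus justification: that every element of $\partial P(H)$ satisfies the Euler identity, and that the chain-rule inclusion holds for vector-valued nonsmooth $P$. My first approach is to exploit smoothness of $\sigma$, which makes the chain rule an equality, $\partial(\sigma\circ P)(H)=D\,\partial P(H)$. If that route proves delicate, I would instead use the limiting-Jacobian characterization directly on $\mathcal T_\Theta$. At differentiability points of $P$, $\mathcal T_\Theta$ is differentiable with Jacobian $DJ_P$, and the desired bound holds there pointwise; it then passes to limits and convex hulls by the same argument as for the Euler identity.
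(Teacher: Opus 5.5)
Your proof is correct. Its skeleton matches the paper's: an entrywise shifted-$\tanh$ outer map, composed with the positively $1$-homogeneous propagation of Lemma~\ref{lem:sheaf-homogeneous}, plus a nonnegative constant that only enlarges the right-hand side.

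The difference lies in the central composition step. The paper does not prove it. It lifts the scalar strong subhomogeneity to the entrywise map $\Sigma_{\beta_{\mathrm{shift}}}$ and then invokes the first composition rule of \citet[Lemma~4.1]{sittoni2024subhomogeneous}, which pairs a homogeneous inner map with a strongly subhomogeneous outer map. You instead prove that rule for the case at hand. You use the Euler identity $N\operatorname{vec}(H)=\operatorname{vec}(P(H))$ for every $N\in\partial P(H)$, together with the chain rule for a smooth outer map. The citation is shorter and would also cover nonsmooth Lipschitz outer activations. Your version is self-contained and makes explicit why the signed output of $P$ is harmless. The Euler identity never needs $P(H)\geq0$, and because the outer Jacobian is diagonal, the scalar inequality $|\sigma'(t)|\,|t|\leq\mu\,\sigma(t)$ is all that is used.

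The step you flag as the main obstacle closes cleanly here. Since $\sigma'=\operatorname{sech}^2>0$, and the entries of $\mathcal T_\Theta(H;X,G)-B_\theta(X)$ lie in $(\beta_{\mathrm{shift}}-1,\beta_{\mathrm{shift}}+1)$, you can write $P(\cdot)=\operatorname{arctanh}\bigl(\mathcal T_\Theta(\cdot;X,G)-B_\theta(X)-\beta_{\mathrm{shift}}\mathbf 1\bigr)$ entrywise, with a smooth inverse. Hence $\mathcal T_\Theta$ and $P$ are differentiable at exactly the same points. There $J_{\mathcal T_\Theta}=\operatorname{diag}(\sigma'(P))\,J_P$, and the invertible diagonal factor converges to $D$. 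Along any sequence of common differentiability points, $J_{\mathcal T_\Theta}(H_m)$ converges if and only if $J_P(H_m)$ does, with limits related by $M=DN$. Taking convex hulls gives $\partial\mathcal T_\Theta(H)=D\,\partial P(H)$ without appealing to any general Clarke chain rule.
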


\begin{proof}
Define the entrywise outer map
\[
\Sigma_{\beta_{\mathrm{shift}}}(Z)
:=
\tanh(Z)+\beta_{\mathrm{shift}}\mathbf 1_{n\times c}.
\]
By assumption, the scalar map
$\sigma_{\beta_{\mathrm{shift}}}$ is positive and strongly
$\mu$-subhomogeneous. Since $\Sigma_{\beta_{\mathrm{shift}}}$ acts
entrywise, its generalized Jacobians are diagonal, so the scalar property
lifts to the matrix-valued map: $\Sigma_{\beta_{\mathrm{shift}}}$ is
positive and strongly \(\mu\)-subhomogeneous, hence also
\(\mu\)-subhomogeneous.

Apply \citet[Lemma~4.1]{sittoni2024subhomogeneous} with the homogeneous inner map
$\mathcal P_\Theta(\cdot;G)$, homogeneity degree $1$, and outer map
$\Sigma_{\beta_{\mathrm{shift}}}$. Its first composition
rule gives
\[
\Sigma_{\beta_{\mathrm{shift}}}\circ\mathcal P_\Theta
\in\operatorname{subhom}_{\mu}
\bigl(\mathbb R_{++}^{n\times c}\bigr).
\]
This is the central composition step: strong subhomogeneity is required only of
the entrywise outer branch, while the complete signed adaptive operator is
concluded to be ordinarily \(\mu\)-subhomogeneous.

Finally, set \(b_X:=B_\theta(X)\geq0\). The complete update is
\[
\mathcal T_\Theta(H;X,G)
=
(\Sigma_{\beta_{\mathrm{shift}}}\circ\mathcal P_\Theta)(H)+b_X.
\]
The constant \(b_X\) contributes no generalized-Jacobian term. Therefore, for
every
$M\in\partial(\Sigma_{\beta_{\mathrm{shift}}}\circ\mathcal P_\Theta)(H)$,
\[
|M\operatorname{vec}(H)|
\leq
\mu\,\operatorname{vec}\bigl(
(\Sigma_{\beta_{\mathrm{shift}}}\circ\mathcal P_\Theta)(H)
\bigr)
\leq
\mu\,\operatorname{vec}\bigl(\mathcal T_\Theta(H;X,G)\bigr).
\]
Hence $\mathcal T_\Theta(\cdot;X,G)$ is $\mu$-subhomogeneous. It is
positive because the shifted outer map is positive and $b_X\geq0$, and it is
Lipschitz because the adaptive propagation and scalar branch are Lipschitz and
$b_X$ is constant.
\end{proof}

For the shifted-\(\tanh\) family, the scalar condition is
\[
|t|\operatorname{sech}^2(t)
\leq
\mu\bigl(\tanh(t)+\beta_{\mathrm{shift}}\bigr).
\]
The value $\beta_{\mathrm{shift}}=1.2,\mu=0.99$ is reported by
\citet{sittoni2024subhomogeneous} and supports the equilibrium update used in
the experiments.

\begin{proof}[Completion of the proof of Theorem~\ref{thm:sheafdeq}]
By Lemmas~\ref{lem:restriction-scale}, \ref{lem:sheaf-homogeneous}, and
\ref{lem:complete-subhom},
\(\mathcal T_\Theta(\cdot;X,G)\) is positive, Lipschitz, and
\(\mu\)-subhomogeneous on the positive cone. After vectorizing the state, the
claim follows directly from
\citet[Theorem~3.7]{sittoni2024subhomogeneous}, including uniqueness and global
linear convergence of the Picard iteration in the entrywise infinity norm,
where $\|H\|_\infty=\max_{i,j}|H_{ij}|$.
\end{proof}

\subsection{Proof of Corollary~\ref{cor:asynchronous}}
\label{app:asynchronous-proof}

\begin{proof}
Let
\[
    \mathcal C_X
    :=
    \left\{
        U : e^U\in\mathcal K_X
    \right\},
    \qquad
    \mathcal Q_\Theta(U)
    :=
    \log\mathcal T_\Theta(e^U;X,G).
\]
By Theorem~\ref{thm:sheafdeq},
\[
    \left\|
        \mathcal Q_\Theta(U)-\mathcal Q_\Theta(\widehat U)
    \right\|_\infty
    \leq
    \mu
    \left\|
        U-\widehat U
    \right\|_\infty,
    \qquad
    U,\widehat U\in\mathcal C_X,
\]
for some \(0<\mu<1\). Moreover,
\(U^\star:=\log H^\star\) is the unique fixed point of \(\mathcal Q_\Theta\).

Whenever a node updates, its new state belongs to its corresponding block of
\(\mathcal K_X\), independently of the states used in the update. Since every
node updates within \(B_{\mathrm{async}}\) steps, \(H(t)\in\mathcal K_X\) for
all \(t\geq B_{\mathrm{async}}\).
The bounded-delay condition then ensures that every stale view used after time
\(2B_{\mathrm{async}}-1\) also belongs to \(\mathcal K_X\).

For \(t\geq2B_{\mathrm{async}}-1\), define
\[
    e_i(t)
    :=
    \left\|U_i(t)-U_i^\star\right\|_\infty,
    \qquad
    E_{\max}(t)
    :=
    \max_{\substack{t-B_{\mathrm{async}}+1\leq s\leq t\\1\leq i\leq n}}
    e_i(s).
\]
If node \(i\) updates at time \(t\), contractivity gives
\[
    e_i(t+1)
    \leq
    \mu
    \max_j e_j\!\left(\tau_j^i(t)\right)
    \leq
    \mu E_{\max}(t).
\]
If it does not update, \(e_i(t+1)=e_i(t)\). Consequently, \(E_{\max}(t)\) is
nonincreasing.

During the interval
\(\{t,\ldots,t+B_{\mathrm{async}}-1\}\), every node updates at least once.
Hence, from time \(t+B_{\mathrm{async}}\) onward, every current block has
error at most \(\mu E_{\max}(t)\). Once the preceding values have also left
the delay window, we
obtain
\[
    E_{\max}(t+2B_{\mathrm{async}}-1)\leq\mu E_{\max}(t).
\]
Iterating this inequality yields
\[
    E_{\max}\!\left(t+\nu(2B_{\mathrm{async}}-1)\right)
    \leq
    \mu^\nu E_{\max}(t)
    \longrightarrow 0.
\]
Therefore \(U_i(t)\to U_i^\star\) for every node \(i\), and thus
\(H(t)\to H^\star\).
\end{proof}

\section{\rev{Comparison Architectures}}
\label{app:baselines}

This appendix documents the implementation of every \rev{comparison model} used in our comparison
against SheafDEQ. Throughout, $H\in\mathbb R^{n\times c}$ denotes node \rev{representations},
$X\in\mathbb R^{n\times f}$ node features, and $A$ is the raw adjacency matrix.
We write $D=\operatorname{diag}(A\mathbf1)$ and
$\widehat A=D^{-1/2}AD^{-1/2}$. For models that add self-loops, let
$A^+=A+I$, $D^+=\operatorname{diag}(A^+\mathbf1)$, and
$\widehat A^+=(D^+)^{-1/2}A^+(D^+)^{-1/2}$. All
models share the training protocol (Adam, warmup-stable-decay learning-rate schedule,
gradient clipping at norm~$1$, cross-entropy or MSE loss) except where a model requires
a different inner solver, noted explicitly below.

\subsection{Feature-only MLP}
A depth-parity MLP with no access to $A$, used as a graph-agnostic lower
bound. It has $K$ total linear layers: an input projection
$\mathbb R^f\to\mathbb R^c$, $K-2$ hidden maps
$\mathbb R^c\to\mathbb R^c$, and an output projection
$\mathbb R^c\to\mathbb R^o$. ReLU and dropout
$p_{\mathrm{drop}}=0.5$ are applied between hidden layers. It is trained by
ordinary backpropagation.

\subsection{APPNP family}
\label{app:appnp}
Following Sittoni--Tudisco, verified against the authors' released code
(\texttt{COMPiLELab/SubDEQ}). An encoder $H^{(0)}=B_\theta(X)$---a two-layer
MLP of hidden width $c$ with ReLU and dropout $p_{\mathrm{drop}}=0.5$ applied
to the raw input and the post-ReLU hidden representation---is propagated for
a fixed $K$ steps with teleport weight $\alpha$:
\[
\begin{aligned}
H^{(0)}&=B_\theta(X),\\
U^{(k)}&=(1-\alpha)\widehat A^+H^{(k)}+\alpha H^{(0)},\\
H^{(k+1)}&=
\begin{cases}
U^{(k)}, & \textsc{appnp},\\[2pt]
\tanh(U^{(k)})+\beta_{\mathrm{shift}},
& \textsc{shifted-tanh},
\end{cases}
\end{aligned}
\]
All variants use \(\alpha=0.1\) and the dataset-dependent iteration budget
\(K\) reported in Table~\ref{tab:model-configs}. We evaluate standard APPNP
and shifted-$\tanh$ variants with
\(\beta_{\mathrm{shift}}\in\{1.25,1.65\}\). Reported results use the finite
\(K\)-step unroll and ordinary backpropagation through its iterations. The
standard APPNP recurrence also admits the closed-form equilibrium
\[
H^\star
=
\alpha\bigl[I-(1-\alpha)\widehat A^+\bigr]^{-1}H^{(0)}.
\]
This expression is implemented as an alternate diagnostic solver and agrees
with 200-step Picard iteration to approximately \(10^{-6}\). A linear readout
maps the propagated representation to the task output.

\subsection{IGNN}
Following \citet{gu2020implicit}, we implement IGNN and verify it against the
official implementation (\texttt{SwiftieH/IGNN}). Its fixed point is
\[
H^\star = \phi\!\big(\widehat A^+H^\star W_{\mathrm{rec}} + B_\theta(X)\big),
\qquad
\widehat Y = r_\omega(H^\star),
\]
with $\phi=\mathrm{ReLU}$ (the canonical non-expansive activation; matches the official
code, not $\tanh$). \textbf{Well-posedness} is enforced structurally: after every
optimizer step, the columns of $W_{\mathrm{rec}}$ are projected onto an $\ell_1$ ball of
radius $\kappa/\widehat\rho_{\mathrm{PI}}(\widehat A^+)$ ($\kappa=0.9$,
$\widehat\rho_{\mathrm{PI}}(\widehat A^+)$ estimated by 20 power iterations,
$\approx 1$), guaranteeing $\phi(\widehat A^+\,\cdot\,W_{\mathrm{rec}})$
is a contraction and the fixed point unique. \textbf{Forward} solve: Picard iteration to
tolerance ($\mathrm{tol}=10^{-6}$, $\mathrm{max\_iter}=50$) under \texttt{no\_grad}.
\textbf{Backward}: exact implicit differentiation via an adjoint Picard iteration
solving $(I-J_\phi^\top)\delta=g$ for the upstream gradient $g$ --- \emph{not}
backpropagation through the forward unroll. The resulting hidden
representation is row-wise $\ell_2$-normalized before the linear readout.

\subsection{EIGNN}
\citep{liu2021eignn}, verified against the official implementation
(\texttt{liu-jc/EIGNN}). Linear fixed point with a learned positive-semidefinite
feature-propagation matrix,
\[
H^\star = \widehat A H^\star M(F) + B_\theta(X),
\qquad
M(F) = \frac{\gamma F^\top F}{\|F^\top F\|_F+\varepsilon_F},
\qquad
\widehat Y=r_\omega(H^\star),
\]
with $F\in\mathbb R^{c\times c}$ learnable, $\gamma=0.9$ (\emph{fixed, not learned} ---
matches official code), $\varepsilon_F=10^{-12}$. \textbf{Solved in closed form}, not
by iteration: with $\widehat A=Q_A\Lambda_AQ_A^\top$ (cached per graph via eigendecomposition,
$\mathcal O(n^3)$ once) and $M(F)=Q_M\Lambda_MQ_M^\top$ (recomputed every forward call,
$\mathcal O(c^3)$, cheap),
\[
[{H}_{\mathrm{spec}}]_{ij}
= \frac{[{B}_{\mathrm{spec}}]_{ij}}{1-(\Lambda_A)_{ii}(\Lambda_M)_{jj}},
\qquad
{B}_{\mathrm{spec}}=Q_A^\top B_\theta(X) Q_M,
\qquad
H^\star = Q_A{H}_{\mathrm{spec}}Q_M^\top.
\]
Verified numerically against $500$ steps of direct fixed-point iteration
($\|\cdot\|_\infty$ agreement $\sim 10^{-6}$). \textbf{Backward}: ordinary
autograd through the (differentiable) eigendecomposition of the small matrix $M(F)$;
gradients through the graph-scale eigendecomposition of $\widehat A$ are not needed since $\widehat A$ is
treated as a constant (cached, non-trainable). No implicit differentiation or
Picard unrolling is used for the reported EIGNN results.

\subsection{EnergyGNN (edge-wise)}
\citep{solodova2025graph}, reconstructed from the published paper (no official code was
released). Node embeddings are the unique minimizer of a strongly convex energy,
\[
\begin{aligned}
H^\star
&=\arg\min_H E_\theta(G,H),\\
E_\theta(G,H)
&=\sum_i\Big[u_\theta(m_i,H_i,x_i)+\tfrac{\beta_{\mathrm E}}2\|H_i\|_2^2\Big],\\
m_i
&=\sum_{j\in\mathcal N(i)}m_\theta(H_i,H_j,x_i,x_j,a_{ij})
  +m_{\mathrm{self},\theta}(H_i,x_i).
\end{aligned}
\]
with $\beta_{\mathrm E}=0.04$ (the code parameter is named
\texttt{beta}). Neighbour aggregation uses the \emph{raw}, self-loop-free
adjacency; the self-term is an independently parameterized function, not a shared
copy of $m_\theta$. Both $m_\theta$, $m_{\mathrm{self},\theta}$, and $u_\theta$ are
partially input-convex networks (PICNNs, \citealp{amos2017input}): convex and
non-decreasing in their $H$-arguments via a $\mathrm{softplus}$-nonnegative weight
path, unconstrained in $(x,a)$; edge features $a_{ij}$, when used, enter only this
unconstrained path, so convexity in $H$ is unaffected. Hidden width $4$, output width
$2$ (message) / $1$ (update), the paper's reported sizes. \textbf{Well-posedness} follows
from convexity of the PICNN composition plus the strongly-convex
$\beta_{\mathrm E}$ term.
\textbf{Forward}: L-BFGS to tolerance ($\mathrm{tol}=10^{-5}$, $\mathrm{max\_iter}=50$,
strong-Wolfe line search); convergence is \emph{not} assumed --- the gradient-norm
residual and iteration count are logged, and non-convergence is reported rather than
silently accepted. \textbf{Backward}: exact implicit differentiation via the energy
Hessian, solved with conjugate gradients using only Hessian--vector products (never
materializing $\nabla^2_{HH}E_\theta$); verified against a dense double-precision
Hessian reference to $\sim 10^{-16}$ relative error on a small graph. Because
$\mathrm{ReLU}$-based PICNNs are piecewise linear, curvature away from activation
kinks comes predominantly from the $\beta_{\mathrm E}$ term, which can slow L-BFGS convergence on
some graphs; this is a property of the architecture, not the solver implementation.

\subsection{Deep Neural Sheaf Diffusion}
\label{app:dnsd}

We use the finite-depth Deep Neural Sheaf Diffusion (DNSD) architecture of
\rev{\citet{bourgerie2026deep}}. In the notation of this paper, each node \rev{representation}
has total dimension $c=dq$, where $d$ is the stalk dimension and $q$ is the
number of feature channels per stalk. A linear encoder maps the input features
to $\mathbb R^c$, and a linear decoder maps the final \rev{representation} to the task
output.

Each of the 12 untied diffusion layers constructs separate source and target
restriction maps from the ordered endpoint \rev{representations}. For an oriented edge
$u\to v$, the two builders receive $h_u^{(\ell)}\Vert h_v^{(\ell)}$. In the
full variant, each builder is a linear map
$\mathbb R^{2c}\to\mathbb R^{d^2}$ whose output is reshaped into a
$d\times d$ matrix. In the diagonal variant, it maps
$\mathbb R^{2c}\to\mathbb R^d$ and the output forms the diagonal of the
restriction matrix. Source and target builders are distinct, and their
parameters are not shared across layers.

The resulting maps define the directional sheaf-adjacency operator used by
the released DNSD implementation (\texttt{directional=True}); the layer uses
this adjacency propagation rather than a sheaf-Laplacian residual update. The
propagated representation is passed through either $\tanh$ or ReLU. A
per-layer sigmoid gate acts on each length-$q$ channel vector within the
stalk-stacked representation (\texttt{use\_stalk\_gate=True}), followed by
LayerNorm over the flattened $c$-dimensional node \rev{representation}
(\texttt{layer\_norm\_mode=embedding}).

\rev{On the community benchmark, we evaluate the four combinations of
diagonal or full restriction maps and $\tanh$ or ReLU activation, with
$c=48$, $d=3$, and $q=16$.} All configurations use 12 layers and the common
Adam training protocol with learning rate $10^{-2}$, weight decay
$5\times10^{-4}$, at most 2000 epochs, and early-stopping patience 500.

\subsection{GraphAttnDEQ}
\label{app:graphattndeq}

GraphAttnDEQ is a mechanism-isolating ablation of SheafDEQ. It preserves the
\rev{weight-tied recurrent skeleton, fixed input drive, and additive shift, but
does not normalize the recurrent state. It} replaces the learned sheaf \rev{propagation} with
multi-head softmax graph attention recomputed from the current iterate:
\[
\rev{H^{(t+1)}
=
\sigma\!\left(\operatorname{Attn}_\theta(H^{(t)},A^+)W\right)
+f_\theta(X)+y
},
\qquad
H^{(0)}=\frac{\mathbf 1}{\sqrt c}.
\]
Self-loops are included in \(A^+\). For head \(m\),
\[
\begin{aligned}
\operatorname{Attn}_\theta(H)_i
&=\mathop{\|}_{m=1}^{M}
  \sum_{j\in\mathcal N(i)\cup\{i\}}
  \alpha_{ij}^{(m)}V^{(m)}h_j,\\
\alpha_{ij}^{(m)}
&=\operatorname{softmax}_{j}\!\left(
\frac{(Q^{(m)}h_i)^\top(K^{(m)}h_j)}{\sqrt{c/M}}
\right).
\end{aligned}
\]
The symbol \(\|\) denotes concatenation across heads. The per-head query,
key, and value projections are bias-free, and a single output matrix
\(W\in\mathbb R^{c\times c}\) is applied after concatenation. This replaces
SheafDEQ's \(W_1,W_2\) pair because the attention projections already supply
the feature-transform capacity otherwise provided inside the stalk space.

The head count is matched to the SheafDEQ stalk dimension, \(M=d\), so each
head has width \(c/M=q\). The dataset-dependent \(c\), \(K\), and \(M\) values
are given in Table~\ref{tab:model-configs}; for community detection,
\(c=48\), \(M=3\), \(c/M=16\), and \(K=20\). All reported variants use
\(\tanh\) and the same additive-shift sweep as SheafDEQ.

When edge features \(a_{ij}\) are enabled, a learned per-head linear embedding
is added to both the key and value for the corresponding edge, following
\texttt{TransformerConv}'s edge-conditioning convention
\citep{shi2021masked}. Self-loop edges receive a zero edge embedding. This
option is disabled by default and enabled only through an explicit
\texttt{edge\_dim} argument.

\paragraph{Pre-attention normalization and fixed-point convergence.}
Consider a GraphAttnDEQ variant that uses the scale-invariant row-wise
normalization principle of Section~\ref{sec:adaptive-equilibrium} only before
attention-coefficient inference. Specifically, let
\([\overline H]_{i,:}=H_{i,:}/\|H_{i,:}\|_p\), and let
\(\mathcal P_{\mathrm{att}}(H;A^+)\) denote the multi-head propagation above
with queries and keys evaluated at \(\overline H\), but values evaluated at
the original state \(H\). For every \(s>0\),
\[
\overline{sH}=\overline H
\ \Longrightarrow\
\alpha_{ij}^{(m)}(\overline{sH})=\alpha_{ij}^{(m)}(\overline H)
\ \Longrightarrow\
\mathcal P_{\mathrm{att}}(sH;A^+)=s\mathcal P_{\mathrm{att}}(H;A^+).
\]
Thus the attention propagation is positively \(1\)-homogeneous. Taking
\(\sigma=\tanh\) and \(y=\beta_{\mathrm{shift}}\), under the remaining
assumptions of Theorem~\ref{thm:sheafdeq}---the propagation is Lipschitz, the
fixed input drive satisfies \(f_\theta(X)\geq0\) entrywise, and
\(\beta_{\mathrm{shift}}\geq1.2\)---
Lemma~\ref{lem:complete-subhom} and the proof of
Theorem~\ref{thm:sheafdeq} give a unique entrywise-positive fixed point and
global linear convergence of Picard iteration from every entrywise-positive
initialization. This observation requires edge
features to affect only attention-coefficient inference, rather than introduce
additive terms into the transported values.

\rev{The GraphAttnDEQ evaluated here does not use this normalization and is
therefore not covered by this argument; no well-posedness guarantee is claimed
for its reported configurations.} It also omits the LayerNorm, feedforward block,
and residual connections used by the Loop Transformer. It is trained by
ordinary backpropagation through the \(K\)-step unroll; no implicit
differentiation is used.

\subsection{Loop Transformer}
\label{app:looptransformer}

Inspired by recurrent-depth Transformers, which repeatedly apply a shared
Transformer block~\citep{geiping2026recurrent}, we introduce a graph-masked,
finite-horizon Loop Transformer as a \rev{control} without an equilibrium
certificate. A single Transformer block---graph-masked multi-head attention
(\texttt{TransformerConv}, restricted to \(\mathcal N(i)\), with learned skip
gate \texttt{beta=True}) followed by a GELU feedforward network, each with a
residual connection and post-LN LayerNorm---is
applied with \emph{tied weights} for \(K\) iterations:
\[
H^{(0)}=B_\theta(X),
\qquad
H^{(k+1)} = \mathrm{Block}\big(H^{(k)}, A, \mathbf a_E\big),
\qquad
\widehat Y = r_\omega\big(H^{(K)}\big),
\]
with the dataset-dependent $K$ and head count in
Table~\ref{tab:model-configs} (the head count is matched to the corresponding
stalk dimension $d$ for comparability). Here
$\mathbf a_E=(a_e)_{e\in E}$ denotes the collection of edge features.
The FFN has width $2\times$hidden and dropout $p_{\mathrm{drop}}=0.1$.
Attention is recomputed from the current
$H^{(k)}$ at every step. Edge features enter natively via the attention scoring
function. When no real edge features are available, a zero dummy edge
attribute of dimension one is supplied to activate the learned skip gate on
node-feature differences. \textbf{No well-posedness guarantee of any kind} --- no contraction, spectral,
or convexity constraint is imposed, and there is no equilibrium-solve diagnostic;
this model is included specifically to test whether an expressive but theoretically
unconstrained loop is competitive with the constrained alternatives above. Trained by
ordinary backpropagation through the $K$-step unroll (parameter count independent of
$K$, verified empirically).

\subsection{Summary of solver/gradient mechanisms}
Table~\ref{tab:solver-summary} summarizes what most clearly distinguishes each
\rev{comparison model's} training mechanism from plain backpropagation through an unrolled loop.

\begin{table}[h]
\centering
\small
\begin{tabular}{lll}
\toprule
Model & Forward solve & Backward gradient \\
\midrule
MLP & feedforward (no iteration) & autograd \\
APPNP family & dataset-dependent $K$ Picard steps & unrolled autograd \\
IGNN & Picard to tolerance & implicit (adjoint Picard) \\
EIGNN & closed-form (spectral) & autograd through $\mathrm{eigh}(M(F))$ \\
EnergyGNN (edge) & L-BFGS to tolerance & implicit (CG + Hessian--vector products) \\
GraphAttnDEQ & dataset-dependent $K$ weight-tied steps & unrolled autograd \\
Loop Transformer & dataset-dependent $K$ weight-tied steps & unrolled autograd \\
\bottomrule
\end{tabular}
\caption{Forward/backward mechanism per \rev{comparison model}. IGNN, EIGNN, and EnergyGNN use
implicit or closed-form solutions specifically to avoid backpropagating through an
unrolled iterate trajectory; the APPNP family, GraphAttnDEQ, and Loop
Transformer are \rev{finite-horizon}, weight-tied \rev{models} trained by ordinary
unrolled backpropagation.}
\label{tab:solver-summary}
\end{table}

\section{Experimental Details}
\label{app:experiments}

This appendix provides the information needed to reproduce all results reported
in the paper. We describe the datasets (Section~\ref{app:datasets}), data
splits (Section~\ref{app:splits}), model architectures
(Section~\ref{app:model-configs}), model parameter counts
(Section~\ref{app:params}), training procedure
(Section~\ref{app:training}), hyperparameter selection
(Section~\ref{app:hparam}), compute infrastructure
(Section~\ref{app:compute}), and dataset-specific notes
(Sections~\ref{app:community-notes}--\ref{app:benchmark-notes}).

\subsection{Datasets}
\label{app:datasets}

\paragraph{Community detection.}
We use the synthetic community detection benchmark from
\citet{bourgerie2026deep}. Each instance is a graph with $n = 1500$ nodes
partitioned into $C = 3$ balanced communities of 500 nodes each. Node features
are sampled from two-dimensional isotropic Gaussians with standard deviation
$\sigma_{\mathrm{obs}}=3.0$ and community means $(0,0)$, $(1,0)$, and $(0,1)$. These
strongly overlapping distributions provide weak but nonzero class information.
At level $L_{\mathrm{hetero}}=0$, each community forms a separate
$k_{\mathrm{NN}}$-nearest-neighbour graph with $k_{\mathrm{NN}}=8$.
Each subsequent level rewires an additional $10\%$ of the original
within-community edges to uniformly sampled cross-community endpoints, while
preserving the edge count.

We report levels $L_{\mathrm{hetero}}\in\{1,3,5,7,9,10\}$. The level orders the amount of
cross-community mixing, from mostly within-community connectivity at low
$L_{\mathrm{hetero}}$ to fully rewired connectivity at
$L_{\mathrm{hetero}}=10$; it should not be read as a monotone
difficulty index.

The training set consists of 6 independently generated graphs (graph-generation
seeds 42--47) of 1500 nodes each; the test set consists of 3 graphs
(graph-generation seeds 100--102). Accuracy is evaluated on all nodes of all
test graphs jointly. These graph-generation seeds define the data and are
distinct from the five parameter-initialization seeds used to aggregate the
reported results.
There is no separate validation graph. Validation masks within the training
graphs are used for early stopping, as detailed in
Section~\ref{app:splits}.

\paragraph{Counting.}
A collection of 50 chain graphs of lengths 1 to 50 nodes. The regression
target at each node is the total length of its chain (an integer from 1 to
50). Degree-based node features provide local structural information, while
the model must infer the global chain length. We report negative MSE (higher
is better).

\paragraph{Sums.}
A collection of 2000 chain graphs, each of exactly 50 nodes, with binary node
features drawn uniformly from $\{0, 1\}$. The target at each node is the sum
of all binary features in its chain. The model must propagate and aggregate
information across up to 49 hops. We report negative MSE (higher is better).

\paragraph{MNIST Terrain.}
A multi-agent classification benchmark derived from MNIST. Each image is
modelled as a 10-node graph where each node represents one pixel-agent
observing a sampled pixel value and its location, with communication determined
by pixel proximity. The task is binary classification (digit 0 vs.\ digit 1).
We report test accuracy.

\paragraph{Coordinates.}
A regression benchmark with 1500 graphs of 20 nodes each. Nodes are placed
uniformly at random in $[0, 1]^2$ and connected by a
$k_{\mathrm{NN}}$-nearest-neighbour graph. Nodes receive one-hot identifiers,
and edge attributes encode pairwise Euclidean distances. The target is each
node's 2D position, evaluated through a rigid-motion-invariant edge-distance
loss; its negative is reported so that higher is better.

\subsection{Data Splits}
\label{app:splits}

Counting, Sums, and MNIST Terrain use fixed transductive node-level
training, validation, and test masks. Coordinates instead uses graph-level
splits, with test graphs held out from training. Split definitions and seeds
are shared across the five parameter-initialization runs.

All four benchmarks use a 60/20/20 train/validation/test split. The resulting
node counts are reported in Table~\ref{tab:distributed-splits}. For
Coordinates, the split is performed over graphs: 900 training, 300 validation,
and 300 test graphs, corresponding to the node counts shown in the table.

\begin{table}[t]
\centering
\small
\caption{Train, validation, and test split sizes for the distributed
benchmarks. The first three tasks use node-level masks; Coordinates uses a
graph-level split.}
\label{tab:distributed-splits}
\begin{tabular}{@{}lrrrr@{}}
\toprule
Dataset & Total nodes & Train & Validation & Test \\
\midrule
Counting      & 1{,}275 & 765    & 255    & 255 \\
Sums          & 100{,}000 & 60{,}000 & 20{,}000 & 20{,}000 \\
MNIST Terrain & 147{,}800 & 88{,}680 & 29{,}560 & 29{,}560 \\
Coordinates   & 30{,}000 & 18{,}000 & 6{,}000 & 6{,}000 \\
\bottomrule
\end{tabular}
\end{table}

All reported community-detection results use the multi-graph protocol. The
training data comprise six independently generated graphs (seeds 42--47), and
the test data comprise three separate graphs (seeds 100--102). All nodes
participate in the graph structure and forward computation. An 80/20 split of
each training graph's nodes determines the cross-entropy training loss and the
validation metric used for early stopping, respectively. Test accuracy is
evaluated on the three separate test graphs, which are never used during
training or validation.

\subsection{Model Configurations}
\label{app:model-configs}

\rev{DNSD is evaluated on the community benchmark using the 12-layer
configurations documented in Appendix~\ref{app:dnsd}.} The other compared
architectures use the total \rev{representation} dimension $c$ and iteration
budget $K$ in Table~\ref{tab:model-configs} within each dataset. For SheafDEQ, the
\rev{representation} is organized as $c=dq$, where $d$ is the stalk dimension and $q$
is the number of sheaf feature channels.

\begin{table}[h]
\centering
\small
\caption{Per-dataset configuration used by the recurrent comparison models;
\rev{DNSD uses separate 12-layer configurations for the community benchmark
(Appendix~\ref{app:dnsd}).}
$c$: total node \rev{representation} dimension. $K$: iteration/layer budget. $d$: stalk
dimension for SheafDEQ; $d$ also sets the GraphAttnDEQ and Loop Transformer
head counts.
$q=c/d$: number of sheaf feature channels.}
\label{tab:model-configs}
\begin{tabular}{@{}lrrrrr@{}}
\toprule
Dataset & $c$ & $K$ & $d$ & $q$ & Metric \\
\midrule
Counting            & 6  & 50  & 3 & 2  & MSE   \\
Sums                & 6  & 50  & 3 & 2  & MSE   \\
MNIST Terrain       & 16 & 10  & 2 & 8  & Acc.  \\
Coordinates         & 16 & 10  & 2 & 8  & MSE   \\
Community (all $L_{\mathrm{hetero}}$) & 48 & 20  & 3 & 16 & Acc.  \\
\bottomrule
\end{tabular}
\end{table}

\paragraph{SheafDEQ.}
The fixed drive $B_\theta(X)$ is a single linear layer followed by ReLU,
mapping node features to the total \rev{representation} dimension:
$B_\theta:\mathbb R^f\to\mathbb R^c$. For sheaf propagation, its output is
reshaped from $n\times c$ into $nd\times q$.

The restriction-map generators $g_{\psi_{\mathrm{src}}}$ and
$g_{\psi_{\mathrm{tgt}}}$ are separate two-layer MLPs, one per incidence
direction, and both are weight-tied across iterations. Each MLP takes the concatenation
of two endpoint representations (each of total dimension $c$) plus edge features
when present, and outputs a $d\times d$ restriction matrix:
\[
g_{\psi_{\mathrm{src}}},g_{\psi_{\mathrm{tgt}}}
:\mathbb R^{2c+d_e}\to\mathbb R^{d\times d},
\]
each with architecture $[2c+d_e]\to c\to d^2$ (ReLU hidden activation).
The output is normalized by its Frobenius norm (\texttt{frobenius\_norm=True}).

The Picard iteration uses $W_1\in\mathbb R^{d\times d}$ to mix stalk
coordinates and $W_2\in\mathbb R^{q\times q}$ to mix sheaf feature channels.
The implementation matches the manuscript convention
\[
\mathcal P_\Theta(H;G)
=\mathscr R^{-1}\!\left(
\mathcal P_{\mathcal F_\psi(H;G)}
(I_n\otimes W_1)\mathscr R(H)W_2
\right).
\]
Thus $W_1$ acts blockwise on the $d$ stalk coordinates through the
Kronecker factor, whereas $W_2$ acts on the $q$ feature channels.
The propagation output is then passed through the shifted activation
$\sigma_{\beta_{\mathrm{shift}}}(t)=\tanh(t)+\beta_{\mathrm{shift}}$ and the
fixed drive $B_\theta(X)$. No output normalization is used.

The predictor is a single linear readout
$r_\omega:\mathbb R^c\to\mathbb R^o$, where $o=C$ for classification,
$o=1$ for scalar regression, and $o=2$ for Coordinates.

Two variants are evaluated:
\begin{itemize}
    \item \textbf{pre-norm=True}: after reshaping to $nd\times q$, each
    length-$q$ row is $\ell_2$-normalized independently across its feature
    channels before its node
    block is passed to both restriction-map generators
    (\texttt{norm\_before\_restriction=True}). This enforces scale invariance
    and is the variant covered by the main theorem.
    \item \textbf{pre-norm=False}: representations are passed directly to the
    restriction-map generators
    without normalization (\texttt{norm\_before\_restriction=False}). This
    ablation breaks the scale-invariance condition.
\end{itemize}

\paragraph{\rev{Comparison-model configurations}.}
The equations, architectural components, solver mechanisms, and gradient
computations for IGNN, EIGNN, EnergyGNN, the APPNP family, GraphAttnDEQ, the
Loop Transformer, and the feature-only MLP are documented once in
Appendix~\ref{app:baselines}. These models use the dataset-dependent dimensions
and computation budgets in Table~\ref{tab:model-configs}; their fixed
hyperparameters and validation-based configuration-selection rule are given
in Appendix~\ref{app:hparam}. \rev{For the community benchmark, DNSD uses
the 12-layer configurations in Appendix~\ref{app:dnsd}.} Benchmark-specific coverage and solver failures are
reported in Appendix~\ref{app:benchmark-notes}.

\subsection{Parameter Counts}
\label{app:params}

Table~\ref{tab:params} reports audited trainable-parameter counts for the
community configuration. The counts are obtained from the current
implementation using \texttt{sum(p.numel() for p in model.parameters())}.
APPNP shift variants have the same count as APPNP because the shift is fixed,
and the two SheafDEQ \rev{normalization} variants likewise have identical
counts.

SheafDEQ's parameters are weight-tied across all $K$ iterations. The counts
reported therefore do not grow with the number of equilibrium iterations. The same
weight-tying principle applies to GraphAttnDEQ and the Loop Transformer: each
recurrent operator is instantiated once and reused for all $K$ steps. DNSD,
in contrast, uses 12 untied layers. Its $\tanh$ and ReLU variants have the
same parameter count for a fixed diagonal or full restriction-map type. The
depth-parity MLP has comparatively many parameters because its $K-2$ hidden
$c\times c$ maps are not tied.

\begin{table*}[h]
\centering
\small
\caption{Audited trainable-parameter counts from the current implementation
for the community configuration. Experimental checkpoints may differ
by at most one $c$-dimensional bias vector (at most 48 parameters) because of
a minor code revision; this bookkeeping difference does not affect any
reported result. APPNP shift variants, DNSD activation variants of a fixed map
type, and the two SheafDEQ \rev{normalization} variants have identical counts.}
\label{tab:params}
\begin{tabular}{@{}lr@{}}
\toprule
Model & Community \\
\midrule
MLP (depth parity)       & 42,675 \\
APPNP                    & 2,691 \\
IGNN                     & 2,643 \\
EIGNN                    & 2,643 \\
EnergyGNN                & 340 \\
GraphAttnDEQ             & 9,555 \\
Loop Transformer         & 21,843 \\
DNSD diagonal            & 11,691 \\
DNSD full                & 25,659 \\
\textsc{SheafDEQ}        & 11,045 \\
\bottomrule
\end{tabular}
\end{table*}

\subsection{Training Procedure}
\label{app:training}

All models are trained with Adam \citep{kingma2015adam}, learning rate
$\eta_{\mathrm{lr}} = 10^{-2}$, and weight decay
$\lambda_{\mathrm{wd}} = 5 \times 10^{-4}$. Training
runs for at most 2000 epochs with early stopping based on the validation
metric (accuracy for classification, negative MSE for regression).

The patience is 200 epochs for the Counting, Sums, and Coordinates
benchmarks, and 500 epochs for MNIST Terrain and all community detection
levels, reflecting their longer convergence times. Early stopping is applied
per seed independently; the best checkpoint (highest validation metric across
all epochs seen) is used for test evaluation.

\paragraph{Gradient computation.}
For SheafDEQ, GraphAttnDEQ, the Loop Transformer, and the APPNP family, gradients are
computed by unrolled backpropagation through $K$ iterations (BPTT). No
implicit differentiation is used. In the predictive experiments,
SheafDEQ therefore returns the $K$th iterate rather than running a
residual-based solver to tolerance; these tables measure predictive performance
at the common iteration budget, not numerical convergence to equilibrium.
For IGNN, the forward pass finds $H^\star$ by Picard iteration and the backward
pass uses implicit differentiation through the fixed-point equation. EIGNN
instead uses its closed-form spectral solution and ordinary autograd through
the eigendecomposition of the learnable feature-side matrix $M(F)$; the cached
graph eigendecomposition is constant, and neither Picard unrolling nor IFT is
used. For EnergyGNN, the backward pass uses IFT through the L-BFGS minimizer.

For IGNN, well-posedness is maintained by projecting the recurrent weight $W$
after every gradient step (\texttt{project\_weights()}).

\paragraph{\rev{Parameter and recurrent-state initialization}.}
All trainable parameters are initialized with Xavier uniform initialization
\citep{glorot2010understanding}. For SheafDEQ and GraphAttnDEQ, the recurrent
\rev{state} is initialized as a unit vector of ones (normalized row-wise to unit
$\ell_2$ norm) regardless of the training seed. The training seed only affects
the model weight initialization.

\paragraph{Reported metrics.}
\rev{Each entry reports the test median $[\min,\max]$ over five independent
parameter-initialization seeds (42, 43, 44, 45, 46). For each seed, test
performance is evaluated at the checkpoint with the best validation metric.} For community detection, these parameter seeds are
separate from the graph-generation seeds used to construct the fixed training
and test graph sets.

\rev{For each model family, we select the configuration with the highest
mean validation performance across five seeds and report its test median
$[\min,\max]$. The full appendix tables list every configuration separately.}

\subsection{Hyperparameter Selection}
\label{app:hparam}

The only hyperparameter swept for SheafDEQ and GraphAttnDEQ is
$\beta_{\mathrm{shift}}\in\{1.2,
1.3, 1.4, 1.6\}$. All other architectural and training parameters are fixed
across datasets within each dataset's model configuration. The best
$\beta_{\mathrm{shift}}$ is defined to be the configuration with the highest mean
validation metric over the 5 parameter-initialization seeds, separately for
each dataset and, for SheafDEQ, each \rev{normalization} variant.
The same validation-based rule is used to select among the explicitly
evaluated APPNP configurations \rev{and, on the community benchmark, DNSD
configurations}.

Apart from these explicitly evaluated configuration families, \rev{comparison-model}
hyperparameters are fixed to the defaults from their respective papers.
Specifically:
IGNN: \texttt{kappa=0.9}, \texttt{max\_iter=50}, \texttt{fp\_tol=1e-6};
EIGNN: \texttt{gamma=0.9};
EnergyGNN: \texttt{beta=0.04}, \texttt{solver\_tol=1e-5}, \texttt{solver\_max\_iter=50};
APPNP: \texttt{alpha=0.1}, \texttt{dropout=0.5};
APPNP-$\beta_{\mathrm{shift}}$: \texttt{alpha=0.1}, \texttt{dropout=0.5}, shift as noted;
GraphAttnDEQ: \(M=d\), \(\tanh\) activation, and \rev{no recurrent-state normalization,} LayerNorm, FFN, or residual connection;
Loop Transformer: \texttt{ffn\_mult=2}, \texttt{dropout=0.1}.

No hyperparameter search was performed for any \rev{comparison model} beyond the above.

\subsection{Compute Infrastructure}
\label{app:compute}

All benchmark experiments were run on the Dardel HPC cluster at the
PDC Center for High Performance Computing\footnote{\url{https://www.pdc.kth.se/}},
using AMD MI250X GPUs. Model/seed combinations were run in parallel. The total
usage across all datasets, models, seeds, and $\beta_{\mathrm{shift}}$
configurations was approximately 500 GPU-hours.

Robustness experiments (\rev{initial-state sensitivity} and continued iteration) were run on a
local CPU machine. Each training run for the robustness experiments used
\texttt{patience=500} and at most 2000 epochs.

For the continued-iteration and bounded-staleness diagnostics, we select the
lowest-index checkpoint for each \rev{evaluated model} achieving at least $80\%$
validation accuracy at its trained inference horizon on
$L_{\mathrm{hetero}}=7$. This criterion selects seed 43 for both SheafDEQ
variants and seed 42 for the Loop Transformer.

Models were implemented in PyTorch with PyTorch Geometric for graph data
handling. All model definitions use a shared training routine.

\subsection{Community Detection: Additional Notes}
\label{app:community-notes}

\begin{table*}[t]
\centering
\caption{Community detection: test accuracy (\%) across rewiring levels L1--L10. \rev{Each entry reports the median $[\min,\max]$ over five parameter-initialization seeds.} GraphAttnDEQ rows show all $\beta_{\mathrm{shift}}$ configurations. \textsc{SheafDEQ} rows show the same configurations ($\beta_{\mathrm{shift}} \in \{1.2, 1.3, 1.4, 1.6\}$) for both the pre-restriction-normalized variant (covered by the conditional contraction guarantee) and its ablation. \rev{Wide ranges} at intermediate levels indicate substantial variation across parameter-initialization seeds, but the aggregate statistics do not distinguish optimization variability from \rev{differences in the finite-iteration states reached at inference}. \rone{1st}, \rtwo{2nd}, \rthree{3rd} best per column.}
\label{tab:community-full}
\resizebox{\textwidth}{!}{%
\begin{tabular}{@{}l l c c c c c c@{}}
\toprule
\multicolumn{2}{l}{\textbf{Model}} & \textbf{L1} & \textbf{L3} & \textbf{L5} & \textbf{L7} & \textbf{L9} & \textbf{L10} \\
\midrule
  \multicolumn{2}{l}{MLP} & 33.6\,\scriptsize{[33.3,34.8]} & 33.3\,\scriptsize{[33.3,37.4]} & 33.3\,\scriptsize{[32.8,33.4]} & 33.3\,\scriptsize{[31.5,33.5]} & 33.3\,\scriptsize{[33.3,34.0]} & 33.3\,\scriptsize{[33.3,36.3]} \\
  \multicolumn{2}{l}{APPNP} & 39.4\,\scriptsize{[39.3,39.8]} & 38.6\,\scriptsize{[38.5,39.1]} & 37.8\,\scriptsize{[36.3,38.9]} & 37.9\,\scriptsize{[37.7,37.9]} & 38.4\,\scriptsize{[37.1,38.7]} & 38.0\,\scriptsize{[37.8,38.4]} \\
  \multicolumn{2}{l}{APPNP-$\beta_{\mathrm{shift}}{=}1.25$} & 40.0\,\scriptsize{[39.4,40.5]} & 40.8\,\scriptsize{[39.4,40.8]} & 40.7\,\scriptsize{[40.6,41.0]} & 40.1\,\scriptsize{[39.8,40.7]} & 40.2\,\scriptsize{[39.8,40.7]} & 40.0\,\scriptsize{[39.5,40.2]} \\
  \multicolumn{2}{l}{APPNP-$\beta_{\mathrm{shift}}{=}1.65$} & 40.6\,\scriptsize{[40.5,40.9]} & 40.8\,\scriptsize{[40.6,40.9]} & 40.3\,\scriptsize{[39.0,40.8]} & 40.3\,\scriptsize{[40.0,40.8]} & 40.4\,\scriptsize{[40.3,40.7]} & 40.1\,\scriptsize{[39.3,40.8]} \\
  \multicolumn{2}{l}{IGNN} & 42.2\,\scriptsize{[40.9,42.8]} & 43.6\,\scriptsize{[43.0,44.6]} & 44.6\,\scriptsize{[43.9,45.0]} & 45.0\,\scriptsize{[44.8,45.4]} & 45.8\,\scriptsize{[45.5,46.7]} & 52.7\,\scriptsize{[51.9,54.2]} \\
  \multicolumn{2}{l}{Loop Transformer} & 40.4\,\scriptsize{[40.0,43.4]} & 41.0\,\scriptsize{[38.7,42.9]} & 41.8\,\scriptsize{[40.7,45.5]} & 98.8\,\scriptsize{[97.0,99.2]} & 90.5\,\scriptsize{[86.2,92.6]} & \rone{99.1\,\scriptsize{[98.8,99.1]}} \\
\midrule
  \multirow{4}{*}{\textsc{GraphAttnDEQ}} & $\beta_{\mathrm{shift}}=1.2$ & 42.4\,\scriptsize{[37.4,45.0]} & 46.8\,\scriptsize{[43.5,59.9]} & 44.4\,\scriptsize{[41.5,54.3]} & 41.3\,\scriptsize{[39.0,59.3]} & 72.3\,\scriptsize{[54.3,81.9]} & 68.7\,\scriptsize{[64.1,97.1]} \\
   & $\beta_{\mathrm{shift}}=1.3$ & 43.1\,\scriptsize{[42.1,45.7]} & 43.9\,\scriptsize{[42.2,59.8]} & 42.6\,\scriptsize{[40.9,45.2]} & 43.8\,\scriptsize{[42.1,44.5]} & 68.6\,\scriptsize{[53.3,94.1]} & 72.0\,\scriptsize{[68.9,98.3]} \\
   & $\beta_{\mathrm{shift}}=1.4$ & 42.6\,\scriptsize{[42.0,44.2]} & 44.2\,\scriptsize{[42.8,47.7]} & 43.2\,\scriptsize{[42.4,55.1]} & 43.9\,\scriptsize{[43.0,45.3]} & 64.7\,\scriptsize{[61.6,70.6]} & 64.5\,\scriptsize{[50.6,67.5]} \\
   & $\beta_{\mathrm{shift}}=1.6$ & 41.8\,\scriptsize{[40.9,43.0]} & 43.2\,\scriptsize{[41.7,51.1]} & 42.7\,\scriptsize{[42.1,44.2]} & 45.5\,\scriptsize{[41.8,46.1]} & 63.9\,\scriptsize{[61.3,70.0]} & 68.6\,\scriptsize{[55.9,96.8]} \\
\midrule
  \multirow{4}{*}{\textsc{SheafDEQ}} & pre-norm=\!\checkmark, $\beta_{\mathrm{shift}}=1.2$ & 42.1\,\scriptsize{[41.6,44.4]} & 59.8\,\scriptsize{[45.7,96.5]} & 70.6\,\scriptsize{[56.4,93.8]} & 98.9\,\scriptsize{[98.6,99.2]} & \rthree{97.7\,\scriptsize{[96.2,98.4]}} & 98.6\,\scriptsize{[96.9,99.1]} \\
   & pre-norm=\!\checkmark, $\beta_{\mathrm{shift}}=1.3$ & 42.9\,\scriptsize{[41.5,55.6]} & 69.5\,\scriptsize{[61.6,99.3]} & 91.6\,\scriptsize{[51.3,99.4]} & \rone{99.0\,\scriptsize{[94.8,99.1]}} & \rtwo{98.0\,\scriptsize{[95.6,98.3]}} & 98.7\,\scriptsize{[97.9,98.9]} \\
   & pre-norm=\!\checkmark, $\beta_{\mathrm{shift}}=1.4$ & 42.0\,\scriptsize{[40.8,44.3]} & 56.1\,\scriptsize{[50.8,99.4]} & \rone{98.9\,\scriptsize{[56.2,99.6]}} & \rone{99.0\,\scriptsize{[98.5,99.2]}} & \rone{98.6\,\scriptsize{[94.1,98.6]}} & \rthree{98.8\,\scriptsize{[98.4,98.9]}} \\
   & pre-norm=\!\checkmark, $\beta_{\mathrm{shift}}=1.6$ & 41.4\,\scriptsize{[41.0,47.3]} & 53.1\,\scriptsize{[47.5,72.1]} & \rthree{97.7\,\scriptsize{[63.0,99.9]}} & 90.9\,\scriptsize{[49.2,99.2]} & 97.6\,\scriptsize{[95.4,98.4]} & 98.2\,\scriptsize{[97.7,98.8]} \\
\cmidrule{2-8}
  \multirow{4}{*}{\shortstack{\textsc{SheafDEQ}\\\rev{(ablation)}}} & pre-norm=\!$\times$, $\beta_{\mathrm{shift}}=1.2$ & 41.1\,\scriptsize{[40.5,63.4]} & 68.0\,\scriptsize{[46.6,98.8]} & \rtwo{98.3\,\scriptsize{[77.9,99.7]}} & 98.5\,\scriptsize{[71.3,99.0]} & 97.6\,\scriptsize{[95.1,98.3]} & 98.5\,\scriptsize{[97.8,99.2]} \\
   & pre-norm=\!$\times$, $\beta_{\mathrm{shift}}=1.3$ & 44.1\,\scriptsize{[40.8,47.3]} & 61.7\,\scriptsize{[44.1,98.0]} & 66.3\,\scriptsize{[54.4,99.6]} & \rone{99.0\,\scriptsize{[73.1,99.4]}} & \rtwo{98.0\,\scriptsize{[97.5,99.0]}} & \rtwo{98.8\,\scriptsize{[98.6,99.1]}} \\
   & pre-norm=\!$\times$, $\beta_{\mathrm{shift}}=1.4$ & 41.6\,\scriptsize{[41.1,57.0]} & 51.0\,\scriptsize{[47.7,64.0]} & 56.3\,\scriptsize{[46.9,99.6]} & 98.9\,\scriptsize{[61.7,99.5]} & 97.6\,\scriptsize{[97.0,98.4]} & 98.4\,\scriptsize{[97.8,99.0]} \\
   & pre-norm=\!$\times$, $\beta_{\mathrm{shift}}=1.6$ & 42.2\,\scriptsize{[40.3,57.0]} & 61.0\,\scriptsize{[55.2,93.0]} & 70.6\,\scriptsize{[47.4,99.8]} & 98.9\,\scriptsize{[71.7,99.3]} & 97.5\,\scriptsize{[94.5,99.2]} & \rtwo{98.8\,\scriptsize{[96.8,99.1]}} \\
\midrule
  \multicolumn{2}{l}{DNSD-diag (tanh)} & \rone{71.4\,\scriptsize{[42.1,80.3]}} & \rtwo{79.3\,\scriptsize{[74.4,90.0]}} & 86.3\,\scriptsize{[80.9,90.1]} & 86.9\,\scriptsize{[80.8,88.4]} & 61.2\,\scriptsize{[55.6,70.2]} & 96.7\,\scriptsize{[95.2,97.5]} \\
  \multicolumn{2}{l}{DNSD-diag (relu)} & 49.5\,\scriptsize{[42.4,71.0]} & \rone{87.5\,\scriptsize{[57.1,93.9]}} & 78.8\,\scriptsize{[70.2,85.6]} & 86.9\,\scriptsize{[64.7,92.7]} & 63.5\,\scriptsize{[50.9,72.7]} & \rthree{97.4\,\scriptsize{[94.0,98.0]}} \\
  \multicolumn{2}{l}{DNSD-full (tanh)} & \rthree{54.4\,\scriptsize{[42.3,70.3]}} & 70.4\,\scriptsize{[65.4,86.7]} & 86.3\,\scriptsize{[82.1,89.1]} & 84.8\,\scriptsize{[83.5,90.4]} & 57.8\,\scriptsize{[55.4,59.6]} & 95.6\,\scriptsize{[93.9,97.9]} \\
  \multicolumn{2}{l}{DNSD-full (relu)} & \rtwo{56.6\,\scriptsize{[45.7,65.4]}} & \rthree{75.5\,\scriptsize{[70.1,91.3]}} & 90.4\,\scriptsize{[84.0,93.2]} & 80.6\,\scriptsize{[75.1,93.2]} & 59.6\,\scriptsize{[54.8,67.9]} & 95.9\,\scriptsize{[95.1,97.5]} \\
\bottomrule
\end{tabular}%
}
\end{table*}

\begin{figure*}[t]
\centering
\includegraphics[width=\linewidth]{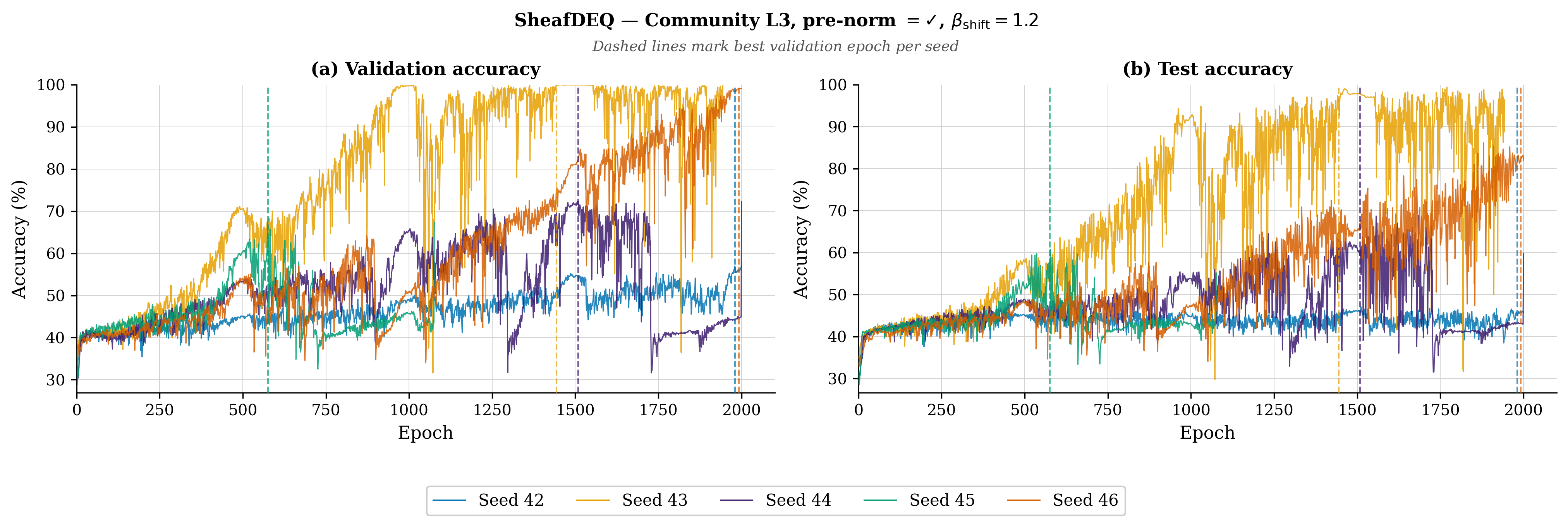}
\caption{\rev{\textbf{Seed-dependent training trajectories.} Validation and
test accuracy for pre-restriction-normalized SheafDEQ at
$L_{\mathrm{hetero}}=3$ with
$\beta_{\mathrm{shift}}=1.2$. Colours denote parameter-initialization seeds;
dashed lines mark the best validation epoch for each seed.}}
\label{fig:community-training-variability}
\end{figure*}

\paragraph{Training variability.}
\rev{Figure~\ref{fig:community-training-variability} shows that, already at the
relatively low heterophily level $L_{\mathrm{hetero}}=3$, unrolled training
follows substantially different trajectories across parameter
initializations. This is consistent with the documented sensitivity of DEQ
training to initialization and fixed-point stability
\citep{agarwala2022deep,bai2021stabilizing}. Proposed remedies include
structured orthogonal or symmetric initialization
\citep{agarwala2022deep} and Jacobian regularization
\citep{bai2021stabilizing}; we leave their evaluation for SheafDEQ to future
work. One possible caveat in our setup
is the fixed learning rate $10^{-2}$. We did not evaluate smaller learning
rates or gradual decay schedules because of the additional training cost; for
comparison, \citet{solodova2025graph} use a learning rate of $2\times10^{-3}$,
exponential decay every 200 epochs, and train for up to 5000 epochs.}

\paragraph{Excluded baselines.}
EIGNN and EnergyGNN are excluded from all community detection levels. EIGNN
would decompose the complete 9{,}000-node disjoint-union training graph
(six graphs of 1{,}500 nodes). Although this is below the configured
\texttt{max\_eig\_nodes=10000} safety threshold, the dense $O(n^2)$ spectral
storage and transformations were considered impractical for these
experiments. EnergyGNN's L-BFGS solver did not converge on these graphs within
the \rev{solver} budget.

\paragraph{Results interpretation.}
For each model family with multiple configurations in compact
Table~\ref{tab:community}, each column reports the configuration with the
highest mean validation accuracy across the five training seeds and
\rev{reports that configuration's test median and minimum--maximum range}. Different columns
may therefore select different configurations. Table~\ref{tab:community-full}
reports the fixed-configuration results needed to compare configurations
directly.
Neither predictive table measures the corresponding contraction rate or
convergence speed.

\subsection{\rev{Distributed-task Results: Additional Notes}}
\label{app:benchmark-notes}

\begin{table*}[t]
\centering
\small
\caption{\textbf{Complete distributed benchmark results.} Each entry reports
\rev{the median $[\min,\max]$ over five independent parameter-initialization seeds.}
MNIST reports accuracy (\%); Counting and Sums report $-\mathrm{MSE}$;
Coordinates reports $-\mathcal{L}_{\mathrm{dist}}$ (all higher is better).
GraphAttnDEQ rows show all
$\beta_{\mathrm{shift}}\in\{1.2,1.3,1.4,1.6\}$ configurations. \textsc{SheafDEQ}
rows show the same configurations for both \rev{normalization} variants.
\rev{The \(\dagger\) denotes that the EnergyGNN optimization solver did not
converge for any seed; no predictive score is reported. Other ``---'' entries
denote unavailable evaluations.} \rone{1st}, \rtwo{2nd}, \rthree{3rd} best
per column; \rev{ties at the displayed precision share a rank}.}
\label{tab:solodova-full}
\resizebox{\textwidth}{!}{%
\begin{tabular}{@{}l l c c c c@{}}
\toprule
\multicolumn{2}{l}{\textbf{Model}} & \textbf{Counting} & \textbf{Sums} & \textbf{MNIST} & \textbf{Coordinates} \\
\multicolumn{2}{l}{} & \scriptsize{$-\mathrm{MSE}\ \uparrow$} & \scriptsize{$-\mathrm{MSE}\ \uparrow$} & \scriptsize{\textit{Acc.\,$\uparrow$}} & \scriptsize{$-\mathcal{L}_{\mathrm{dist}}\ \uparrow$} \\
\midrule
  \multicolumn{2}{l}{MLP} & -0.367\,\scriptsize{[-0.574,-0.347]} & -0.154\,\scriptsize{[-0.999,-0.145]} & 53.3\,\scriptsize{[53.3,53.3]} & -0.105\,\scriptsize{[-0.106,-0.100]} \\
  \multicolumn{2}{l}{APPNP} & -0.335\,\scriptsize{[-0.348,-0.309]} & -0.136\,\scriptsize{[-0.140,-0.134]} & 74.8\,\scriptsize{[74.3,75.3]} & -0.030\,\scriptsize{[-0.032,-0.029]} \\
  \multicolumn{2}{l}{APPNP-$\beta_{\mathrm{shift}}{=}1.25$} & -0.342\,\scriptsize{[-0.360,-0.318]} & -0.135\,\scriptsize{[-0.142,-0.134]} & 53.3\,\scriptsize{[53.3,53.3]} & -0.044\,\scriptsize{[-0.044,-0.043]} \\
  \multicolumn{2}{l}{APPNP-$\beta_{\mathrm{shift}}{=}1.65$} & -0.337\,\scriptsize{[-0.358,-0.320]} & -0.139\,\scriptsize{[-0.142,-0.138]} & 61.3\,\scriptsize{[60.8,62.2]} & -0.051\,\scriptsize{[-0.064,-0.041]} \\
  \multicolumn{2}{l}{IGNN} & -0.336\,\scriptsize{[-0.681,-0.307]} & -0.130\,\scriptsize{[-0.998,-0.128]} & 76.1\,\scriptsize{[75.6,79.9]} & -0.028\,\scriptsize{[-0.028,-0.027]} \\
  \multicolumn{2}{l}{EIGNN} & \rev{\rone{-0.309\,\scriptsize{[-0.329,-0.280]}}} & --- & --- & --- \\
  \multicolumn{2}{l}{EnergyGNN} & -0.705\,\scriptsize{[-0.895,-0.305]} & -0.184\,\scriptsize{[-0.589,-0.140]} & 60.4\,\scriptsize{[53.3,68.2]} & \rev{---\({}^{\dagger}\)} \\
  \multicolumn{2}{l}{Loop Transformer} & -0.335\,\scriptsize{[-0.361,-0.280]} & -0.140\,\scriptsize{[-0.143,-0.139]} & \rev{\rone{95.7\,\scriptsize{[95.3,95.9]}}} & \rone{-0.006\,\scriptsize{[-0.009,-0.006]}} \\
\midrule
  \multirow{4}{*}{\textsc{GraphAttnDEQ}} & $\beta_{\mathrm{shift}}=1.2$ & -0.323\,\scriptsize{[-0.331,-0.263]} & -0.136\,\scriptsize{[-0.138,-0.129]} & \rev{\rtwo{91.8\,\scriptsize{[90.2,91.8]}}} & -0.029\,\scriptsize{[-0.029,-0.028]} \\
   & $\beta_{\mathrm{shift}}=1.3$ & -0.323\,\scriptsize{[-0.340,-0.293]} & \rev{\rthree{-0.129\,\scriptsize{[-0.138,-0.127]}}} & 91.2\,\scriptsize{[90.2,92.1]} & -0.029\,\scriptsize{[-0.029,-0.028]} \\
   & $\beta_{\mathrm{shift}}=1.4$ & \rev{\rtwo{-0.316\,\scriptsize{[-0.341,-0.283]}}} & -0.131\,\scriptsize{[-0.138,-0.127]} & \rev{\rthree{91.4\,\scriptsize{[90.9,92.3]}}} & -0.028\,\scriptsize{[-0.029,-0.012]} \\
   & $\beta_{\mathrm{shift}}=1.6$ & -0.322\,\scriptsize{[-0.341,-0.297]} & \rev{\rtwo{-0.128\,\scriptsize{[-0.138,-0.120]}}} & \rev{\rthree{91.4\,\scriptsize{[90.0,91.5]}}} & -0.028\,\scriptsize{[-0.029,-0.013]} \\
\midrule
  \multirow{4}{*}{\textsc{SheafDEQ}} & pre-norm=\!\checkmark, $\beta_{\mathrm{shift}}=1.2$ & -0.328\,\scriptsize{[-0.353,-0.314]} & \rev{\rtwo{-0.128\,\scriptsize{[-0.142,-0.125]}}} & 89.9\,\scriptsize{[87.8,90.6]} & \rtwo{-0.009\,\scriptsize{[-0.014,-0.007]}} \\
   & pre-norm=\!\checkmark, $\beta_{\mathrm{shift}}=1.3$ & -0.325\,\scriptsize{[-0.359,-0.319]} & \rev{\rtwo{-0.128\,\scriptsize{[-0.142,-0.121]}}} & 89.0\,\scriptsize{[85.4,92.1]} & -0.012\,\scriptsize{[-0.018,-0.008]} \\
   & pre-norm=\!\checkmark, $\beta_{\mathrm{shift}}=1.4$ & \rev{\rthree{-0.317\,\scriptsize{[-0.359,-0.304]}}} & \rev{\rtwo{-0.128\,\scriptsize{[-0.138,-0.125]}}} & 88.7\,\scriptsize{[84.5,90.6]} & -0.012\,\scriptsize{[-0.018,-0.007]} \\
   & pre-norm=\!\checkmark, $\beta_{\mathrm{shift}}=1.6$ & -0.337\,\scriptsize{[-0.355,-0.314]} & \rev{\rone{-0.127\,\scriptsize{[-0.139,-0.124]}}} & 89.0\,\scriptsize{[88.2,90.3]} & \rthree{-0.011\,\scriptsize{[-0.013,-0.007]}} \\
  \cmidrule{2-6}
  \multirow{4}{*}{\shortstack{\textsc{SheafDEQ}\\\rev{(ablation)}}} & pre-norm=\!$\times$, $\beta_{\mathrm{shift}}=1.2$ & -0.319\,\scriptsize{[-0.355,-0.314]} & \rev{\rtwo{-0.128\,\scriptsize{[-0.133,-0.122]}}} & 91.1\,\scriptsize{[87.4,93.4]} & -0.020\,\scriptsize{[-0.021,-0.007]} \\
   & pre-norm=\!$\times$, $\beta_{\mathrm{shift}}=1.3$ & -0.327\,\scriptsize{[-0.355,-0.314]} & -0.130\,\scriptsize{[-0.135,-0.123]} & 86.8\,\scriptsize{[83.9,89.9]} & -0.014\,\scriptsize{[-0.019,-0.011]} \\
   & pre-norm=\!$\times$, $\beta_{\mathrm{shift}}=1.4$ & -0.327\,\scriptsize{[-0.356,-0.314]} & \rev{\rthree{-0.129\,\scriptsize{[-0.134,-0.110]}}} & 89.9\,\scriptsize{[89.1,93.7]} & -0.020\,\scriptsize{[-0.020,-0.013]} \\
   & pre-norm=\!$\times$, $\beta_{\mathrm{shift}}=1.6$ & -0.326\,\scriptsize{[-0.351,-0.312]} & \rev{\rtwo{-0.128\,\scriptsize{[-0.134,-0.125]}}} & 88.4\,\scriptsize{[86.2,90.8]} & -0.017\,\scriptsize{[-0.029,-0.010]} \\
\bottomrule
\end{tabular}%
}
\end{table*}

\paragraph{\rev{Counting.}}
\rev{The similar results for SheafDEQ and the attention ablation suggest that
adaptive edge transformations do not resolve the main difficulty: one
recurrent update must preserve and transmit a global count through many
applications.}

\paragraph{\rev{Sums.}}
\rev{Every edge supports the same operation: accumulating binary values along
a chain. Scalar attention and matrix-valued sheaf transport therefore offer
similar useful computations. Their close results suggest little need for
different transformations on different edges.}

\paragraph{\rev{MNIST Terrain.}}
\rev{The gains over fixed-propagation baselines support adapting how pixel
observations are combined. The attention and sheaf models remain close,
while the finite-horizon control performs better. This task therefore
supports adaptive recurrent computation but does not show a specific
advantage from matrix-valued transport.}

\paragraph{\rev{Coordinates.}}
\rev{Messages arriving through different edges encode different geometric
constraints. Matrix-valued restriction maps can transform these messages
into edge-dependent feature coordinates before aggregation. This matches
the clear separation between SheafDEQ and its attention ablation, making
Coordinates the strongest distributed evidence for the sheaf component.}

\paragraph{Coordinates solver failure.}
\rev{EnergyGNN did not reach the solver tolerance on Coordinates for any of
the five parameter-initialization seeds. We therefore report no predictive
score and mark the entry with $\dagger$. The available non-converged iterates
performed substantially worse than the other models, but they are not treated
as valid equilibrium evaluations.}

\paragraph{EIGNN coverage.}
EIGNN is reported only on Counting. Its implementation performs a
dense eigendecomposition of the complete graph object supplied to the forward
pass, with $O(n^3)$ time and $O(n^2)$ memory. Sums (100{,}000 nodes) and MNIST
Terrain (approximately 148{,}000 nodes) exceed the configured node-count
safety guard. EIGNN runs were also attempted on Coordinates, but none produced
a valid completed result; the available logs do not establish the cause. The
``---'' entries therefore denote configurations without a valid evaluation
result, rather than reported predictive or convergence measurements.

\subsection{Protocol for Figure~\ref{fig:equilibrium-robustness}}
\label{app:equilibrium-protocol}

We evaluate six \rev{initial-state conditions} that vary in scale and direction.
The default condition is the row-wise $\ell_2$-normalized all-ones state used
during training. The other five are positive random states
\[
H^{(0)}=s\,\operatorname{norm}_p(U),
\qquad
U_{ij}\sim\operatorname{LogNormal}(0,1),
\qquad
s\in\{10^{-3},10^{-1},1,10,10^3\},
\]
where $\operatorname{norm}_p$ acts row-wise. For each model, a trajectory from
the default \rev{initial state} is continued to \(K_{\mathrm{ref}}=100\), defining
the \rev{numerical reference state and prediction}
\[
H^{\mathrm{ref}}
=
H_{\mathrm{default}}^{(K_{\mathrm{ref}})},
\qquad
\widehat Y^{\mathrm{ref}}
=
\widehat Y_{\mathrm{default}}^{(K_{\mathrm{ref}})}.
\]

Let \(H_j^{(k)}\) denote the recurrent state at iteration \(k\) for
\rev{initial-state condition} \(j\). We record
\[
\rho_{j,k}
=
\max_{v\in V}
\frac{\|\bigl(H_j^{(k+1)}\bigr)_v-\bigl(H_j^{(k)}\bigr)_v\|_\infty}
     {\|\bigl(H_j^{(k)}\bigr)_v\|_\infty+\varepsilon},
\]
together with the state and prediction discrepancies
\[
d_{j,k}^{\mathrm{state}}
=
\max_{v\in V}
\frac{\|\bigl(H_j^{(k)}\bigr)_v-\bigl(H^{\mathrm{ref}}\bigr)_v\|_\infty}
     {\|\bigl(H^{\mathrm{ref}}\bigr)_v\|_\infty+\varepsilon},
\qquad
d_{j,k}^{\mathrm{pred}}
=
\frac{\|\widehat Y_j^{(k)}-\widehat Y^{\mathrm{ref}}\|_\infty}
     {\|\widehat Y^{\mathrm{ref}}\|_\infty+\varepsilon}.
\]
The plotted curves are the medians across \rev{initial-state conditions},
\[
\rho_k=\operatorname*{median}_j\rho_{j,k},\qquad
d_k^{\mathrm{state}}
=\operatorname*{median}_j d_{j,k}^{\mathrm{state}},\qquad
d_k^{\mathrm{pred}}
=\operatorname*{median}_j d_{j,k}^{\mathrm{pred}},
\]
and the shaded regions show their interquartile ranges. For the default
trajectory,
\(d_{\mathrm{default},K_{\mathrm{ref}}}^{\mathrm{pred}}=0\) by construction;
the \rev{numerical reference state is not assumed to be an exact equilibrium}.

The Loop Transformer is evaluated by repeatedly applying its learned loop
operator beyond \(K_{\mathrm{train}}=20\). Its one-step logit update is used
in panel~(a); no state-discrepancy curve is reported because its recurrent
state is not exposed. All runs use inference mode with dropout disabled.

\begin{figure*}[t]
\centering
\includegraphics[width=\linewidth]{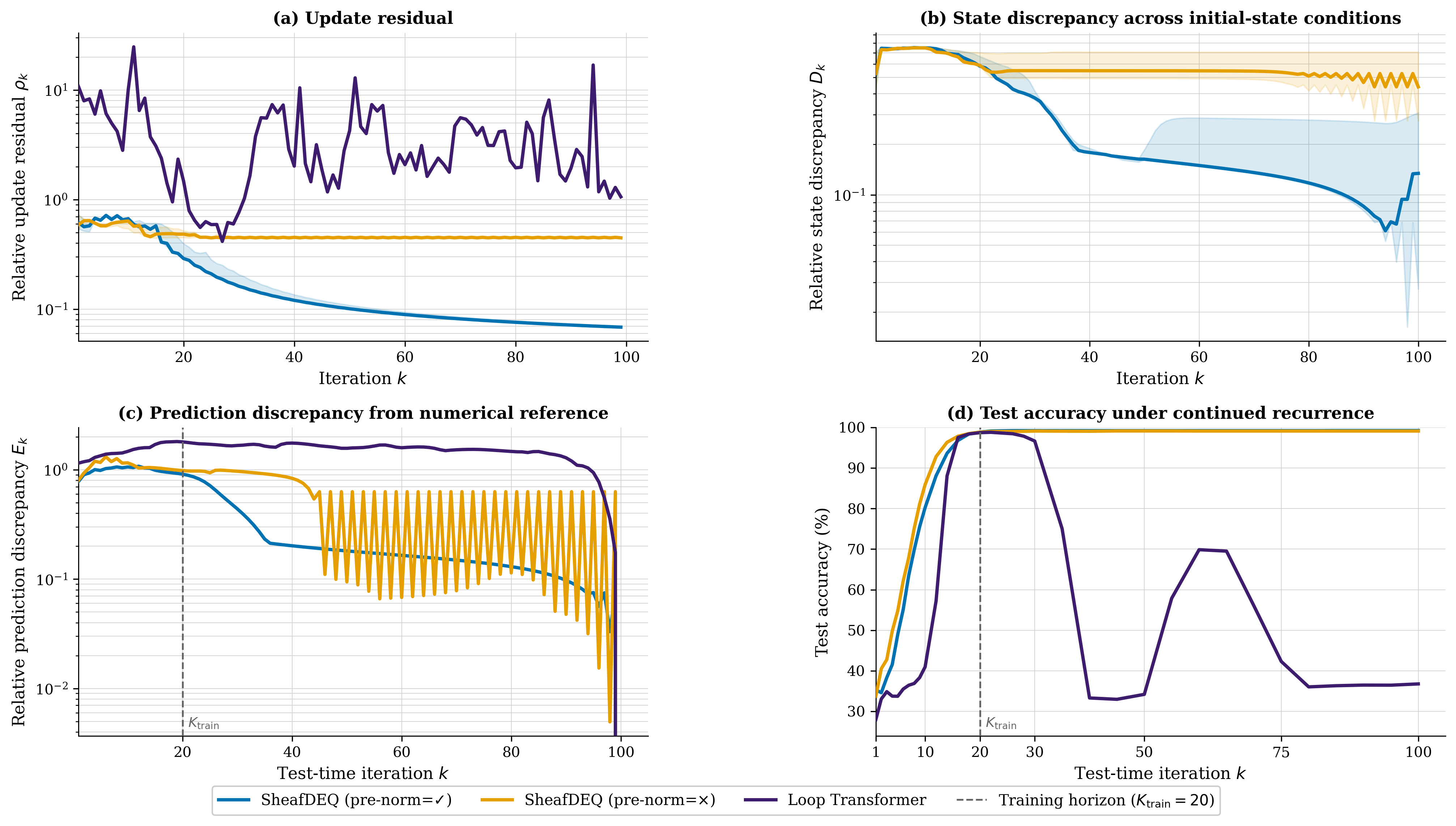}
\caption{\rev{\textbf{Complete finite-iteration recurrence diagnostics.}
Panels show (a) the median relative update residual, (b) the median state
discrepancy from the numerical reference state at \(K_{\mathrm{ref}}=100\), (c) the
median prediction discrepancy, and (d) test accuracy under continued
recurrence. For the Loop Transformer, panel~(a) uses one-step logit updates
and panel~(b) is unavailable. Solid curves and shading show medians and
interquartile ranges across six initial-state conditions; the dashed line
marks its training horizon \(K_{\mathrm{train}}=20\). The references are defined
at the finite iteration \(K_{\mathrm{ref}}\), and the default prediction discrepancy is zero at
\(K_{\mathrm{ref}}\) by construction.}}
\label{fig:equilibrium-robustness-full}
\end{figure*}

\subsection{Bounded-staleness evaluation details}
\label{app:staleness-experiment}

\paragraph{Protocol.}
We evaluate a single trained checkpoint for each \rev{evaluated model} on the fixed
$L_{\mathrm{hetero}}=7$ community-detection graph:
pre-restriction-normalized SheafDEQ and its
ablation without pre-restriction normalization use seed \(43\), while the Loop
Transformer control uses seed \(42\). These are the lowest-index checkpoints
achieving at least $80\%$ validation accuracy at the trained inference
horizon, under the common selection criterion stated in
Appendix~\ref{app:compute}. All schedules share the same graph, input,
parameters, and \rev{initial state}. At logical slot \(t\), every node is
activated at least once in each $s_{\mathrm{upd}}$-slot window and each
neighbour timestamp is sampled from
$\{t-\tau_{\max},\ldots,t\}$. We test
$(s_{\mathrm{upd}},\tau_{\max})\in\{(1,0),(2,1),(5,2),(8,4)\}$ and draw 20 independent schedule
realizations per condition. States are recorded after complete slot commits,
and work is reported as mean local updates per node. Under
$(s_{\mathrm{upd}},\tau_{\max})=(1,0)$,
all schedules coincide and reproduce the synchronous Picard control.

\paragraph{References and summaries.}
For pre-restriction-normalized SheafDEQ, the synchronous reference is the
\rev{Picard iterate after 420 iterations}, whose \rev{state} residual is
\(8.95\times10^{-7}\). The ablation without pre-restriction normalization \rev{does not settle within
500 iterations} (residual \(1.04\times10^{-2}\)); its \(K=500\) synchronous
iterate is therefore used only as a \rev{numerical reference}.  We report the median
and minimum--maximum range of the relative Frobenius discrepancy
\[
d_{\mathrm{out}}(u)
=
\frac{\left\|\widehat Y^{\mathrm{async}}(u)
-\widehat Y^{\mathrm{sync}}\right\|_{\mathrm F}}
     {\left\|\widehat Y^{\mathrm{sync}}\right\|_{\mathrm F}+\varepsilon}
\]
across schedules, rather than cross-seed statistics. We
omit the \((1,0)\) control from the \rev{summary} table because deviations beyond
\rev{420 iterations} reflect \rev{comparison with a finite numerical reference} rather than delayed
communication.

\begin{table*}[t]
\centering
\small
\caption{\textbf{Complete delayed-schedule \rev{results}.}
Each entry is the median \([\min,\max]\) relative Frobenius logit discrepancy
across 20 schedules. \rev{Both SheafDEQ models} use \(u=500\) mean local
updates per node. Loop Transformer is evaluated for its native 20 loops
and is reported only as a finite-horizon control.}
\label{tab:async-full}
\renewcommand{\arraystretch}{1.10}
\begin{tabular}{@{}llc@{}}
\toprule
Model & $(s_{\mathrm{upd}},\tau_{\max})$ & \(d_{\mathrm{out}}\) \\
\midrule
\multicolumn{3}{@{}l}{\textit{Pre-restriction-normalized SheafDEQ
\rev{420-iteration numerical reference}, residual \(8.95\times10^{-7}\)}} \\
\quad SheafDEQ
& \((2,1)\)
& \(7.08\times10^{-4};\ [7.08,7.08]\times10^{-4}\) \\
\quad SheafDEQ
& \((5,2)\)
& \(7.08\times10^{-4};\ [7.08,7.09]\times10^{-4}\) \\
\quad SheafDEQ
& \((8,4)\)
& \(7.08\times10^{-4};\ [7.08,7.09]\times10^{-4}\) \\
\addlinespace
\multicolumn{3}{@{}l}{\textit{Without pre-restriction normalization
\(K=500\) \rev{numerical reference}; residual \(1.04\times10^{-2}\)}} \\
\quad No normalization
& \((2,1)\)
& \(4.00\times10^{-2};\ [7.02\times10^{-3},4.76\times10^{-2}]\) \\
\quad No normalization
& \((5,2)\)
& \(4.12\times10^{-2};\ [1.57\times10^{-2},5.32\times10^{-2}]\) \\
\quad No normalization
& \((8,4)\)
& \(4.08\times10^{-2};\ [1.60\times10^{-2},5.44\times10^{-2}]\) \\
\addlinespace
\multicolumn{3}{@{}l}{\textit{Finite-horizon control}} \\
\quad Loop Transformer (20 loops)
& \((5,2)\)
& \(7.09\times10^{-2};\ [6.32\times10^{-2},7.93\times10^{-2}]\) \\
\bottomrule
\end{tabular}
\end{table*}

The \rev{reported discrepancies for pre-restriction-normalized SheafDEQ} under $(2,1)$, $(5,2)$, and $(8,4)$ differ by
only \(3\times10^{-7}\) across conditions and schedules. \rev{At the available
iteration budget and relative to the finite numerical reference}, this is consistent with a common
numerical-reference floor rather than a resolved staleness trend.
In contrast, the ablation without pre-restriction normalization has discrepancies near
\(4\times10^{-2}\) with materially wider schedule ranges. The Loop
Transformer result is descriptive only: it uses a fixed 20-loop inference
budget and its stale-attention update is an approximate simulation of delayed
neighbour information,
so it is not a matched equilibrium comparison.

\begin{figure*}[t]
\centering
\includegraphics[width=\linewidth]{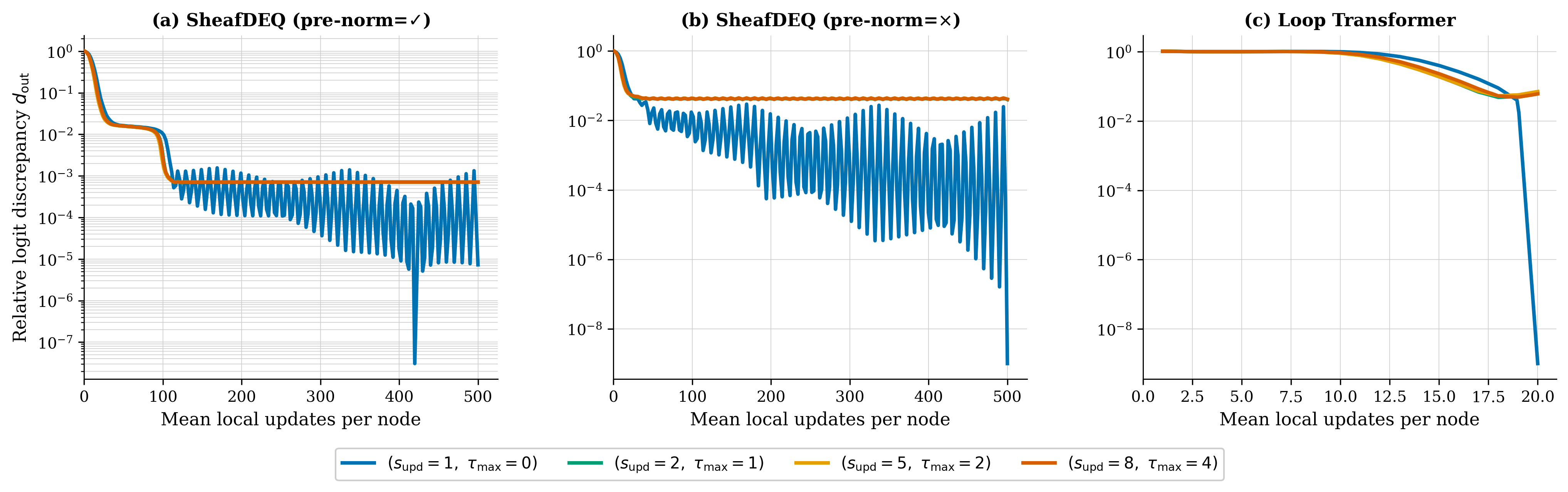}
\caption{\textbf{Post-slot asynchronous inference trajectories.}
Relative Frobenius logit discrepancy from the synchronous reference versus
mean local updates per node.  Thin curves denote the 20 sampled schedules;
thick curves show their median.  All points are recorded after complete
logical-slot commits, at their observed work coordinates, without
interpolation. For pre-restriction-normalized SheafDEQ, the delayed conditions
\((2,1)\), \((5,2)\), and \((8,4)\) reach nearly identical
\rev{discrepancies at the evaluation budget}.  The ablation without pre-restriction normalization displays substantially larger
discrepancies and schedule variation.  The \((1,0)\) synchronous control is
\rev{shown only through iteration 420}: later discrepancies reflect
comparison with a finite numerical reference, rather than delayed
communication.}
\label{fig:async-trajectories}
\end{figure*}

\section*{GenAI Usage Statement}
Generative AI tools were used only for language editing and basic coding assistance.

\end{document}